\documentclass[mnsc,nonblindrev]{informs3-meng}

\OneAndAHalfSpacedXI %

\usepackage{natbib}
 \bibpunct[, ]{(}{)}{,}{a}{}{,}%

\usepackage{amsmath} 
\usepackage{bbm}
\usepackage{amssymb}
\usepackage{mathtools}
\usepackage{soul}
\usepackage{dsfont}
\usepackage{color}
\usepackage{bm}
\usepackage{longtable}
\usepackage{threeparttable}
\usepackage{enumitem}
\usepackage[skip=0pt,font=scriptsize]{subcaption}
\usepackage{mathtools, nccmath}
\usepackage[skip=2pt,font=scriptsize]{caption}
\captionsetup[figure]{font=scriptsize}

\newcommand{\bbR}{\mathbb{R}}
\newcommand{\calX}{\mathcal{X}}
\newcommand{\calH}{\mathcal{H}}
\newcommand{\calB}{\mathcal{B}}

\newcommand{\calE}{\mathcal{E}}

\newcommand{\calD}{\mathcal{D}}

\newcommand{\calP}{\mathcal{P}}
\newcommand{\calQ}{\mathcal{Q}}

\newcommand{\bbE}{\mathbb{E}}
\newcommand{\bbP}{\mathbb{P}}

\newcommand{\p}{p}

\TheoremsNumberedThrough     %
\ECRepeatTheorems

\EquationsNumberedThrough    %

\MANUSCRIPTNO{}

\begin{document}

\RUNAUTHOR{Qi, Grigas, Shen}

\RUNTITLE{Integrated Conditional Estimation-Optimization}

\TITLE{Integrated Conditional Estimation-Optimization}

\ARTICLEAUTHORS{%
\AUTHOR{Meng Qi}
\AFF{SC Johnson College of Business, Cornell University, Ithaca, NY, 14850, \\ \EMAIL{mq56@cornell.edu}}
\AUTHOR{Paul Grigas}
\AFF{Department of Industrial Engineering and Operations Research, UC Berkeley, Berkeley, CA, 94720, \\ \EMAIL{pgrigas@berkeley.edu}}
\AUTHOR{Max Shen}
\AFF{%
Faculty of Engineering \& Faculty of Business and Economics, University of Hong Kong, China, \\ \EMAIL{maxshen@hku.hk}}
} 

\ABSTRACT{%
Many real-world optimization problems involve uncertain parameters with probability distributions that can be estimated using contextual feature information. 
 In contrast to the standard approach of first estimating the distribution of uncertain parameters and then optimizing the objective based on the estimation, we propose an \textit{integrated conditional estimation-optimization} (ICEO) framework that estimates the underlying conditional distribution of the random parameter while considering the structure of the optimization problem. We directly model the relationship between the conditional distribution of the random parameter and the contextual features, and then estimate the probabilistic model with an objective that aligns with the downstream optimization problem. %
We show that our ICEO approach is asymptotically consistent under moderate regularity conditions and further provide finite performance guarantees in the form of generalization bounds.
Computationally, performing estimation with the ICEO approach is a non-convex and often non-differentiable optimization problem. We propose a general methodology for approximating the potentially non-differentiable mapping from estimated conditional distribution to optimal decision by a differentiable function, which greatly improves the performance of gradient-based algorithms applied to the non-convex problem.
We also provide a polynomial optimization solution approach in the semi-algebraic case. Numerical experiments are also conducted to show the empirical success of our approach in different situations including with limited data samples and model mismatches.
}%

\KEYWORDS{contextual stochastic optimization; prescriptive analytics; statistical learning theory}

\maketitle

\section{Introduction}
\label{sec: intro}
Two fundamental aspects of decision-making under uncertainty are estimation and optimization. Classically these two aspects are treated separately, with statistical and/or machine learning methodologies used to estimate the distributions of uncertain parameters based on data, resulting in a stochastic optimization problem to be solved for making a decision. In recent years, researchers and practitioners have increasingly recognized the significance of considering estimation and optimization in tandem \citep{bertsimas2020predictive, kao2009directed, donti2017task, elmachtoub2022smart}. Another salient feature of modern decision-making under uncertainty is the presence of \emph{contextual} information, usually in the form of features/covariates, that can be leveraged to improve the estimation of the uncertain parameters. For example, contextual information such as temporal information, the presence of promotions, and economic indicators can be leveraged to refine the estimation of uncertain demand for products. The refined demand distribution estimates would then be used for making inventory and supply chain decisions through optimization models. \emph{Contextual stochastic optimization (CSO)} has recently emerged as a general paradigm describing this situation, with applications in supply chain management, finance, transportation, energy systems, and many other areas. 

In this work, we consider the CSO problem in a data-driven setting where one has available historical data consisting of realizations of the uncertain parameters paired with contextual feature information. 
As mentioned, the classical method of solving CSO given data is a two-step procedure, where in the first step either a point prediction of the parameter or an estimation of its distribution is built based on data. (Although the phrases ``prediction" and ``estimation" are often synonymous or not clearly distinguished in the literature, herein we specifically let ``prediction" denote point predictions of the random parameter and let ``estimation" refer to any methodology, either parametric or non-parametric, for estimating the conditional distribution of the random parameter given the context.) 
Modern machine learning techniques are often utilized in the first step to provide more granular results, and these models are usually fit based on statistical objectives such as measures of prediction error or likelihood. 
Then in the second step, given the prediction or estimation, an optimization problem is solved.
A major drawback of these standard predict-then-optimize (PTO) and estimate-then-optimize (ETO) approaches is that they do not consider the decision error -- the cost with respect to the downstream optimization problem due to an imperfect prediction -- when fitting a statistical model. 

We propose an integrated conditional estimation-optimization (ICEO) approach that estimates the underlying conditional distribution of the random parameter based on minimizing the ultimate decision error. 
We propose a highly flexible framework that models the conditional distribution using a hypothesis class and applies ideas from statistical learning to do estimation. As compared to existing approaches, our approach uses a generic learning framework based on specifying a hypothesis class and applies to a broad class of convex contextual stochastic optimization problems with uncertainty in the objective. We study the statistical and computational properties of our approach. In particular, we prove asymptotic consistency in terms of risks, decisions and hypotheses. Asymptotic consistency is highly desired for data-driven methods because it guarantees that, as the amount of data increases, our solutions and estimated models converge to their optimums given full information of the true distribution of contextual features and uncertain parameters. We prove asymptotic consistency in terms of the ICEO risk, induced decisions, and the learned hypothesis. We also provide generalization bounds to quantify the out-of-sample performance when data is limited to a finite sample.

In general, there are fundamental differences between the cases of linear and nonlinear objective functions in CSO problems and the two problem classes require completely different solution methods. As pointed out in the prior literature (for example, \cite{bertsimas2020predictive}, \cite{elmachtoub2022smart}, \cite{sadana2023survey} and \cite{qi2022integrating}), the linear objective assumption implies that point predictions are sufficient to address the CSO problem. %
However, in the general nonlinear case, one needs to utilize a distributional estimation of the conditional distribution. As such, ICEO seeks a distributional estimation that directly models the relationship between the conditional distribution of the random parameter and the contextual features and then estimates the probabilistic model with a training objective that aligns with the downstream optimization problem.
Our methodological and corresponding theoretical contributions are entirely novel and significant.

It is worthwhile to point out that, in the ICEO method, we adopt a strongly convex decision regularization function inside the optimization oracle in order to stabilize the decision and induce uniqueness. Due to the interplay between the downstream optimization oracle, decision regularization, and the model, the overall training objective is more complicated than typical prediction error losses and as such standard uniform convergence and consistency do not directly apply even with i.i.d. samples.
Instead, such results can only be obtained after establishing the convergence and Lipschitzness of the regularized oracle. Similarly, finite sample bounds for ICEO are generalization bounds based on multi-variate Rademacher complexity. However, as ICEO adopts a more complicated training objective that incorporates the regularized oracle to take the downstream problem into account, ICEO deviates from standard learning procedures that use simple loss functions. Tackling this difficulty also relies on establishing a Lipschitz property of the regularized oracle, which further translates to the training objective.

In terms of computation, the core training problem of the ICEO framework is non-convex and even non-differentiable in many cases. In fact, due to the presence of constraints in the downstream problem, it is often the case that the optimal decision oracle has a piece-wise constant shape, which leads to poor local minima that are very hard to escape when applying gradient-based methods.
For these reasons, we propose two computational approaches:  {\em (i)} a highly practical approach that involves approximating the regularized optimal solution oracle with a smooth function and then applying gradient algorithms, and {\em (ii)} a polynomial optimization approach when the downstream problem has a semi-algebraic objective and we approximate the optimal solution oracle with a polynomial function.

Our key contributions are summarized as follows:
\begin{enumerate}
    \item We propose the ICEO framework, wherein we directly estimate the underlying conditional distribution of uncertain parameters given contextual information using a hypothesis class. In contrast to two-step ETO methods, we learn the conditional distribution in a way that integrates with the downstream optimization goal. ICEO offers more flexibility compared to most existing related approaches. To the best of our knowledge, among all integrated approaches for general classes of nonlinear CSO problems with convex objective functions, ICEO is the first that provides both asymptotic and finite sample performance guarantees.
    
    \item We prove the asymptotic consistency of the ICEO method when the model is specified correctly (Theorem \ref{thm: consistency}). More specifically, we show the consistency of ICEO risk, ICEO decisions, and ICEO hypothesis when the hypothesis class contains the correct conditional distribution function.
    To show asymptotic consistency, standard statistical learning approaches do not apply because of the presence of the regularized oracle.
    Tackling these difficulties requires establishing certain properties of the regularized oracle such as uniform convergence and uniform Lipschitzness. %
    \item To quantify the out-of-sample performance with finite samples, we provide generalization bounds for the ICEO method based on the multi-variate Rademacher complexity of the hypothesis class used to learn the conditional distribution (Theorem \ref{thm:finite-sample-bd}). Again, standard statistical learning results do not directly apply since taking the downstream optimization into account leads to a more complicated loss function than the typical (e.g., MSE) losses. In particular, our generalization bound requires first establishing the Lipschitz property of the regularized optimal decision oracle (Proposition \ref{prop:w-lip}).Furthermore, our generalization bound intuitively indicates that ICEO can be more flexible/expressive than policy optimization methods while still achieving a comparable sample (Rademacher) complexity.
    
    \item The ICEO training problem is non-convex and non-differentiable. Non-differentiability poses a serious concern when applying gradient-based algorithms, like (stochastic) gradient descent, to solve the ICEO training problem as the presence of constraints can lead to local minima that are hard to escape (visually illustrated in Figure \ref{fig:fig}). To address this issue, we approximate the oracle using differentiable function classes with a guaranteed approximation error (Proposition \ref{prop: bern-approx-error}, Proposition \ref{prop: krr-approx-error}). We then provide corresponding generalization bounds when training ICEO method using the approximated oracle (Theorem \ref{thm: gen-bd-approx}). In addition, for the case where the nominal optimization problem is semi-algebraic, we propose an exact solution algorithm (Proposition \ref{prop: simplex-constr}).
\end{enumerate}
The remainder of this paper is organized as follows. %
In Section \ref{sec: literature}, we review related methods in literature. The details of our proposed ICEO framework are introduced in Section \ref{sec: framework}. In Section \ref{sec: general-guarantee}, we provide performance guarantees in terms of asymptotic consistency and generalization bounds. In Section \ref{sec: computational}, we discuss the main difficulties in solving the ICEO formulation and provide solution methods. Empirical performances of the ICEO method is demonstrated in Section \ref{sec:experiments}. Moreover, Appendix \ref{sec: consistency-proof-appendix} provides supplementary materials to support the asymptotic consistency result in Section \ref{sec:consistency}. \ref{sec: proofs} presents supplementary lemmas and proofs for Sections \ref{sec:generalization} and \ref{sec: computational}. Appendix \ref{sec: appendix-jumping} presents conditions that ensure an ``automatic crossover'' behavior of the regularized oracle, which further supports the assumptions made in Section \ref{sec:consistency}. Appendix \ref{sec: approximation-polynomials} provides two detailed examples of using polynomial functions to approximate the optimal solution mapping. When both the oracle approximation and objective functions are polynomial functions, the ICEO problem can be reformulated as a semi-algebraic problem thus can be solved efficiently. Appendix \ref{appendix: exp} demonstrates supplementary materials for Section \ref{sec:experiments}. %
In Appendix \ref{sec: policy-opt}, we provide more intuition of the advantages of ICEO by comparing it to policy optimization approaches.

\subsection{Relevant Literature}
\label{sec: literature}

The fusion of prediction models based on data and the optimization problems has become more and more widespread in recent years. In the remainder of this section, we will discuss existing works related to this topic and contrast them with our proposed ICEO approach. 

The first stream of research focuses on providing a prescriptive solution by approximating the conditional distribution of the random parameter given a feature vector with the help of various machine learning tools. In one of the first works along these lines, \cite{hannah2010nonparametric} use nonparametric methods to estimate the density function conditioned on state variables to solve convex stochastic optimization problems. They consider weights on the empirical distribution based on kernels and the Dirichlet process. \cite{bertsimas2020predictive} also proposed prescriptive models that approximate the conditional distribution with a weighted empirical distribution of the uncertainty. The weights can be achieved based on multiple machine learning models, including k-nearest neighbors (KNN), kernel methods, tree-based methods, etc. \cite{kallus2020stochastic} consider using a (non-parametric) random forests estimator of the conditional distribution in a way that is trained with respect to the cost of the downstream optimization task, akin to the ICEO approach.  A later work \cite{bertsimas2019predictions} investigates these prescriptive methods in the multi-period problem setting. \cite{bertsimas2019optimal} follows the same idea and propose a tree-based algorithm that balances the optimality of the prescription and accuracy of the prediction. \cite{kallus2020stochastic} consider a random forest model for the prescriptive solution. In contrast to the standard way of splitting the feature space, the authors consider the down stream optimization quality while constructing the partitions. Different from \cite{kallus2020stochastic} the ICEO approach directly models the underlying conditional distribution using a hypothesis class $\calH$. Thus, while \cite{kallus2020stochastic} only provide asymptotic consistency results, we are able to prove both asymptotic consistency and generalization bounds for a wide variety of hypothesis classes.

\cite{ho2019data} considers the regularized Nadaraya-Watson approach and establishes performance guarantees using moderate deviations theory. We refer to \cite{qi2022integrating} for a tutorial on these methods. 

Another stream of related work investigates adjusting the loss function to meet the ultimate optimization goal while training the machine learning models to predict the random parameters. \cite{ban2019big} investigates the Newsvendor problem, which is inherently equivalent to a quantile prediction problem. The authors learn the feature-to-decision mapping from data by adopting a loss function that characterizes the newsvendor inventory cost and is equivalent to the quantile loss function. %
\cite{elmachtoub2022smart} consider the case when the downstream optimization problem has a linear objective. The authors propose a ``smart predict-then-optimize" (SPO) framework with a tractable convex surrogate loss function (SPO+) to integrate the ultimate optimization problem structure. They prove Fisher consistency of SPO+ and demonstrate its strong numerical performance on different problem classes. \cite{balghiti2019generalization} later provides a finite-sample performance guarantee of the SPO loss in the form of generalization bounds. Recently, \cite{liu2021risk} have strengthened the consistency of SPO+ by providing risk guarantees and a calibration analysis in the polyhedral and strongly convex cases. 
\cite{elmachtoub2020decision} propose a method to train decision trees using the SPO loss and demonstrate strong numerical performance and improved model complexity over standard decision tree methods (e.g., CART) that minimize prediction error.

Other existing studies aim to learn the task-based end-to-end learning models with differentiable optimization layers.  
\cite{donti2017task} consider a general setting where the optimization stage involves a convex optimization problem and
adopt the objective in the optimization stage as the loss function to achieve an end-to-end training for the machine learning models. The main issue in such end-to-end learning models is to address the non-differentiability of the optimal solution mapping (the mapping from a contextual feature vector to the optimal decision). \cite{amos2017optnet} introduce the differentiable optimization layers for the end-to-end training approaches and propose a method of approximating the gradient of the optimal solution mapping by the solution of a group of equations representing the KKT conditions. \cite{agrawal2019differentiable} further provide a method to convert convex programs to the canonical forms that can be implemented at the optimization layer and implemented their grammar in CVXPY for ease of use.
\cite{wilder2019melding} and \cite{wilder2019end} further consider more difficult combinatorial problems. They propose end-to-end models that map from the graph structure to a feasible solution and train them with the quality of the solution. \cite{wilder2019melding} consider continuous relaxations of the discrete problem to propagate gradients through the optimization procedure. \cite{mandi2020interior} consider mixed integer linear programs and consider a homogeneous self-dual formulation of the LP and show that the gradients are related to an interior point step. \cite{berthet2020learning} instead consider stochastically perturbed optimizers to evaluate the gradients required for back-propagation. \cite{mandi2020smart, ferber2020mipaal,poganvcic2019differentiation} also discuss how to approximate the gradients when training end-to-end models for combinatorial problems. As our work focuses on convex optimization problems, we skip the details and refer to \cite{kotary2021end} for a detailed survey. Although demonstrated to be competitive in numerical experiments, these end-to-end learning models based on optimization layers and their extensions to combinatorial cases lack strong performance guarantees in theory. Moreover, learning feature-to-decision mapping lacks flexibility in the way that it handles constraints. Indeed, constraints restrict the hypothesis class that can be used to learn the data-to-decision mapping. In contrast, our ICEO framework learns the conditional distribution and uses the optimal solution mapping to obtain the decision, which is more flexible in handling constraints.

Other related works include \cite{kao2009directed} which investigates the case of model mis-specification when features are not perfect. This work proposes a method of directed regression which combines the merits of regression and empirical optimization. Later \cite{kao2012directed} extended this setting to a directed time series regression.
\cite{horisk}, which investigates the relationship between the prediction part to the performance of the optimization part, mainly in the case of the least squares loss function. A recent work \cite{poursoltani2023robust} investigates the coefficient of prescriptiveness which can be used to quantify the performance of different CSO approaches and how to directly optimize it. \cite{bennouna2021learning} took a different perspective on data-driven solutions and investigated the optimal formulation that guarantees a certain level of out-of-sample performance. 

There are also a number of decision-focused learning contributions that address practically impactful applications. \cite{chung2022decision} investigate the problem of allocating limited supply in health supply chains and proposed a decision-aware learning method that uses the decision cost to inform training. \cite{qi2020practical} focus on a multi-period inventory management problem with random demand and lead times, and provide a practical end-to-end learning framework empowered by deep learning models. The authors demonstrate the empirical success of this approach in practice by conducting a field experiment in industry. \cite{cristian2022endto} proposes a neural network architecture that approximately solves linear programs in an end-to-end way. The authors also analyze applications of the proposed approach to a multi-warehouse inventory management problem with cross-fulfillment. \cite{chehrazi2010monotone} consider the problem of optimal debt settlement and consider an approach that combines estimating the objective function and the optimization problem.

Another stream of work investigates the performance of the classic separated two-step PTO and ETO approach. \cite{hu2022fast} demonstrate that, when the optimization problem has a linear objective and linear constraints, the two-step predict-then-optimize approach leads to faster convergence in terms of expected risk compared to the integrated policy optimization approach. \cite{elmachtoub2023estimate} take a stochastic dominance perspective and demonstrate that when the model class is well-specified, the predict-then-optimize approach outperforms the integrated approach in a strong sense.
More broadly, there is a rapidly growing literature regarding data-driven methods for CSO problems. We have mainly included a discussion of work closely related to our paper and refer readers to \cite{sadana2023survey} for a comprehensive survey of various approaches to address CSO as well the connection among previously stated streams of literature.

There are also other studies that explore seemingly similar but different problem settings as CSO and ICEO. For example, the joint estimation-optimization (JEO) model (\cite{jiang2013solution, ahmadi2014data,jiang2016solution, ho2019exploiting})
 The major difference between CSO and JEO is that, in the JEO model, there is no contextual information considered predictors of uncertainty. Besides, several JEO models focus on solving an online convex optimization problem in the optimization stage, while we consider a stochastic optimization problem. 
We would also like to point out the differences between CSO and the operational statistics method, in which the downstream optimization goal is considered in finding the optimal operational statistic (\cite{liyanage2005practical, chu2008solving, ramamurthy2012inventory}). We include contextual information in our problem setting which is not considered in the classic operational statistics literature. Moreover, we aim to learn the underlying conditional distribution rather than finding the best statistic. We also consider constraints in the downstream optimization problem.

Lastly, there are other works that explore data-driven solutions for stochastic optimization. For example, data-driven methods for robust optimization (\cite{bertsimas2018data, wang2023learning, bertsimas2018robust, hong2021learning}) and distributionally robust optimization (DRO) (\cite{delage2010distributionally,  blanchet2019robust, gao2023distributionally, wiesemann2014distributionally, van2021data}). These lines of research emphasize the robustness of data-driven solutions. Among the data-driven DRO literature, the stream of residual-based DRO is most relevant to our work (\cite{kannan2020residuals, qi2022distributionally}). Residual-based DRO considers the conditional distribution based on residuals. A similar approach involving residual-SAA has been studied by \cite{deng2022predictive,liu2022coupled,kannan2022data}.
Besides methodologies that solves the standard CSO, there are other works considering other settings different from the classical CSO problem, for example, the small-data large-scale regime \cite{gupta2021small, gupta2022debiasing}.  

\section{Contextual Stochastic Optimization and the ICEO Approach}
\label{sec: framework}
In this section, we review the basic ingredients of contextual stochastic optimization problems, which is a fundamental model for applying machine learning in many operational contexts, and we formally describe our ICEO approach.
We consider a convex CSO, which models a downstream decision-making task. The feasible region for the decision variable $w \in \bbR^d$, denoted by $S \subset \mathbb{R}^d$, is assumed to be known with certainty. We additionally assume that $S$ is a convex and compact set. Although the feasible region of our optimization task is known with certainty, the objective function $c(\cdot, \xi) : S \to \bbR$ is stochastic and depends on a random parameter $\xi$. We assume that for all values of $\xi$, $c(\cdot, \xi)$ is a convex function of $w$. While the precise value of $\xi$ is not known at the time when a decision must be made, we assume that the decision-maker observes an associated contextual feature vector $x \in \calX \subseteq \bbR^p$ (sometimes the components of $x$ are referred to as covariates) that can be used to learn information about the objective function. Let $\mathcal{D}$ denote the joint distribution of $x$ and $\xi$. 
Then, given an observed $x \in \bbR^p$, the decision maker's goal is to solve the contextual stochastic optimization problem:
 \begin{equation}
 \label{eq: sto-opt}
        \min_{w\in S} \ \mathbb{E}_\xi[c(w,\xi)|x],
    \end{equation}
where the expectation above is with respect to the \emph{conditional distribution} of $\xi$ given $x$. 

It is important to emphasize that the distribution $\mathcal{D}$, and hence the conditional distribution of $\xi$ given any $x$, is typically unavailable in practice. Instead, a data-driven approach to solving \eqref{eq: sto-opt} is much more viable. Indeed, one often has a training dataset $\{(x_i, \xi_i)\}_{i=1}^n$ consisting of historically observed pairs of feature vectors $x_i \in \calX$ and associated parameter values $\xi_i$. 

In this work, in order to directly model the conditional distribution, we consider the case where the random parameter $\xi$ has finite discrete support, i.e., $\xi \in \Xi := \{\tilde{z}_1, \tilde{z}_2, \dots, \tilde{z}_K\}$. 
Then, for any $x \in \calX$, the conditional distribution of $\xi$ given $x$ is characterized by a probability vector $p^\ast(x) \in \Delta_K$, where $\Delta_K := \{p \in \bbR^K : \sum_{k = 1}^K p_k = 1, p \geq 0\}$ denotes the $(K-1)$-dimensional unit simplex. That is, $p^\ast_k(x)$, the $k$-th component of $p^\ast(x)$, is defined by $p^\ast_k(x) = \bbP_\xi(\xi = \tilde{z}_k | x)$, for all $k = 1, \dots, K$. Using this notation as well as the shorthand notation $c_k(\cdot) := c(\cdot, \tilde{z}_k)$ for all $k = 1, \ldots, K$, problem \eqref{eq: sto-opt} can be equivalently written as
\begin{equation}
 \label{eq: sto-opt-2}
        \min_{w\in S} \ \mathbb{E}_\xi[c(w,\xi)|x] ~=~ \min_{w\in S} \ \sum_{k=1}^K p^\ast_k(x)c_k(w).
\end{equation}
Note that there might be multiple optimal solutions and we use the notation $W(p)$ to refer to the set of such optimal solutions, i.e., $W(p) := \argmin_{w\in S} \sum_{k=1}^K p_kc_k(w)$.
\subsection{ICEO Approach}
\label{sec: ICEO-approach}
Let us now describe the major ingredients of our ICEO approach, as well as the formulation of our ICEO training problem.
\paragraph{Hypothesis Class of Conditional Probability Estimators}
It is evident from the right side of \eqref{eq: sto-opt-2} that learning the conditional distribution $p^\ast(x)$ is the most critical part of our contextual stochastic optimization setting. We adopt standard ideas from learning theory to learn $p^\ast(x)$, whereby we employ a compact hypothesis class $\calH$ of conditional probability estimators. That is, $\calH$ is a compact set (e.g., with respect to the uniform norm) of functions $f : \calX \to \Delta_K$. The hypothesis class $\calH$ is the first major ingredient of our ICEO approach. Note that the constraint on the output of $f \in \calH$, namely $f(x) \in \Delta_K$ for all $x \in \calX$, is not standard in most learning problems but is necessitated by our setting. Fortunately, this constraint can be accommodated in a number of ways. A straightforward approach is to consider the softmax operator $\mathrm{soft} : \bbR^K \to \bbR^K$ defined by $\mathrm{soft}_k(v) = \frac{\exp(v_k)}{\sum_{j=1}^K \exp(v_j)}$ for $v\in \bbR^K$. Then, given \emph{any} hypothesis class $\tilde{\calH}$ of unconstrained functions $\tilde{f} : \calX \to \bbR^K$, we can define $\calH$ as the composition class $\mathrm{soft} \circ \tilde{\calH}$. Note that, due to the differentiability properties of the softmax operator, $\mathrm{soft} \circ \tilde{\calH}$ naturally inherits differentiability properties from $\tilde{\calH}$, which can be very useful from a computational perspective. For another example, consider $\calH$ defined by a decision tree partitioning algorithm. Then, for any given $x$, $f(x)$ can be constructed from the empirical distribution of $\xi$ restricted to the subset of the partition of the training data for which $x$ lies in. Finally, a third approach, which we expand upon in Section \ref{sec: semi-algebraic}, is to let $\mathcal{H}$ be a constrained linear hypothesis class whereby $\mathcal{H} = \{f : f(x) = Bx \in \Delta_K \ \text{for all } x \in \calX\}$. Depending on the structure of $\calX$, it may be possible to efficiently model the constraint $Bx \in \Delta_K \ \text{for all } x \in \calX$, and we discuss specific examples in Section \ref{sec: semi-algebraic}.
We would like to emphasize two points about our approach for estimating the conditional distribution using a hypothesis class $\calH$. First, by directly estimating the conditional probability our proposed method has more flexibility in handling constraints as compared to methods that learn a mapping $\pi$ directly from features $x$ to decisions $w$. In particular, compared to the policy learning approaches which learn a mapping from features to decisions requires that the output of the mapping $\pi$ be feasible in the region $S$, which may severely constrain the feasible set of $\pi$. More detailed intuitions can be found in Appendix \ref{sec: policy-opt}.
On the other hand, our approach of composing a user-specified hypothesis class $\calH$ with the regularized optimal solution mapping $w_\rho(\cdot)$ allows for a very general selection of $\calH$. 

\paragraph{Regularized Optimization Oracle.}
As mentioned previously, we assume that the functions $c_k(\cdot) = c(\cdot, \tilde{z}_k)$, for all $k = 1, \ldots, K$, are all convex functions of $w$ on the convex and compact feasible region $S$. %
Furthermore, we presume that we can additionally work with a \emph{decision regularization function} $\phi(\cdot) : S \to \bbR$, which is non-negative and strongly convex with respect to some norm $\|\cdot\|$ on $\bbR^d$. 
Given any $p \in \Delta_K$ and $\rho > 0$, define the regularized optimal solution mapping: 
\begin{align}
\label{program:origin-lower-level_regularized}
    w_\rho(p) := &\argmin_{w\in S} \sum_{k=1}^K p_kc_k(w) + \rho\phi(w).
\end{align}
Note that, due to the strong convexity of $\phi(\cdot)$, $w_\rho(p)$ is uniquely defined. Furthermore, we can show that $w_\rho(\cdot)$ is a continuous mapping as demonstrated in Lemma \ref{lemma: opt-cont-func}. 
These regularity properties are induced by the use of the regularization term $\phi(\cdot)$, which is crucial for developing our ICEO methodology and for providing associated theoretical guarantees. We further assume that $w_\rho(p)$ can be efficiently computed in practice for any $p \in \Delta_K$ and $\rho > 0$. Possible examples include using a commercial solver or utilizing a specialized algorithm that depends on the structure of the $c_k(\cdot)$ and $\phi(\cdot)$ functions.
Ideally, the function $\phi(\cdot)$ should be chosen so that the complexity of computing $w_\rho(p)$ is not greatly increased as compared to when $\rho = 0$.
Note that our performance guarantees developed in Section \ref{sec: general-guarantee} hold for any choice of $\phi(\cdot)$ that is $1$-strongly convex.

\paragraph{ICEO Methodology.}
We are now ready to describe our ICEO methodology and corresponding training problem, whereby we consider an integrated approach that estimates a hypothesis $f \in \calH$ in consideration of the downstream optimization goal. We presume that we have collected a training dataset $\{(x_i, \xi_i)\}_{i=1}^n$ consisting of historically observed pairs of feature vectors $x_i \in \calX$ and associated parameter values $\xi_i$. We also presume that the decision maker uses the regularized optimal solution oracle $w_\rho(\cdot)$ defined in \eqref{program:origin-lower-level_regularized}. 
We adopt the empirical risk minimization (ERM) principle with respect to the  in-sample cost induced by the regularized oracle:
\begin{align}
\label{program:origin}
\tag{ICEO-$\rho$}
    \min_{f\in \mathcal{H}, w_1, \ldots, w_n \in S} \quad & \frac{1}{n} \sum_{i = 1}^n c(w_i, \xi_i) \\ %
    \mathrm{s.t.} \quad & w_i =  w_{\rho}(f( x_i))\nonumber,
\end{align}
where $\rho > 0$ is a given value of the decision regularization parameter, which can be chosen with cross-validation for example.
Let $\hat{f} \in \calH$ denote a computed optimal solution of \eqref{program:origin}. Then, for any newly observed feature vector $x \in \calX$, the decision maker implements the decision $w_\rho(\hat{f}(x)) \in S$ formed by composing $w_\rho(\cdot)$ with $\hat{f}(\cdot)$. We remark that one may consider a variant of \ref{program:origin} that, in the objective function, additionally includes decision regularization terms $\rho\phi(w_i)$ for each sample. Although these decision regularization terms appear more aligned with \eqref{program:origin-lower-level_regularized}, and were included in an earlier version of this paper, they are in fact a purely optional component of our model and we choose to not include them here for simplicity. 
Indeed, our entire analysis also will carry through with or without the additional decision regularization terms. The only difference is that the rate of convergence in the finite-sample bound is slightly faster when these terms are excluded.

Let us contrast the ICEO approach with the PTO and ETO approaches. 
In the PTO approach, a machine learning model $\hat{g}_{\mathrm{PTO}}: \calX \to \Xi$ is built, using the training data, to predict the parameter $\xi$ based on the feature vector $x$. Then, given any new $x \in \calX$, the decision maker implements a decision from the optimal solution set $\arg\min_{w \in S}c(w, \hat{g}_{\mathrm{PTO}}(x))$.
Note that, as pointed out in Section \ref{sec: intro}, because of the nonlinearity of the objective function, a point estimate for a prediction of $\xi$ given $x$ does not provide enough information about the conditional distribution to produce a reasonable solution of \eqref{eq: sto-opt}. 
Different from PTO, the ETO approach learns a model $\hat{f}_{\mathrm{ETO}} : \calX \to \Delta_K$ for estimating the conditional distribution of $\xi$ given $x$.
Then, given any new $x \in \calX$, the decision maker implements a decision from the optimal solution set $W(\hat{f}_{\mathrm{ETO}}(x))$.
Thus, the ETO approach is more aligned with the ICEO approach. The main distinction is that the traditional ETO approach learns the model $\hat{f}_{\mathrm{ETO}}$ in a way that is completely oblivious to the downstream optimization task. For instance, given a hypothesis class $\calH$, the ETO approach might select the hypothesis by minimizing the empirical cross-entropy loss, defined for any $f \in \calH$ and any observed $(x, \xi = \tilde{z}_k)$ by $\ell_{\mathrm{ce}}(f(x), \xi = \tilde{z}_k) := -\log(f_k(x))$.

\emph{Additional Notation.} Due to the compactness of $S$, the cost function $c(\cdot, \cdot)$ is bounded and we define $\bar{c}:= \sup_{w\in S, \xi \in \Xi} |c(w, \xi)|$. Because of the compactness of $S$, we can define diameters of $S$. We let $\mathrm{diam}_j(S) := \sup_{u,v\in S} |u_j-v_j|$ to denote the coordinate-wise diameter of the feasible region $S$. We further let $\mathrm{diam}(S):= \sum_{j=1}^d \mathrm{diam}_j(S)$ denote the summation of the coordinate-wise diameter of all coordinates. 
Given a norm $\|\cdot\|$ defined on $\bbR^d$, the distance from a point $w \in \bbR^d$ to a set $W \subseteq \bbR^d$ is denoted by $\mathrm{dist}(w, W):= \inf_{u\in W} \|w-u\|$.
For a convex function $h(\cdot): S \to \bbR$, we let $\partial h(w)$ denote the set of subgradients of $h(\cdot)$ at $w$.
Let $\circ$ denote the composition of functions. For example, with $f: \calX \rightarrow \Delta_K$ and $w: \Delta_K \rightarrow S$, then $w \circ f$ is the function from $\calX$ to $S$ with $(w \circ f)(x) := w(f(x))$ for all $x\in \calX$. This function composition notation also extends naturally to function classes. For example, for a class $\calH$ of functions $f : \calX \to \Delta_K$, we let $w \circ \calH$ denote the class of functions $\{w \circ f : f \in \calH \}$. For $f, g \in \calH$, recall the sup-norm is defined as as $\|f-g\|_\infty := \sup_{x\in \calX} |f(x) - g(x)|$.  We denote the set of non-negative integers a $\mathbb{N}_0$ and let $\mathbb{N}_0^k$ denote the set of all $k$-dimensional vectors with each component is a  non-negative integer. $\mathds{1}$ denotes the $K$-dimensional vector with all coordinates taking the value of one. 
We let $\mathrm{TV}(\calP, \calQ)$ denote the total variation between two probability measures $\calP$ and $\calQ$ supported on the $K-1$-dimensional simplex $\Delta_K$. $\mathrm{TV}(\calP, \calQ):= \sum_{A\in \calB} |\calP(A) - \calQ(A)|$ where $\calB$ denote the class of Borel sets in $\Delta_K$. In Section \ref{sec: approx-mapping}, we will use an equivalent expression of $\mathrm{TV}(\calP, \calQ) = \frac{1}{2} \sup_{f: \Delta \rightarrow [-1, 1]} (\int_{\Delta_K} f(p) d\calP(p) - \int_{\Delta_K} f(p) d\calQ (p)$).

\subsection{Motivating Examples}
In this section, we present a few motivating examples for the ICEO framework, some of which will be revisited in our numerical experiments in Section \ref{sec:experiments}.

\begin{example}[Multi-item Newsvendor]\label{example:newsvendor} 
The multi-item Newsvendor problem aims to find the optimal replenishment quantities for $d$ different products. We let $\xi := (\xi_1, \dots, \xi_d)$ denote the random demand of $d$ products and let $w \in \bbR^d$ denote the associated order quantities. The demand values $\xi$ might be related to contextual information such as promotions, holiday seasons, brand information, etc.
The objective of this problem is the total inventory cost including the holding costs $h_l$ and stockout costs $b_l$, which characterize the over-stock and under-stock, respectively. 
The objective cost can be formulated as 
\begin{equation}
\label{eq: newsvendor-obj}
    c(w, \xi) := \sum_{l=1}^d h_l(w_l-\xi_l)^+ + b_l(\xi_l - w_l)^+,
\end{equation}
where the function $(\cdot)^+$ is defined as $\max\{\cdot, 0\}$.
Moreover, we consider a budget capacity constraint $C > 0$ on the total order quantities and formulate the feasible set as
\begin{equation*}
    S := \{w : \sum_{l=1}^d w_l \leq C, w\geq 0\}.
\end{equation*}
\label{exp: nv}
\end{example}
\vspace{-5mm}
\begin{example}[Risk-Averse Portfolio Optimization] We consider the problem of finding an optimal risk-averse portfolio of $d$ assets. We denote the random vector of asset returns by $\xi\in \mathbb{R}^d$, which may be associated with contextual information such as economic indicators, news headlines, etc. The decision maker aims to find the best allocation of assets $w \in \mathbb{R}^d$ that optimizes a weighted combination of the expected return and variance of the portfolio. By introducing an auxiliary variable $w_0 \in \bbR$, we formulate the objective as 
\begin{equation}
\label{eq: portfolio-obj}
    c(w, w_0, \xi) := \alpha \left(\sum_{l = 1}^d w_l \xi_l - w_0\right)^2 - \sum_{l = 1}^d w_l \xi_l,
\end{equation}
where $\alpha > 0$ is a trade-off parameter.
Note that the expectation of the first term in \eqref{eq: portfolio-obj} is $\alpha\mathbb{E}_\xi\left[(\sum_{l = 1}^d w_l \xi_l - w_0)^2\right]$, which represents the variance of the investment return $\text{Var}(\sum_{l = 1}^d w_l \xi_l)$ when $w_0$ is optimally selected as $w_0 = \mathbb{E}_\xi\left[\sum_{l = 1}^d w_l \xi_l\right]$, while the second term is the return of the portfolio. Therefore, $\mathbb{E}_\xi[c(w, w_0, \xi)]$ trades off between minimizing the variance and maximizing the expected return of the portfolio. As is standard in the classical portfolio optimization problems, we constrain the portfolio decision in the simplex $\Delta_d = \{w\in \mathbb{R}^d : \sum_{l=1}^d w_l = 1, w \geq 0$\} and we have
\begin{equation*}
    S := \{ (w, w_0): w\in \Delta_d, w_0\geq 0, 0 \leq w_0 \leq \bar{\Xi}\},
\end{equation*}
where $\bar{\Xi} \geq 0$ is a known upper bound the maximum of the returns $\|\xi\|_\infty$.
\end{example}

\begin{example}[Minimum Convex Cost Flow Problem] 
\label{example: networkflow}
Many applications such as urban traffic systems and area transfers in communication networks can be formulated as a minimum convex cost flow problem (we refer to Chapter 14 of \cite{ahuja1988network} for more details).
In the minimum convex cost flow problem, the decision-maker aims to find the flow that minimizes the associated cost on the edges. The cost is a convex function of flow and depends on a random parameter. Suppose we consider a directed graph with $d$ edges and the random parameter $\xi\in \mathbb{R}^d$. In this problem, we consider the objective function 
\begin{equation*}
c(w, \xi) = \sum_{l=1}^d g_l(w_l, \xi_l) 
\end{equation*}
where $g_l$ is a convex function of $w_l$ and $g_l$ can be different for different coordinates.
Similar to the standard network flow problem, we let the matrix $A$ denote the node-arc incidence matrix of the graph and restrict the flow on each edge in the region $[l,u]$. Therefore, we have the feasible region
\begin{equation*}
    S = \{w \in \mathbb{R}^d: Aw = 0, w\in [l,u]^d \}.
\end{equation*}
\end{example}

\section{Performance Guarantees}
\label{sec: general-guarantee}
In this section, we demonstrate asymptotic consistency and finite-sample performance guarantees of the ICEO approach. Let us first introduce some additional notation.
We state our results in terms of arbitrary policy mappings $\pi : \mathcal{X} \rightarrow S$, which represent any mapping from the feature space $\mathcal{X}$ to the set of feasible decisions $S$. Our main interest herein is the class of policies that combine the optimal solution mapping and hypothesis $f$, i.e., $\Pi = w_\rho \circ \calH$. This class of policies could include both hypotheses learned by the ICEO approach as well as by ETO approaches. In the remaining part of this work, we let $f^\ast: \calX \to \Delta_K$ denote the function that maps from $x$ to the true conditional distribution $p^\ast(x)$. We refer to $f^\ast$ as the true hypothesis. 
To quantify our performance guarantees, we define the following risk functions for any policy $\pi$: 
\begin{enumerate}
     \item $\hat{R}_n(\pi)$:  The empirical risk with respect to a given sample $\{(x_i, \xi_i)\}_{i=1}^n$, i.e.,
     \begin{equation*}
     \hat{R}_n(\pi) :=  \frac{1}{n}\sum_{i=1}^n c(\pi(x_i), \xi_i). 
     \end{equation*}
     \item $R(\pi)$: The expected risk with respect to the underlying joint distribution $\calD$ of $x$ and $\xi$, i.e.,
     \begin{equation*}
        \label{eq: l-defn}
        R(\pi):= \mathbb{E}_{x, \xi}\left[c(\pi(x), \xi ) \right]= \mathbb{E}_x \left[\sum_{k=1}^K p^\ast_k(x)c_k(\pi(x)) \right],
     \end{equation*}
     where $p^\ast_k(x) = \bbP_\xi(\xi = \tilde{z}_k | x)$ for all $k = 1, \dots, K$.
\end{enumerate}

 Note that $\hat{R}_n(\cdot)$ is the objective function of (\ref{program:origin}). 
 We use the notation $\calD_x$ to refer to the marginal distribution of the features $x$.
 We further define the optimal risk values for the class of policies $w_\rho \circ \calH$ and $w \circ \calH$ that we consider herein. Recall that, for all $k$, $c_k(\cdot)$ is convex but in general not strongly convex. Therefore, there may exist multiple optimal solutions, as denoted by $W(p)  \argmin_{w\in S} \sum_{k=1}^K p_kc_k(w)$, for any input probability vector $p\in \Delta_K$. In this case, we assume that the oracle $w(\cdot) : \Delta_K \to S$ as a function that arbitrarily outputs a value from the optimal solution set $W(\cdot)$, i.e., $w(p) \in W(p)$ for all $p \in \Delta_K$. 
\begin{enumerate}
    \item $J^*$: the optimal expected unregularized risk, i.e.,
    \begin{equation*}
    \label{eq: j-star}
        J^* := \min_{f\in \mathcal{H}} \mathbb{E}_x\left[\sum_{k=1}^K p_k^*(x)c_k(w(f(x)))\right] = \min_{f\in \mathcal{H}} R(w \circ f).
    \end{equation*}
    In Lemma \ref{lemma:f-star-minimizer} in the Appendix, we demonstrate that the value of the unregularized optimal risk $J^\ast$ does not depend on the particular choice of $w(\cdot)$ and that $f^\ast$ is always the minimizer that achieves $J^\ast$.
    \item $J^*_\rho$: the optimal expected regularized risk for any given regularization parameter $\rho > 0$, i.e.,
    \begin{equation*}
     J^*_\rho := \min_{f\in \mathcal{H}} \mathbb{E}_x \left[\sum_{k=1}^K p^*_k(x)c_k(w_\rho(f(x)))\right] = \min_{f\in \mathcal{H}} R(w_\rho\circ f).
    \end{equation*}
     and we let $f^\ast_\rho$ denote an optimal hypothesis. 
    \item $\hat{J}^n_\rho$: the optimal empirical risk with any given sample $S_n$ and a given regularization parameter $\rho >0$, i.e.,
    \begin{equation*}
    \hat{J}^n_\rho := \min_{f\in \mathcal{H}} \frac{1}{n}\sum_{i=1}^n c(w_{\rho}(f(x_i)), \xi_i)) =\min_{f\in \mathcal{H}} \hat{R}_n(w_\rho\circ f),
    \end{equation*}
    and we let $\hat{f}^n_\rho$ denote an optimal hypothesis. 
\end{enumerate}

\subsection{Asymptotic Consistency}\label{sec:consistency}
We first demonstrate the asymptotic consistency of the ICEO approach. The consistency of ICEO is three-fold: the consistency of the ICEO risk, the consistency of the ICEO decisions, and the consistency of the ICEO hypothesis.
The asymptotic consistency results build upon the convergence of the regularized oracle to the original optimization oracle, as stated in Proposition \ref{prop: oracle-consistency}.
\begin{proposition}[Convergence of Regularized Oracle]
\label{prop: oracle-consistency}
Suppose $w_\rho(\cdot)$ is the regularized optimal solution mapping for any $\rho >0$, as defined in \eqref{program:origin-lower-level_regularized}. For any positive sequence $\{\rho_n\}$ that satisfies $\lim_{n\rightarrow \infty}\rho_n =0$, and for any $p\in \Delta^K$, we have $\mathrm{dist}(w_{\rho_n}(p), W(p))\rightarrow 0$ as $n \to \infty$.
\end{proposition}

To establish asymptotic consistency, we require the following assumptions concerning the hypothesis class $\mathcal{H}$ and the regularized oracle $w_\rho(\cdot)$.
A standard sufficient (but not necessary) condition to ensure uniform convergence and thus asymptotic consistency in learning theory is for the hypothesis class to have a finite bracketing number (we refer to \cite{vaart2023empirical} for more details). We build upon this natural sufficient condition by introducing the \emph{multivariate bracketing number} of the hypothesis class $\calH$. In other words, we present a modest extension of the standard definition of the bracketing number for real-valued functions to the multivariate case. Herein, we first define a multivariate bracket of $\mathcal{H}$ relative to any norm $\|\cdot\|$ on $\mathcal{H}$.
\begin{definition}[Multivariate $\epsilon$-bracket]
Given two functions $l : \mathcal{X} \to \mathbb{R}^K$ and $u : \mathcal{X} \to \mathbb{R}^K$, the bracket $[l,u]$ is the set of all functions $f\in \mathcal{H}$ with $l_k(x) \leq f_k(x) \leq u_k(x)$ for each coordinate $k = 1, \dots, K$ and for all $x\in \mathcal{X}$. An $\epsilon$-bracket is a bracket $[l,u]$ with $\|l-u\|< \epsilon$. 
\end{definition}
Then the multivariate bracketing number can be defined as follows.
\begin{definition}[Multivariate bracketing number]
\label{def: bracketing-n}
The multivariate bracketing number $N_{[]}(\epsilon, \mathcal{H}, \|\cdot\|)$ is the minimum number $N$ of $\epsilon$-brackets $[l_1,u_1], \ldots, [l_N, u_N]$ that cover $\mathcal{H}$, i.e., with the property that for all $f \in \mathcal{H}$ there exists $i \in \{1, \ldots, N\}$ such that $f \in [l_i,u_i]$.
\end{definition}

Establishing the asymptotic consistency of our proposed ICEO method requires that $\calH$ be compact and that its multivariate bracketing number is finite. Note that the metric defined on $f$ is the sup-norm, $\|f-g\|_\infty := \sup_{x\in \calX} |f(x) - g(x)|$. 
\begin{assumption}
For the compact hypothesis class $\calH$, we assume the following properties:
\begin{enumerate}[label={\Alph*.},
  ref={\theassumption.\Alph*}]
    \item{(Model specification.)} \label{assump: specification} The hypothesis class $\calH$ includes the true hypothesis $f^*$  i.e., $f^*\in \calH$.
    \item{(Finite bracketing number of $\mathcal{H}$.)} \label{assump: finite-bracket-N} The multivariate bracketing number, $N_{[]}(\epsilon, \mathcal{H}, \|\cdot\|_\infty)$, as defined in Definition \ref{def: bracketing-n}, is finite for any $\epsilon \in (0,1)$.
    \item{(Unique optimal hypotheses.)} \label{assump: unique}There does not exists a hypothesis $f\neq f^*$ in $\calH$ such that $W(f(x)) \cap W(f^*(x)) \neq \emptyset$, $\mathcal{D}_x$-almost surely for all $x\in \mathcal{X}$. For $\rho >0$ and for all $f'\neq f^*_\rho$ in $\calH$, there exists $\epsilon >0$ with $\mathbb{P}_x(\|w_\rho(f'(x)) - w_\rho(f^*_\rho(x))\| \geq \epsilon) >0$.
\end{enumerate}
\end{assumption}

Regularity conditions of the oracle are also needed to establish the asymptotic consistency of the ICEO method. To be more specific, we assume that the regularized oracle is uniformly Lipschitz, as presented in Assumptions \ref{assump: oracle-uni-lip-rho}-\ref{assump: oracle-uni-lip}. The third assumption \ref{assump: unique-decision} is only needed for the consistency of decisions and hypotheses. 

\begin{assumption}[Uniform Properties of the Regularized Oracle]
For the regularized oracle, $w_\rho(\cdot)$ with $\rho \in (0, \rho_0]$ for some $\rho_0 > 0$, we assume the following properties:
\begin{enumerate}[label={\Alph*.},
  ref={\theassumption.\Alph*}]
     \item{(Lipschitz in $\rho$.)} \label{assump: oracle-uni-lip-rho} For all $p\in \Delta_K$, $\|w_{\rho_1}(p) - w_{\rho_2}(p)\| \leq L_\rho |\rho_1-\rho_2|$ for all $\rho_1, \rho_2 \in (0, \rho_0]$.
     \item {(Lipschitz in $p$.)}\label{assump: oracle-uni-lip} For all $\rho\in (0, \rho_0]$, $\|w_{\rho}(p) - w_{\rho}(p)\| \leq L_w \|p_1 - p_2\|$.
    \item{(Unique optimal decisions.)}  \label{assump: unique-decision} For any $\epsilon > 0$, there exists $\delta > 0$, so that for all $\rho \in (0, \rho_0]$ and $f \in \mathcal{H}$ it holds that
    \begin{equation*}
        \bbP_{x}(\|w_\rho(f(x)) - w_\rho(f^*_\rho(x))\| \geq \epsilon) > 0 ~\Rightarrow~ R(w_\rho \circ f) - R(w_\rho \circ f^*_\rho) \geq \delta.
    \end{equation*}
\end{enumerate}
\end{assumption}
To further justify these assumptions, in Appendix \ref{sec: consistency-proof-appendix}, we study a reasonable sufficient condition to guarantee these assumptions of uniform Lipschitzness. Namely, we demonstrate that when the underlying contextual stochastic optimization problem \eqref{eq: sto-opt-2} satisfies a linear growth condition away from the optimal solution set, then the regularized oracle satisfies an ``automatic crossover'' property. The automatic crossover property says that the regularized oracle automatically outputs a point from the unregularized optimal solution set $W(p)$, whenever $\rho$ is smaller than a certain ``phase-transition'' threshold. Whenever the automatic crossover property holds, the uniform Lipschitz assumptions hold and thus imply uniform equicontinuity of the regularized oracle, which allows us to generalize the pointwise convergence of the oracle (Proposition \ref{prop: oracle-consistency}) to uniform convergence.

To guarantee the consistency of the ICEO method, we consider a sequence of regularization parameters $\rho_n$, depending on the sample size $n$, such that $\rho_n$ converges to zero as $n$ grows to infinity.
Theorem \ref{thm: consistency} below demonstrates the three levels of consistency.
\begin{theorem}[Asymptotic Consistency of ICEO]
\label{thm: consistency}
Suppose that the training data $(x_i, \xi_i)$ is an i.i.d. sequence from the distribution $\mathcal{D}$ and that the sequence of regularization parameters $\rho_n \in (0, \rho_0]$ satisfies $\lim_{n\rightarrow \infty}\rho_n =0$. Then, under Assumptions \ref{assump: specification}~-~\ref{assump: finite-bracket-N} and Assumptions \ref{assump: oracle-uni-lip-rho}~-~\ref{assump: oracle-uni-lip}, we have the following:
\begin{itemize}
    \item[(i)] The optimal empirical regularized risk converges to the optimal expected risk, i.e., $\hat{J}^n_{\rho_n} \rightarrow J^*$ with probability 1.
    \item[(ii)] Additionally, with Assumptions \ref{assump: unique} and \ref{assump: unique-decision}, $\mathcal{D}_x$-almost surely for all $x \in \mathcal{X}$, the sequence of ICEO decisions $w_{\rho_n}(\hat{f}^n_{\rho_n}( x))$ converges to the true set of optimal decisions $W(f^*(x))$, i.e., $\mathrm{dist}(w_{\rho_n}(\hat{f}^n_{\rho_n}( x)), W(f^*(x))) \to 0$ with probability 1.
    \item[(iii)] Additionally, with Assumptions \ref{assump: unique} and \ref{assump: unique-decision}, the sequence of ICEO hypotheses converges to the true hypothesis, i.e, $\hat{f}^n_{\rho_n} \rightarrow f^*$ with probability 1. 
\end{itemize}
\end{theorem}
The proof of Theorem \ref{thm: consistency} can be found in Appendix \ref{sec: consistency-proof-appendix}. The idea of the proof, particularly for part {\em (i)}, is to separately establish the convergence of $J^*_\rho$ to $J^\ast$ and the uniform convergence over $\rho$ of $\hat{J}_\rho^n$ to $J^\ast_\rho$. 
The first technical challenge involved is demonstrating the convergence properties of the regularized optimal solution oracle $w_\rho(\cdot)$ to the optimal solution set $W(\cdot)$, even beyond the pointwise convergence provided in Proposition \ref{prop: oracle-consistency}. These properties are needed for showing the convergence of both $J^*_\rho$ to $J^\ast$ and $\hat{J}_\rho^n$ to $J^\ast_\rho$. 
Secondly, to establish the convergence of $\hat{J}_\rho^n$ to $J^\ast_\rho$ uniformly over $\rho$ as we collect more data, the convergence of the oracle is not enough. Indeed, we demonstrate finiteness of the bracketing number of the composition class $\mathcal{F}:= \{g : g = c \circ w_\rho \circ f \text{ for some } \rho \in (0, \rho_0] \text{ and } f\in \mathcal{H}\}$ (Lemma \ref{lemma: finite-cover-num}).

We would like to clarify the relationship between the asymptotic consistency stated in Theorem \ref{thm: consistency} and the asymptotic optimality defined in \cite{bertsimas2020predictive}. In \cite{bertsimas2020predictive}, the authors define asymptotic optimality in terms of the decisions reaching the best objective function performance possible. Because of the continuity of the cost function $c$, the convergence of ICEO decisions, as stated in (ii) of Theorem \ref{thm: consistency}, implies the asymptotic optimality property of \cite{bertsimas2020predictive}.

\subsection{Finite Sample Performance Guarantees}\label{sec:generalization}
We now provide finite sample performance guarantees of the ICEO solution $\hat{f}^n_{\rho_n}$ in the form of generalization bounds based on Rademacher complexities. 
In particular, our overall strategy is as follows: {\em (i)} we demonstrate that, due to the presence of the strongly convex decision regularization function $\phi(\cdot)$, the optimal solution mapping $w_\rho(\cdot)$ is Lipschitz, {\em (ii)} we use the result of \cite{maurer2016vector} to bound the Rademacher complexity with respect to the cost function of the ICEO framework by the multivariate Rademacher complexity of the underlying hypothesis class $\mathcal{H}$.
In addition, we slightly abuse the notation and let $c(\cdot) : S \to \bbR^K$ denote a vector-valued mapping, where each component $c_k(w)$ denotes the cost $c(w, \xi=\tilde{z}_k)$ for all scenarios $k = 1, \dots, K$, as defined earlier in Section \ref{sec: framework}.

Before we investigate the Rademacher complexities, we first demonstrate the Lipschitz property of the regularized optimal solution mapping $w_\rho(\cdot)$ for any positive parameter $\rho$, based on the following assumption regarding the Lipschitz property of the cost function $c(w)$ and the strong convexity constant of the decision regularization function $\phi(\cdot)$.
\begin{assumption}
\label{assump:c-lip-phi-constant}
The cost function $c(\cdot)$ and the decision regularization function $\phi(\cdot)$ satisfy the following conditions:
\begin{enumerate}[label={\Alph*.},
  ref={\theassumption.\Alph*}]
    \item  $c(\cdot)$ is $L_c$-Lipschitz with respect to the decision $w \in S$, i.e., it holds that $\|c(w_1) - c(w_2)\|_2 \leq L_{c}\|w_1-w_2\|$ for all $w_1, w_2 \in S$. \label{assump:c-lip}
    \item The decision regularization function $\phi(\cdot)$ is a 1-strongly convex function on the compact set $S$. \label{assump:phi-constant}
\end{enumerate}
\end{assumption}
Note that we use the $\ell_2$ norm as the norm on the space of outputs of the cost functions $c(\cdot)$, while the norm on the space of decisions $w$ remains the generic norm $\|\cdot\|$. The reason for focusing on the $\ell_2$ norm is that we can apply the elegant vector contraction inequality of \cite{maurer2016vector} when analyzing the Rademacher complexity.
It is also worth mentioning that the Lipschitz condition in Assumption \ref{assump:c-lip} implies that the cost functions $c_k(\cdot)$ are uniformly $L_c$-Lipschitz, i.e., $\|c(w_1) - c(w_2)\|_\infty \leq L_{c}\|w_1-w_2\|$. 
\begin{proposition}[Lipschitz Properties of $w_\rho(\cdot)$ and $c(\cdot)$]
Suppose Assumption \ref{assump:c-lip-phi-constant} holds and note that $\bbR_+^{K}:=\{ p\in \bbR^K: p_k \geq 0,  \forall k=1, \dots, K\}$. Then, for any $\rho > 0$, the optimal solution mapping $w_\rho(\cdot)$ is $(\frac{L_{c}}{\rho})$-Lipschitz:
\begin{equation}
\label{eq: w-lip}
\|w_\rho(p) - w_\rho(p^\prime)\| \leq \frac{L_{c}}{\rho} \|p-p^\prime\|_2,\qquad \forall p, p^\prime \in \bbR_+^{K} %
\end{equation}
Furthermore, $c(w_\rho(\cdot))$ is $(\frac{L_{c}^2}{\rho})$-Lipschitz:
\begin{equation}
\label{eq: c-eq-lip}
\|c(w_\rho(p)) - c(w_\rho(p^\prime))\|_{\infty} \leq \|c(w_\rho(p)) - c(w_\rho(p^\prime))\|_2 \leq \frac{L_{c}^2}{\rho} \|p-p^\prime\|_2,\qquad \forall p, p^\prime \in \bbR_+^{K}\end{equation}\label{prop:w-lip}
\end{proposition}
\vspace{-7mm}
The proof of this Proposition follows standard arguments of Nesterov's smoothing technique (\cite{nesterov2003introductory}), and a related result with a similar proof style appears in \cite{gupta2021data}. Detailed proof can be found in Appendix \ref{sec: appendix-general-guarantee}.

To establish the generalization bound for the ICEO risk, we rely on both regular single-variate and multi-variate Rademacher complexity. In the ICEO setting, given a class of policies $\Pi$, where $\pi : \calX \to S$ for all $\pi \in \Pi$, we can apply generalization bounds that directly use the Rademacher complexity of the function class $c\circ \Pi$. Given a sample $\{(x_i, \xi_i)\}_{i=1}^n$ the \emph{empirical Rademacher complexity} $\hat{\mathfrak{R}}_n(c\circ \Pi)$ of the function class $c\circ \Pi$ is defined by 
\begin{equation*}
    \hat{\mathfrak{R}}_n(c\circ \Pi) ~:=~ \mathbb{E}_\sigma\left[ \frac{2}{n}\sup_{g\in c\circ \Pi} \sum_{i=1}^n \sigma_{i} g(x_i, \xi_i)\right] =  \mathbb{E}_\sigma\left[ \frac{2}{n}\sup_{\pi \in  \Pi} \sum_{i=1}^n \sigma_{i}  c(\pi(x_i), \xi_i)\right],
\end{equation*}
where $\sigma_{i}$ are independent random variables drawn from the Rademacher distribution, i.e. $\Pr(\sigma_{i} = +1)=\Pr(\sigma_{i} = -1) = \frac{1}{2}$ for all $i = 1,2,\dots, n$. The \emph{expected Rademacher complexity} $\mathfrak{R}_n(c\circ \Pi)$ is then defined as the expectation of $\hat{\mathfrak{R}}_n(c\circ \Pi)$ with respect to the i.i.d. sample $\{(x_i, \xi_i)\}_{i=1}^n$ drawn from the distribution $\calD$:
\begin{equation*}
  \mathfrak{R}_n(c\circ \Pi) = \bbE_{(x_i, \xi_i)\sim\calD}[\hat{\mathfrak{R}}_n(c\circ \Pi)].
\end{equation*}
Next, we introduce the multivariate Rademacher complexity as a generalization of the regular Rademacher complexity to a class of vector-valued functions. In the ICEO context, we focus on the hypothesis class $\mathcal{H}$ which takes values in $\Delta_K$. Following \cite{bertsimas2020predictive}, \cite{maurer2016vector} and \cite{balghiti2019generalization}, the \emph{empirical multivariate Rademacher complexity} $\hat{\mathfrak{R}}_n(\mathcal{H})$ is defined in our context as
\begin{equation*}
    \hat{\mathfrak{R}}_n(\mathcal{H}) = \mathbb{E}_\sigma\left[ \frac{2}{n}\sup_{f\in \mathcal{H}} \sum_{i=1}^n 
    \sum_{k = 1}^K \sigma_{ik} f_k(x_i)\right],
\end{equation*}
where $\sigma_{ik}$ are also independent random variables drawn from the Rademacher distribution for all $i = 1,2,\dots, n$ and $k = 1, \ldots, K$. Correspondingly, the \emph{expected multivariate Rademacher complexity} $\mathfrak{R}_n(\mathcal{H})$ is then defined as 
\begin{equation*}
    \mathfrak{R}_n(\mathcal{H}) =\bbE_{x_i\sim\calD_x}[\hat{\mathfrak{R}}_n(\mathcal{H})],
\end{equation*}
In the remainder of this section, we provide generalization bounds with respect to the expected single-variate and multi-variate Rademacher complexities. We note that similar results can be achieved with respect to the empirical versions of the Rademacher complexities. Our focus on the expected versions is justified since, for many hypothesis classes $\mathcal{H}$, we can bound $\mathfrak{R}_n(\mathcal{H})$ by a term that converges to $0$ as the sample size $n$ grows. 
For example, \cite{balghiti2019generalization} establish upper bounds of $\mathfrak{R}_n(\mathcal{H})$ for regularized linear hypothesis classes with the rate of $\mathcal{O}(\frac{1}{\sqrt{n}})$, where the $\mathcal{O}(\cdot)$ notation hides dimension dependent constants that depend on the type of regularization used.

Given a sample $\{(x_i, \xi_i)\}_{i=1}^n$, we aim to provide a high-probability bound on the out-of-sample risk $R(w_{\rho_n} \circ f)$, given the in-sample risks $\hat{R}_n(w_{\rho_n} \circ f)$ and $\hat{R}_n(w_{\rho_n} \circ f; \rho_n)$, that holds uniformly for any hypothesis $f \in \mathcal{H}$.
As such, our generalization bound is constructed based on the classic generalization bound with Rademacher complexity due to \cite{bartlett2002rademacher}, which we restate below as specialized to the ICEO setting. Recall that $\bar{c}:= \sup_{w\in S, \xi \in \Xi} c(w, \xi)$.
\begin{theorem}[\cite{bartlett2002rademacher}]
\label{thm: bartlett}
Let $\Pi$ be a family of functions mapping from $\calX$ to $S$ with bounded Rademacher complexity $\mathfrak{R}_n(c\circ \Pi)$. Then, for any $\delta \in (0,1]$, with probability at least $1 - \delta$ over i.i.d. data $\{(x_i, \xi_i)\}_{i=1}^n$ drawn from the distribution $\calD$, the following inequality holds for all $\pi \in \Pi$:
\begin{equation*}
    R(\pi) ~\leq~  \hat{R}_n(\pi) + \mathfrak{R}_n(c\circ \Pi) + \bar{c}\sqrt{\frac{\log(\frac{1}{\delta})}{2n}}.
\end{equation*}
\end{theorem}
The next step of our analysis is to apply the vector contraction inequality of \cite{maurer2016vector} to derive a generalization bound that depends directly on the multi-variate Rademacher complexity of the hypothesis class $\mathcal{H}$.

\begin{theorem}[Generalization of ICEO]
Suppose Assumption \ref{assump:c-lip-phi-constant} holds and that the hypothesis class $\calH$ has bounded multi-variate Rademacher complexity $\mathfrak{R}_n(\mathcal{H})$. Then, for any $\delta \in (0,1]$ and $\rho_n > 0$, with probability at least $1 - \delta$ over i.i.d. data $\{(x_i, \xi_i)\}_{i=1}^n$ drawn from the distribution $\calD$, the following inequalities hold for all $f \in \calH$:
\begin{align}
\label{eq: finite-sample-bd-no-phi}
R(w_{\rho_n}\circ f) &~\leq~ \hat{R}_n(w_{\rho_n} \circ f ) + \frac{\sqrt{2}L_c^2}{\rho_n}\mathfrak{R}_n(\mathcal{H}) + \bar{c}\sqrt{\frac{\log(\frac{1}{\delta})}{2n}}.
\end{align}
\label{thm:finite-sample-bd}
\end{theorem}
Note that the right-hand side of the inequality in Theorem \ref{thm:finite-sample-bd} involves the non-regularized empirical risk, which may be evaluated for any $f \in \calH$. This is also the objective function of \eqref{program:origin}.
As mentioned previously, one can often establish upper bounds on $\mathfrak{R}_n(\mathcal{H})$ that converge to zero, for example at the rate $\mathcal{O}(\frac{1}{\sqrt{n}})$. Therefore, Theorem \ref{thm:finite-sample-bd} suggests that we should set the sequence of regularization parameters $\rho_n$ so that $\mathfrak{R}_n(\mathcal{H})/\rho_n$ converges to zero as well, in which case the remainder terms on the right-hand side of (\ref{eq: finite-sample-bd-no-phi}) converge to zero. We now state the proof of Theorem \ref{thm:finite-sample-bd}. Proof of of Theorem \ref{thm:finite-sample-bd} utilizes Proposition \ref{prop:w-lip} and can be found in Appendix \ref{sec: appendix-general-guarantee}

Leveraging Theorem 3, we obtain the following corollary that provides an upper bound for the suboptimality gap, in terms of out-of-sample risk evaluation, for the ICEO minimizer $\hat{f}^n_{\rho_n}$ relative to the out-of-sample risk of the optimal in-class hypothesis and the true hypothesis.

\begin{corollary}
\label{cor: optimal-out-of-sample-bound}
Suppose Assumption \ref{assump:c-lip-phi-constant} holds and that the hypothesis class $\calH$ has bounded multi-variate Rademacher complexity $\mathfrak{R}_n(\mathcal{H})$. Let $f^*_{\calH} \in \arg\min_{f \in \calH} R(w \circ f)$ denote an optimal in-class hypothesis.
Then, for any $\delta \in (0,1]$ and $\rho_n > 0$, with probability at least $1 - \delta$ over i.i.d. data $\{(x_i, \xi_i)\}_{i=1}^n$ drawn from the distribution $\calD$, the following inequality holds:
\begin{equation*}
   R(w_{\rho_n} \circ \hat{f}^n_{\rho_n}) - R(w_{\rho_n} \circ f^*_{\calH})
   ~\leq~ \frac{\sqrt{2}L_c^2}{\rho_n}\mathfrak{R}_n(\mathcal{H}) +\frac{3\bar{c}}{2} \sqrt{\frac{2\log(\frac{2}{\delta}) }{n}}.
\end{equation*}
If, in addition, we have $f^* \in \mathcal{H}$, then we have
\begin{equation*}
   R(w_{\rho_n} \circ \hat{f}^n_{\rho_n}) - R(w_{\rho_n} \circ f^\ast)
   ~\leq~ \frac{\sqrt{2}L_c^2}{\rho_n}\mathfrak{R}_n(\mathcal{H}) +\frac{3\bar{c}}{2} \sqrt{\frac{2\log(\frac{2}{\delta}) }{n}}.
\end{equation*}
\end{corollary}

\begin{remark}
The generalization bound provided in Theorem \ref{thm:finite-sample-bd} involves the Rademacher complexity $\mathfrak{R}_n(\mathcal{H})$ of the hypothesis class $\calH$, which returns outputs in $\Delta_K$. It is noteworthy that, if $\calH$ involves a softmax operator, i.e., $\calH = \mathrm{soft} \circ \tilde{\calH}$ for some hypothesis class $\tilde{\calH}$ outputting in $\bbR^K$, then it is possible to obtain an identical generalization bound involving the Rademacher complexity $\mathfrak{R}_n(\tilde{\calH})$ of the hypothesis class $\tilde{\calH}$. Indeed, since the softmax function $\mathrm{soft}$ is Lipschitz (\cite{gao2017properties}) with constant 1, the Lipschitz bound \eqref{eq: c-eq-lip} can be extended to a bound involving the outputs of $\tilde{\calH}$ with the same $(L_c^2/\rho)$ constant. Alternatively and equivalently, the vector contraction inequality of \cite{maurer2016vector} can be extended to a Lipschitz mapping (such as the softmax function) that has both multivariate input and output. This extended vector contraction inequality can be obtained by a straightforward extension of Lemma 7 and Theorem 3 of \cite{maurer2016vector} to the multivariate case.
\end{remark}

\section{Computational Methods}
\label{sec: computational}
In this section, we discuss the computational difficulties of solving the ICEO formulation \eqref{program:origin} and present multiple approaches to address them.

\paragraph{\textbf{Non-convexity.}} First, we point out that the ICEO formulation, \eqref{program:origin}, is not a convex optimization problem even in a very simple case where both the objective and constraints of the nominal optimization problem are linear and the decision regularization is quadratic, as stated in Example \ref{exp: non-linear} in Appendix \ref{sec: appendix-computation}.
This example shows that, even in this simplest case, (\ref{program: lp-example}) is not a convex optimization problem. Besides non-convexity, a more serious issue from a practical standpoint is the potential of non-differentiability of optimal solution mapping.

\paragraph{\textbf{Non-differentiability.} }
To solve the non-convex ICEO problem \eqref{program:origin}, a default approach in machine learning is to use a gradient-based algorithm such as the basic stochastic gradient descent method. 
Indeed, in practice, gradient-based algorithms are often able to deliver high-quality solutions for machine learning problems, especially in high dimensions. 
Unfortunately, applying these basic gradient-based methods to solve the ICEO formulation poses an additional major difficulty due to the non-differentiability of the optimal solution mapping $w_\rho(\cdot)$.
Although $w_\rho(\cdot)$ is a continuous function, as guaranteed for example by Proposition \ref{prop:w-lip}, it is generally not differentiable. 
The non-differentiability leads to major difficulties in applying  gradient-based methods while solving \ref{program:origin}. 
As reviewed in Section \ref{sec: literature}, existing studies that focused on directly learning the optimal solution mapping $w(f^*(x))$ also encounter the same issue of non-differentiability. \cite{wilder2019end} does not discuss much about this. \cite{donti2017task} use the output of an automatic gradient function calculated by back-propagation of a neural network. \cite{agrawal2019differentiable} approximate the gradient by solving a group of linear equations based on KKT conditions. However, all existing methods fail to demonstrate theoretical reliability or performance guarantees in approximating the gradient.

The non-differentiability of the optimal solution mapping mainly arises from the constraints and the points of discontinuity occur where there is a ``jump" in the optimal solution, e.g., in the polyhedral case as in Example \ref{exp: non-linear}. Therefore, a non-differentiable optimal solution map may also have regions where it is constant (or close to constant), resulting in the gradient of the ICEO objective being equal to zero. 
We demonstrate this poor behavior in Figure \ref{fig:oracle}, where we plot the second coordinate of the optimal solution mapping $w_\rho(\cdot)$ with respect to the first two coordinates of the input probability vector, for the multi-product newsvendor problem in Example \ref{exp: nv}, demonstrating the piece-wise constant shape. Such piece-wise constant shapes will greatly impede the performance of gradient-based methods, even if the gradient is easily calculated. This is because the gradient of the optimal solution mapping is zero in flat regions creating poor local minima that are very difficult to escape. 

\begin{figure}[ht]
\begin{subfigure}{.4\textwidth}
  \centering
  \includegraphics[width=.75\linewidth]{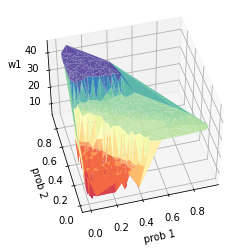}  
  \caption{3-D plot of the optimal oracle. It is clear that the landscape of the optimal solution mapping has cliffs and platforms. }
  \label{fig:oracle}
\end{subfigure}
\hspace{7mm}
\begin{subfigure}{.4\textwidth}
  \centering
  \includegraphics[width=.75\linewidth]{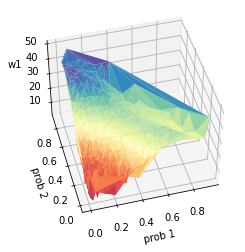}
  \caption{3-D plot of the approximated oracle constructed using polynomial functions. The piece-wise constant shape is smoothed out. }
  \label{fig:approx-oracle}
\end{subfigure}
\caption{The landscape of the optimal and the approximated oracle.}
\label{fig:fig}
\end{figure}
To address the issue of non-differentiability and its consequences leading to poor local optima and slow convergence, we develop a framework for approximating the mapping $w_\rho(\cdot)$ with a differentiable function $\tilde{w}_\rho(\cdot)$, which allows us to smooth out the optimal solution mapping and enhance convergence to a good local optimum.
Figure \ref{fig:approx-oracle} is an example of smoothing out the piece-wise constant shape by constructing an approximate oracle using polynomial kernel regression. As noted before, gradient-based methods are often highly effective at delivering high quality solutions to non-convex machine learning problems in practice. Thus, in a practical sense, the non-differentiability of the optimal solution mapping is a much more serious concern than the non-convexity. Our general strategy of approximating the optimal solution mapping with a differentiable function, for which we expand upon and give examples in Section \ref{sec: approx-mapping}, greatly increases the practical viability of the ICEO approach.

\subsection{Approximating Optimal Solution Mappings}
\label{sec: approx-mapping}
In this section, we propose a general methodology for approximating the potentially non-differentiable optimal solution mapping. In Appendix \ref{sec: approximation-polynomials}, In we provide two examples of using polynomial functions to approximate the optimal solution mapping.

As stated in the previous section, the major computational difficulty in solving the ICEO training problem \eqref{program:origin} in practice is the non-differentiability of the mapping $w_\rho(\cdot)$. 
To overcome this difficulty, for any given $\rho$, we approximate the function $w_\rho(\cdot)$ with a differentiable function $\tilde{w}_\rho(\cdot):\Delta_K \rightarrow S$. Then instead of \eqref{program:origin}, we solve the following problem: 
\begin{align}
\label{program: approx-regularized}
\tag{Approx-ICEO-$\rho$}
    \min_{f\in \mathcal{H}} \quad & \frac{1}{n} \sum_{i = 1}^n c(w_i, \xi_i) \\
    \mathrm{s.t.} \quad & w_i =  \tilde{w}_{\rho}(f(x_i)) \nonumber
\end{align}
To construct such an approximation $\tilde{w}_\rho(\cdot)$, we rely on the ability to evaluate the optimal solution mapping $w_\rho(p)$ for any given $p \in \Delta_K$, as stated in Section \ref{sec: framework}. We can then generate a sequence of samples $(p_i, w_\rho(p_i))$ and build an approximation function $\tilde{w}_\rho(\cdot)$ using any class of continuous functions with enough representation power, such as polynomial functions or neural networks.

We consider two generic types of approximation schemes for building the mapping $\tilde{w}_\rho(\cdot)$: {\em (i)} uniform approximations, and {\em (ii)} high-probability approximations. Uniform approximation schemes satisfy a uniform error bound, as formalized below in Assumption \ref{assump: uniform-err-bd}, and can be achieved by an interpolation method such as the Bernstein polynomial method as described in Section \ref{sec: bernstein}.
Note that, for each $j = 1, \ldots, K$ and $p \in \Delta_K$, we use the notation $w_{\rho,j}(p)$ and $\tilde{w}_{\rho,j}(p)$ to refer to the $j^{\text{th}}$ coordinates of $w_{\rho}(p)$ and $\tilde{w}_{\rho}(p)$, respectively.

\begin{assumption}[Uniform Error Bound]
\label{assump: uniform-err-bd}
For each $j = 1, \ldots, K$, there exists a constant $\calE^\mathrm{unif}_j \geq 0$ such that the approximate optimal solution mapping $\tilde{w}_\rho(\cdot) : \Delta_K \to S$ satisfies:
\begin{equation*}
|w_{\rho,j}(p) -\tilde{w}_{\rho,j}(p)| ~\leq~ \calE^\mathrm{unif}_j,    \qquad \forall p\in \Delta_{K}.
\end{equation*}
\end{assumption}
The uniform error bound in Assumption \ref{assump: uniform-err-bd} provides guarantees for the approximation error over all probability vectors from the simplex $\Delta_K$. There are two main drawbacks that apply to all known approaches for achieving a uniform error bound. First, achieving a tight uniform error bound requires exact or near-exact computations of the optimal solution mapping $w_\rho(p)$ for all $p \in \Delta_K$. In practice, we may only have an approximate optimal solution mapping available. Second, the sample size required by a method that achieves Assumption \ref{assump: uniform-err-bd}, for example, an interpolations scheme, may be prohibitively large. For these reasons we are motivated to consider a high-probability error bound, which would hold for the more realistic approach of using a regression method, possibly with noise in the output of $w_\rho(\cdot)$, to fit the approximate optimal solution mapping. We consider a generic approach that uses a hypothesis class $\mathcal{G}$ for the approximate optimal solution mappings. Assumption \ref{assump: high-prob-err-bd} below formalizes our high-probability error bound, which holds for a wide range of regression methods including, for example, the polynomial kernel regression method considered in Section \ref{sec: krr}. In Assumption \ref{assump: high-prob-err-bd}, we work with a \emph{reference distribution} $\calD_p$ on $\Delta_K$ that we use to generate samples $\{p_i\}_{i=1}^m$ to feed into a regression method. In addition, for any $f \in \calH$, we later use the notation $\calD_{f(x)}$ to refer to the distribution on $\Delta_K$ induced by the marginal distribution $\calD_x$ of $x \in \calX$. 

\begin{assumption}[High-probability Error Bound]
\label{assump: high-prob-err-bd}
Let $\mathcal{G}$ be a family of candidate approximate optimal solution mappings whereby $\tilde{w}_\rho(\cdot) : \Delta_K \to S$ for all $\tilde{w}_\rho(\cdot) \in \mathcal{G}$. 
For each $j = 1, \ldots, K$, there exists a function $\calE^{\mathrm{prob}}_j(\cdot, \cdot; \mathcal{G}) : \mathbb{N} \times [0, 1) \to [0, \infty)$ such that, for any distribution $\calD_p$ on $\Delta_K$ and for any $\delta \in (0, 1]$, with probability at least $1 - \delta$ over $m$ independent samples drawn from $\calD_p$ with empirical distribution $\hat{\calD}^m_p$, it holds for all $\tilde{w}_\rho(\cdot) \in \mathcal{G}$ that:
\begin{equation*}
    \label{eq: high-prob-approx-bd}
    \left|\bbE_{\calD_p}[|w_{\rho,j}(p) -\tilde{w}_{\rho,j}(p)|] - \bbE_{\hat{\calD}^m_p}[|w_{\rho,j}(p) -\tilde{w}_{\rho,j}(p)|]\right| ~\leq~ \calE^{\mathrm{prob}}_j(m, \delta; \mathcal{G}).
\end{equation*}
\end{assumption}

When using an approximate optimal solution mapping with either a uniform or a high-probability error bound guarantee, a natural question is:  do the performance guarantees of the ICEO approach developed in Section \ref{sec: general-guarantee} extend to problem \eqref{program: approx-regularized}? We now answer this question affirmatively by extending the generalization bounds of Theorem \ref{thm:finite-sample-bd} to situations with approximate mappings satisfying either Assumption \ref{assump: uniform-err-bd} or Assumption \ref{assump: high-prob-err-bd}. We make an implicit assumption that, after solving problem \eqref{program: approx-regularized}, the decision-maker uses the correct optimal solution mapping $w_\rho(\cdot)$ to make decisions. Therefore, the left hand side of our bounds involve the true risk $R(w_{\rho} \circ f)$ with the correct mapping while the right hand sides involve the empirical risk $\hat{R}_n(\tilde{w}_{\rho} \circ f)$ with the approximation (and the reguarlized version thereof).
\begin{theorem}
\label{thm: gen-bd-approx} 
Suppose Assumption \ref{assump:c-lip-phi-constant} holds and that the hypothesis class $\calH$ has bounded multi-variate Rademacher complexity $\mathfrak{R}_n(\mathcal{H})$.
Then, for any $\delta \in (0,1]$ and $\rho_n > 0$, we have the following:
\begin{itemize}
    \item[(i)] If the approximate optimal solution mapping $\tilde{w}_\rho(\cdot)$ satisfies the uniform error bound as stated in Assumption \ref{assump: uniform-err-bd}, then with probability at least $1 - \delta$ over i.i.d. data $\{(x_i, \xi_i)\}_{i=1}^n$ drawn from the distribution $\calD$, the following inequalities hold for all $f \in \calH$:
\begin{align}
       R(w_{\rho_n}\circ f) &\leq \hat{R}_n(\tilde{w}_{\rho_n}\circ f ) +
        \frac{\sqrt{2}L_c^2}{\rho_n}\mathfrak{R}_n(\mathcal{H})
        +\bar{c} \sqrt{\frac{\log(\frac{1}{\delta})}{2n}}
    + L_c\sum_{j=1}^d \calE^\mathrm{unif}_j  \label{eq: approx-general-bd-unif1}
    \end{align}
    \item[(ii)] If the approximate optimal solution mapping $\tilde{w}_\rho(\cdot)$ comes from a family $\mathcal{G}$ satisfying the high probability error bound as stated in Assumption \ref{assump: high-prob-err-bd}, then with probability at least $1 - \delta$ over i.i.d. data $\{(x_i, \xi_i)\}_{i=1}^n$ drawn from the distribution $\calD$ and over $m$ independent samples $\{p_i\}_{i=1}^m$ drawn from a reference distribution $\calD_p$ on $\Delta_K$, the following inequalities hold for all $f \in \calH$:
    {\small
    \begin{align}
         R(w_{\rho_n} \circ  f)& \leq  \hat{R}_n(\tilde{w}_{\rho_n} \circ f) + L_c \sum_{j=1}^d\left[ \frac{1}{m} \sum_{i=1}^m  |w_{{\rho_n},j}(p_i) - \tilde{w}_{{\rho_n},j}(p_i)|+ \calE^{\mathrm{prob}}_j(n, \delta/2d; \mathcal{G}) + \calE^{\mathrm{prob}}_j(m, \delta/2d; \mathcal{G}) \right] \nonumber\\
        & \qquad +\frac{\sqrt{2}L_c^2}{\rho_n}\mathfrak{R}_n(\mathcal{H})
   +\mathrm{diam}(S)L_c \mathrm{TV}(\calD_{f(x)}, \calD_{p}) + \bar{c} \sqrt{\frac{\log(\frac{1}{\delta})}{2n}}
      \label{eq: approx-general-bd-prob1}
    \end{align}
    }%
\end{itemize}
\end{theorem}

\section{Numerical Experiments}\label{sec:experiments}
In this section, we demonstrate the numerical performance of our proposed ICEO framework using synthetic data. We first summarize the benchmark methods that we adopted for comparison:
\begin{enumerate}
    \item Sample average approximation (SAA). In this benchmark, the decision-maker simply ignores the contextual features and minimizes the average of cost functions using the empirical distribution of the observations of the random parameter.
     \item The two-step estimate-then-optimize (ETO) method is based on cross-entropy loss (ETO-Entropy). In this benchmark, we estimate the hypothesis $f \in \calH$ using the cross-entropy loss function (a standard loss function for multi-class classification) instead of the downstream optimization goal. 
     \item The prescriptive method (PRES) proposed by \cite{bertsimas2020predictive}. We consider the KNN-based (PRES-KNN) and kernel-based (PRES-Kernel) variants.
\end{enumerate}
Moreover, we consider two different types of ICEO methods. 
\begin{enumerate}
\item Vanilla ICEO (ICEO). ICEO method that solves \eqref{program:origin}.
\item The ICEO method regularized by cross-entropy loss (ICEO-Entropy). This is the ICEO method incorporating an additional cross-entropy loss term as a regularization component in conjunction with the objective function described in \eqref{program:origin}. Adding the cross-entropy term as a regularization component aligns with practical intuition provided by existing literature \citep{kao2012directed, elmachtoub2022smart}.
\end{enumerate}

\paragraph{Data Generation Process.} The synthetic data is generated in the following manner. The features $x_i \in \mathbb{R}^p$ are generated independently following the multi-variate Gaussian distribution $x_i\sim N(0, MI_p)$ for some constant $M > 0$ and where $I_p$ is an identity matrix. Then, given $K$ scenarios $\Xi = \{\tilde{z}_1, \dots, \tilde{z}_K\}$, the corresponding conditional probability vector is generated according to a randomly initialized neural network, denoted as $f_{\text{NN}}^{\ast}$, composed with the softmax function. The softmax function is implemented by adding a softmax layer as the output layer. Subsequently, $\xi_i$ takes on the value of $\tilde{z}_k$ with a probability of $p^\ast_k(x) = \mathrm{soft}\circ f_{\text{NN}}^{\ast}$ for all $k = 1, \dots, K$.

\paragraph{Optimal Solution Mapping Approximation.} The optimal oracle is approximated using neural networks in the experiment. We first generate a data set $\{(p_i, w_i)\}_{i=1}^m$ by uniformly sampling $p_i$ from the simplex $\Delta_K$ and then generating $w_i := w_\rho(p_i)$. Then we train a neural network with one hidden layer to approximate the oracle. The neural network is trained with respect to the mean absolute percentage error (MAPE) loss. 

\paragraph{ICEO Hypothesis Learning.} In this experiment, we consider the hypothesis class $\calH := \mathrm{soft} \circ \tilde{\calH}$ where $\tilde{\calH}$ represents a neural network. If $\tilde{\calH}$ contains the true hypothesis $f_{\text{NN}}^{\ast}$, then $\mathcal{H}$ is well-specified. However, if $\tilde{\calH}$ only includes neural networks with fewer layers or hidden nodes, it indicates the case of model misspecification. For both cases of well-specification and misspecification, we employ the Adam optimization algorithm (\cite{kingma2014adam}) to solve \eqref{program: approx-regularized} and learn the hypothesis.

\subsection{Multi-item Newsvendor Problem}
\label{sec: numerical-newsvendor}
We consider the multi-item newsvendor problem, as in Example \ref{example:newsvendor}, with synthetic data. We consider $d=2$, which is the case where the newsvendor jointly decides the order quantities of two products with an overall budget of $50$. The decision variable $w\in \mathbb{R}^2$ and random demand $\xi \in \mathbb{R}^2$ are both two-dimensional, corresponding to the order quantity and demand of the two products. The newsvendor aims to minimize the total inventory cost as formulated in (\ref{eq: newsvendor-obj}), with unit overstock costs $h_1$ and $h_2$ set to $1$ and $1.3$ and unit stockout cost $b_1$ and $b_2$ set to $9$ and $8$ for the two products, respectively.

\paragraph{Results and Comparisons with Benchmarks.}
In this experiment, we vary the number of scenarios $K \in {5, 10, 15}$. The realizations for each scenario, ${\tilde{z}_1, \dots, \tilde{z}_K}$, are generated randomly. Further details on the scenario generation process can be found in Appendix \ref{appendix: exp}. We set the regularization coefficient $\rho=0.01$ and consider multiple training set sizes $n\in \{100, 300, 500, 700 \}$. For every value of $n$, we run 25 simulations. We use a validation set to tune the hyper-parameters for PRES-KNN, PRES-Kernel, ETO-Entropy and the ICEO method. 
To evaluate out-of-sample performances of all these methods, we generate a test set including 1000 samples in each simulation. We evaluate performance using the newsvendor cost \eqref{eq: newsvendor-obj}.

Figure \ref{fig:nv-iid-plots} illustrates the performance of the ICEO methods and the non-parametric benchmarks across different numbers of scenarios.  Each sub-figure demonstrates that the ICEO-Entropy method consistently outperforms all benchmarks, regardless of the training set sizes and numbers of scenarios.  Although the vanilla ICEO method sometimes falls short of the benchmarks except for SAA, incorporating the cross-entropy loss in ICEO leads to the superior performance of ICEO-Entropy, which outperforms all benchmarks, even the ETO entropy, in every scenario. A comparison between vanilla ICEO and ICEO-Entropy reveals that ICEO-Entropy outperforms the former. The possible reason might be that incorporating the oracle in the ICEO objective induces a higher level of non-convexity and increases the likelihood of converging to a poor local minimum since the ICEO objective is non-convex in the predicted distribution. By adding a small component of cross-entropy loss, the gradient-based method suffers less from non-convexity while still being influenced by the ICEO objective. Despite this computational issue, the ICEO objective demonstrates the advantages of considering the ultimate optimization goal, as evident when comparing ICEO-Entropy with ETO-Entropy. Moreover, when compared with non-parametric prescriptive methods, the superior performances of the two ICEO methods and the ETO-Entropy method highlight the benefit of modeling the underlying conditional distribution.
\begin{figure}[htbp]
    \centering
    \begin{subfigure}{0.48\textwidth}
        \centering
        \includegraphics[width=\textwidth]{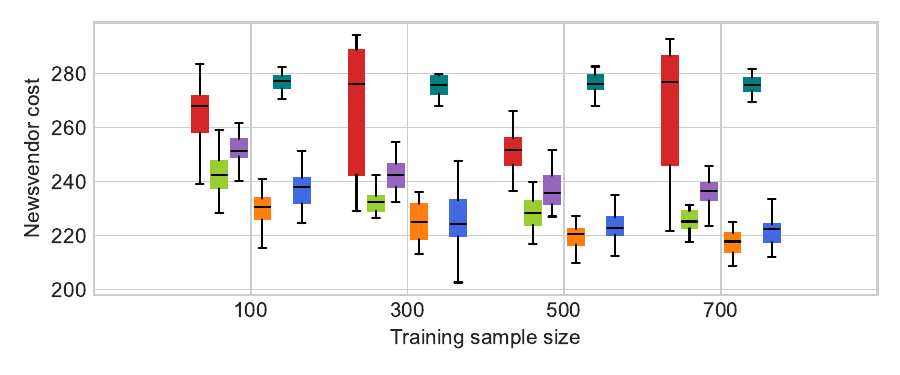}
        \caption{K = 5}
         \label{fig:iid-plot-k5}
    \end{subfigure}
    \begin{subfigure}{0.48\textwidth}
        \centering
        \includegraphics[width=\textwidth]{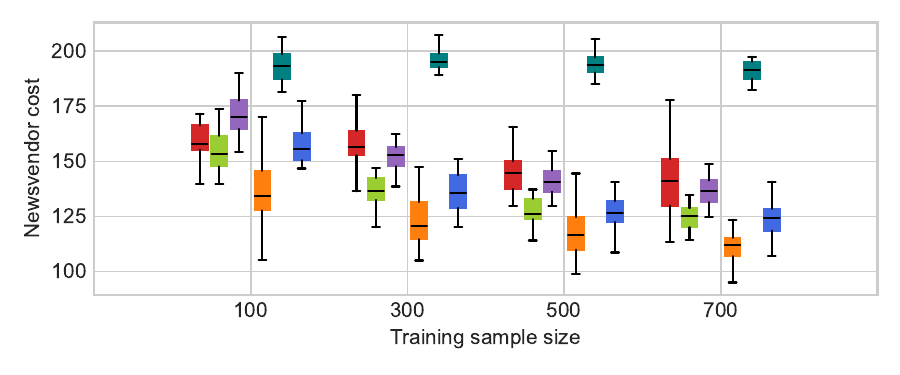}
        \caption{ K = 10}
         \label{fig:iid-plot-k10}
    \end{subfigure}

    \begin{subfigure}[c]{0.48\textwidth}
\includegraphics[width=\textwidth]{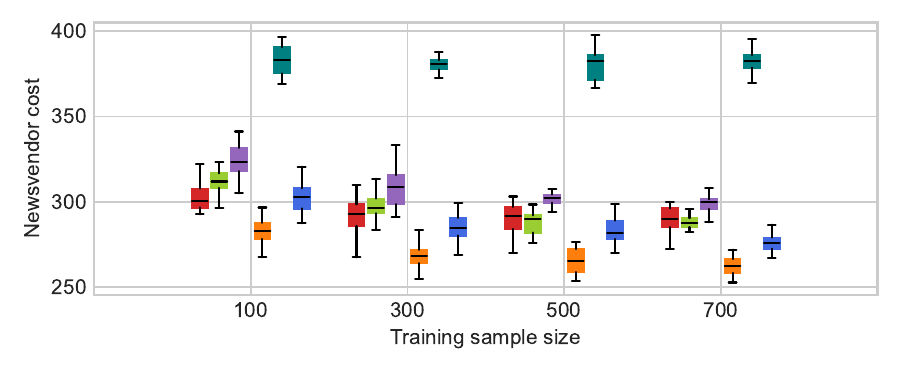}
        \caption{ K = 15}
         \label{fig:iid-plot-k15}
    \end{subfigure} 
  \begin{subfigure}[t]{0.39\textwidth}
\includegraphics[width=0.3\textwidth]{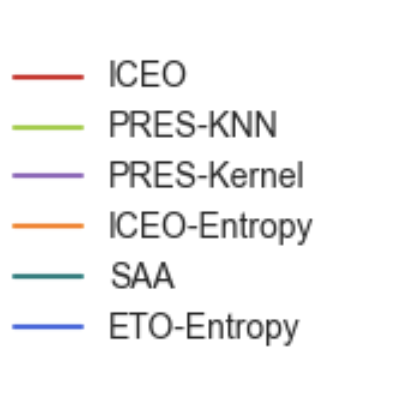}
    \end{subfigure}
    \caption{Comparison of ICEO with benchmark methods for multi-item newsvendor problem.}
    \label{fig:nv-iid-plots}
\end{figure}

\paragraph{Results on Model Misspecification}
We then examine the performance of ICEO and benchmark methods in the case of model misspecification. When the model is properly specified, the data generation neural network $f_{NN}^\ast$ consists of two hidden layers, whereas the hypothesis class $\mathcal{H}$ consists of linear models. Since the ICEO methods and ETO-Entropy are the only approaches that utilize this hypothesis class to model the underlying conditional distribution, we compare the performance of these three methods to study the effect of model misspecification. To evaluate their effectiveness, we utilize the newsvendor cost on a test set comprising 1000 samples in each simulation.
Figure \ref{fig:mismatch} illustrates the performance of vanilla ICEO and ICEO-Entropy in comparison to the two-step ETO-Entropy method. As we can see, under model misspecification, both ICEO methods consistently outperform the two-step ETO-Entropy approach.  This finding demonstrates the advantage of considering the ultimate optimization goal under model misspecification while estimating the conditional distribution.
\begin{figure}[ht]
\centering
  \includegraphics[width=0.6\textwidth]{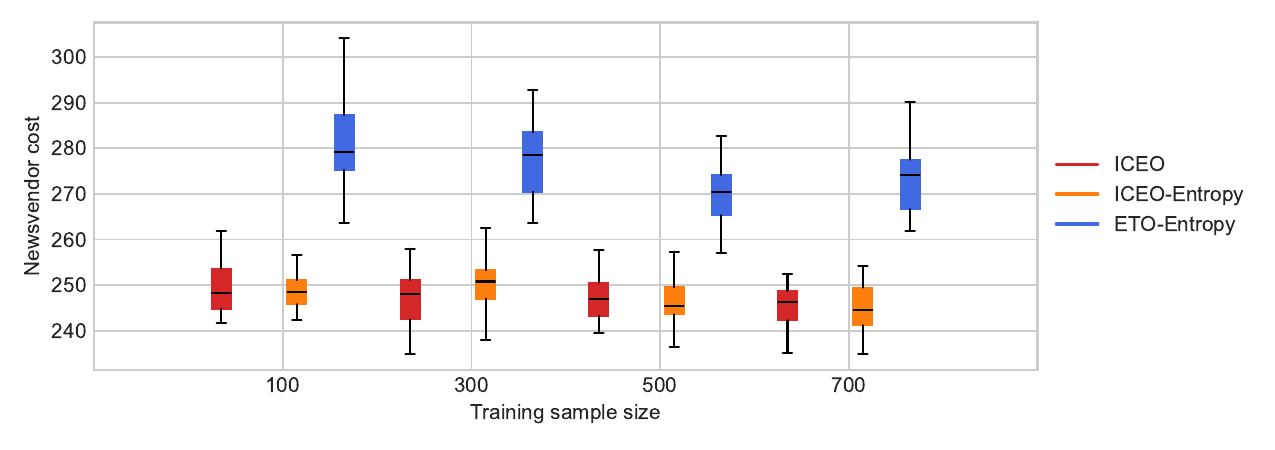} 
\caption{Comparison between ICEO and ETO-Entropy under model misspecification on the multi-item newsvendor problem.}
\label{fig:mismatch}
\end{figure}

\subsection{Quadratic Cost Network Flow Problem}
\label{sec: numerical-network}
In this part, we consider the minimum cost network flow problem. The problem formulation is as stated in Example \ref{example: networkflow}, where $g_d(w_d, \xi_d) = c_d(w_d-\xi_d)^2$. We consider a simple network demonstrated in Figure \ref{fig:flow-graph}, where there are two source nodes 1 and 2, and two sink nodes, 3 and 4. The amount of flow that sources out of each of the source nodes 1 and 2 must be no less than a threshold equal to $10$. Similarly, the amount of flow that goes into each of the sink nodes 3 and 4 must be at least 10. We let $w_1, w_2, w_3, w_4$ denote the amount of flow on arcs $(1,3), (1,4), (2,3), (2,4)$, respectively.
\begin{figure}[ht]
\centering
\includegraphics[width=0.2\textwidth]{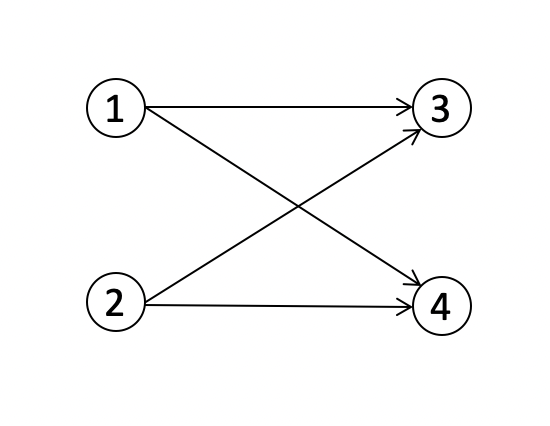} 
\caption{Network graph}
\label{fig:flow-graph}
\end{figure}
\paragraph{Results and Comparisons with Benchmarks.} In this experiment, we consider $K \in \{5, 10, 15\}$. The realizations of each scenario, $\tilde{z}_1, \dots, \tilde{z}_K$, are generated randomly. Again, more details regarding the scenario generation process can be found in Appendix \ref{appendix: exp}.
Furthermore, the weights of each arc $c_1, c_2, c_3, c_4$ take the values of $1,3,2,2$ respectively. The regularization coefficient $\rho = 0.01$. As in Section \ref{sec: numerical-newsvendor}, we consider multiple training set sizes $n\in \{100, 300, 500, 700\}$, we run 25 simulations for each sample size, and the test set includes 1000 samples in each simulation. To tune the hyper-parameters for ICEO, ETO-Entropy, and the two prescriptive methods, PRES-KNN and PRES-Kernel, we use a validation set including 1000 samples.
\begin{figure}[ht]
    \centering
    \begin{subfigure}{0.48\textwidth}
        \centering
\includegraphics[width=\textwidth]{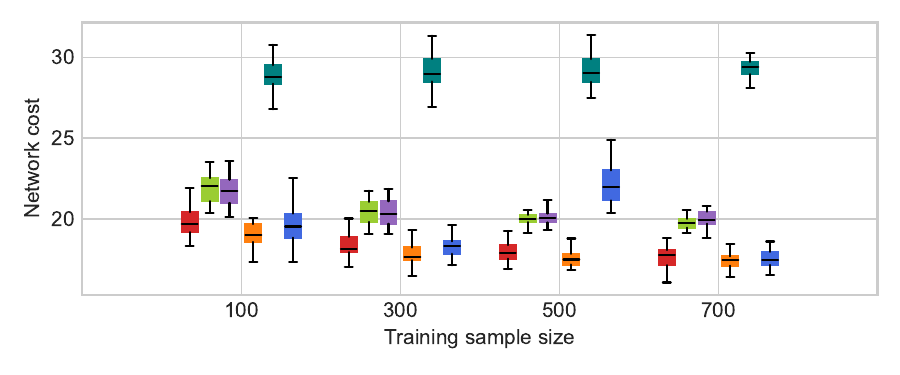}
        \caption{K = 5}
         \label{fig:nf-iid-plot-k5}
    \end{subfigure}
    \begin{subfigure}{0.48\textwidth}
        \centering
\includegraphics[width=\textwidth]{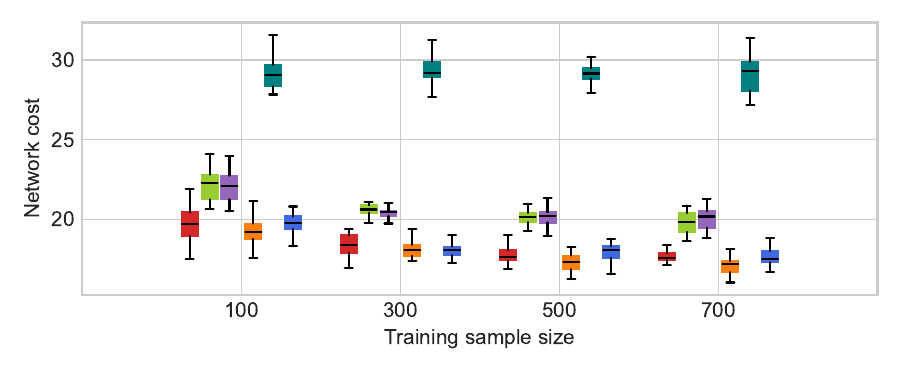}
        \caption{K= 10}
         \label{fig:nf-iid-plot-k10}
    \end{subfigure}
    \begin{subfigure}[c]{0.45\textwidth}
\includegraphics[width=\textwidth]{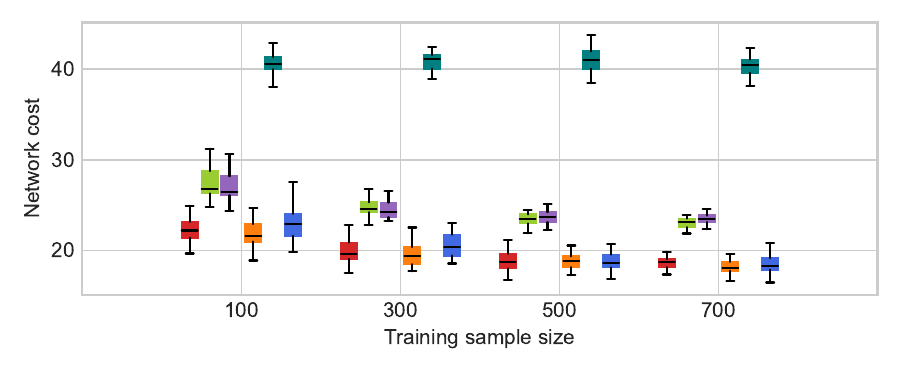}
        \caption{K = 15}
         \label{fig:nf-iid-plot-k15}
    \end{subfigure} 
  \begin{subfigure}[t]{0.39\textwidth}
\includegraphics[width=0.3\textwidth]{figures/legend.png}
    \end{subfigure}
    \caption{Comparison of ICEO with benchmark methods on the minimum cost network flow problem.}
    \label{fig:nf-iid-plots}
\end{figure}

Figure \ref{fig:nf-iid-plots} compare the test-set performance of the ICEO method and benchmarks across different numbers of scenarios and sample sizes. Each sub-figure demonstrates that either the vanilla ICEO method or the ICEO-Entropy method outperforms all benchmarks in this experiment. Compared to the best benchmark (PRES-KNN), there is a vague trend that the advantage of ICEO methods is more significant when the sample size is limited.

\paragraph{Results on Model Misspecification.} As in Section \ref{sec: numerical-newsvendor}, we investigate the case of model misspecification. The model misspecification is again introduced by having the data generation neural network $f^*_{NN}$ consist of two hidden layers, whereas the hypothesis class $\mathcal{H}$ consists of linear models. We only compare the two ICEO methods and ETO-Entropy because these are the only approaches that utilize this hypothesis class to model the underlying conditional distribution. Figure \ref{fig:mismatch} illustrates the performance of vanilla ICEO, ICEO-Entropy, and ETO-Entropy methods. It is shown that again, both vanilla ICEO and ICEO-Entropy consistently outperform ETO-Entropy. Moreover, one may observe a possible trend that this advantage of ICEO methods is stronger when there are fewer training samples. This finding again verifies the advantage of considering the ultimate optimization goal while estimating the conditional distribution.

\begin{figure}[ht]
\centering
  \includegraphics[width=0.65\textwidth]{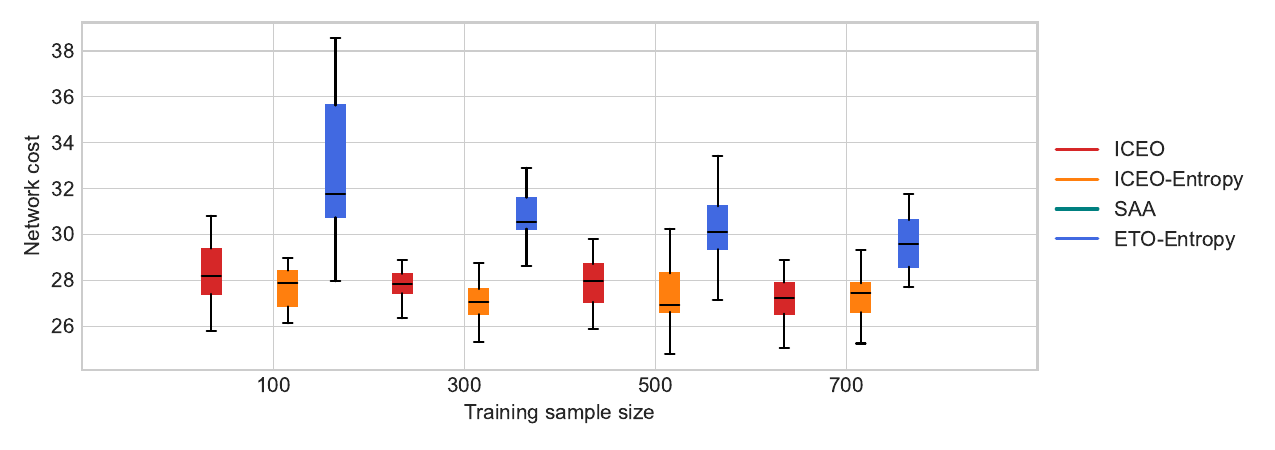} 
\caption{Comparison between ICEO and ETO-Entropy under model misspecification on the minimum cost network flow problem.}
\label{fig:mismatch-nf}
\end{figure}

\section{Conclusion}
In this paper, we propose a new framework for estimating the underlying conditional distribution in contextual stochastic optimization. The proposed ICEO framework uses a flexible hypothesis class to learn by incorporating the downstream optimization goal and applies readily to the case where the random parameter is a discrete random variable and the nominal optimization problem is convex. %
We then prove that the ICEO method is asymptotically consistent and provide finite-sample analysis in the form of generalization bounds. Moreover, we investigate the non-differentiability of the regularized optimal solution oracle which often leads to computational difficulties in calculating the gradients and poor local minima that are hard to escape. We address this issue by approximating the regularized oracle using differentiable functions. We then provide approximation error bounds and the corresponding generalization bounds when using the approximated oracle. 
Among others, a natural direction for future research is to move further beyond the assumption of finite suport for the random parameter by considering modeling extensions and/or theory.

\ACKNOWLEDGMENT{PG acknowledges the support of NSF AI Institute for Advances
in Optimization Award 211253.}

\bibliographystyle{informs2014} %

\bibliography{reference}

\begin{thebibliography}{73}
\providecommand{\natexlab}[1]{#1}
\providecommand{\url}[1]{\texttt{#1}}
\providecommand{\urlprefix}{URL }

\bibitem[{Agrawal et~al.(2019)Agrawal, Amos, Barratt, Boyd, Diamond,
  \protect\BIBand{} Kolter}]{agrawal2019differentiable}
Agrawal A, Amos B, Barratt S, Boyd S, Diamond S, Kolter JZ (2019)
  Differentiable convex optimization layers. \emph{Advances in Neural
  Information Processing Systems}, 9558--9570.

\bibitem[{Ahmadi \protect\BIBand{} Shanbhag(2014)}]{ahmadi2014data}
Ahmadi H, Shanbhag UV (2014) Data-driven first-order methods for misspecified
  convex optimization problems: Global convergence and rate estimates.
  \emph{53rd IEEE Conference on Decision and Control}, 4228--4233 (IEEE).

\bibitem[{Ahuja et~al.(1988)Ahuja, Magnanti, \protect\BIBand{}
  Orlin}]{ahuja1988network}
Ahuja RK, Magnanti TL, Orlin JB (1988) \emph{Network flows} (Cambridge, Mass.:
  Alfred P. Sloan School of Management, Massachusetts).

\bibitem[{Amos \protect\BIBand{} Kolter(2017)}]{amos2017optnet}
Amos B, Kolter JZ (2017) Optnet: Differentiable optimization as a layer in
  neural networks. \emph{International Conference on Machine Learning},
  136--145 (PMLR).

\bibitem[{Balghiti et~al.(2019)Balghiti, Elmachtoub, Grigas, \protect\BIBand{}
  Tewari}]{balghiti2019generalization}
Balghiti OE, Elmachtoub AN, Grigas P, Tewari A (2019) Generalization bounds in
  the predict-then-optimize framework. \emph{arXiv preprint arXiv:1905.11488} .

\bibitem[{Ban \protect\BIBand{} Rudin(2019)}]{ban2019big}
Ban GY, Rudin C (2019) The big data newsvendor: Practical insights from machine
  learning. \emph{Operations Research} 67(1):90--108.

\bibitem[{Bartlett \protect\BIBand{} Mendelson(2002)}]{bartlett2002rademacher}
Bartlett PL, Mendelson S (2002) Rademacher and gaussian complexities: Risk
  bounds and structural results. \emph{Journal of Machine Learning Research}
  3(Nov):463--482.

\bibitem[{Bennouna \protect\BIBand{} Van~Parys(2021)}]{bennouna2021learning}
Bennouna M, Van~Parys BP (2021) Learning and decision-making with data: Optimal
  formulations and phase transitions. \emph{arXiv preprint arXiv:2109.06911} .

\bibitem[{Berge(1877)}]{berge1877topological}
Berge C (1877) \emph{Topological spaces: Including a treatment of multi-valued
  functions, vector spaces and convexity} (Oliver \& Boyd).

\bibitem[{Berthet et~al.(2020)Berthet, Blondel, Teboul, Cuturi, Vert,
  \protect\BIBand{} Bach}]{berthet2020learning}
Berthet Q, Blondel M, Teboul O, Cuturi M, Vert JP, Bach F (2020) Learning with
  differentiable perturbed optimizers. \emph{arXiv preprint arXiv:2002.08676} .

\bibitem[{Bertsimas et~al.(2019)Bertsimas, Dunn, \protect\BIBand{}
  Mundru}]{bertsimas2019optimal}
Bertsimas D, Dunn J, Mundru N (2019) Optimal prescriptive trees. \emph{INFORMS
  Journal on Optimization} 1(2):164--183.

\bibitem[{Bertsimas et~al.(2018{\natexlab{a}})Bertsimas, Gupta,
  \protect\BIBand{} Kallus}]{bertsimas2018data}
Bertsimas D, Gupta V, Kallus N (2018{\natexlab{a}}) Data-driven robust
  optimization. \emph{Mathematical Programming} 167:235--292.

\bibitem[{Bertsimas et~al.(2018{\natexlab{b}})Bertsimas, Gupta,
  \protect\BIBand{} Kallus}]{bertsimas2018robust}
Bertsimas D, Gupta V, Kallus N (2018{\natexlab{b}}) Robust sample average
  approximation. \emph{Mathematical Programming} 171:217--282.

\bibitem[{Bertsimas \protect\BIBand{} Kallus(2020)}]{bertsimas2020predictive}
Bertsimas D, Kallus N (2020) From predictive to prescriptive analytics.
  \emph{Management Science} 66(3):1025--1044.

\bibitem[{Bertsimas \protect\BIBand{} McCord(2019)}]{bertsimas2019predictions}
Bertsimas D, McCord C (2019) From predictions to prescriptions in multistage
  optimization problems. \emph{arXiv preprint arXiv:1904.11637} .

\bibitem[{Blanchet et~al.(2019)Blanchet, Kang, \protect\BIBand{}
  Murthy}]{blanchet2019robust}
Blanchet J, Kang Y, Murthy K (2019) Robust wasserstein profile inference and
  applications to machine learning. \emph{Journal of Applied Probability}
  56(3):830--857.

\bibitem[{Chehrazi \protect\BIBand{} Weber(2010)}]{chehrazi2010monotone}
Chehrazi N, Weber TA (2010) Monotone approximation of decision problems.
  \emph{Operations Research} 58(4-part-2):1158--1177.

\bibitem[{Chu et~al.(2008)Chu, Shanthikumar, \protect\BIBand{}
  Shen}]{chu2008solving}
Chu LY, Shanthikumar JG, Shen ZJM (2008) Solving operational statistics via a
  bayesian analysis. \emph{Operations Research Letters} 36(1):110--116.

\bibitem[{Chung et~al.(2022)Chung, Rostami, Bastani, \protect\BIBand{}
  Bastani}]{chung2022decision}
Chung TH, Rostami V, Bastani H, Bastani O (2022) Decision-aware learning for
  optimizing health supply chains. \emph{arXiv preprint arXiv:2211.08507} .

\bibitem[{Cristian et~al.(2022)Cristian, Harsha, Perakis, Quanz,
  \protect\BIBand{} Spantidakis}]{cristian2022endto}
Cristian R, Harsha P, Perakis G, Quanz BL, Spantidakis I (2022) End-to-end
  learning via constraint-enforcing approximators for linear programs with
  applications to supply chains.

\bibitem[{De~Klerk et~al.(2008)De~Klerk, Den~Hertog, \protect\BIBand{}
  Elabwabi}]{de2008complexity}
De~Klerk E, Den~Hertog D, Elabwabi G (2008) On the complexity of optimization
  over the standard simplex. \emph{European journal of operational research}
  191(3):773--785.

\bibitem[{Delage \protect\BIBand{} Ye(2010)}]{delage2010distributionally}
Delage E, Ye Y (2010) Distributionally robust optimization under moment
  uncertainty with application to data-driven problems. \emph{Operations
  research} 58(3):595--612.

\bibitem[{Deng \protect\BIBand{} Sen(2022)}]{deng2022predictive}
Deng Y, Sen S (2022) Predictive stochastic programming. \emph{Computational
  Management Science} 1--34.

\bibitem[{Donti et~al.(2017)Donti, Amos, \protect\BIBand{}
  Kolter}]{donti2017task}
Donti P, Amos B, Kolter JZ (2017) Task-based end-to-end model learning in
  stochastic optimization. \emph{Advances in Neural Information Processing
  Systems}, 5484--5494.

\bibitem[{Elmachtoub et~al.(2020)Elmachtoub, Liang, \protect\BIBand{}
  McNellis}]{elmachtoub2020decision}
Elmachtoub A, Liang JCN, McNellis R (2020) Decision trees for decision-making
  under the predict-then-optimize framework. \emph{International Conference on
  Machine Learning}, 2858--2867 (PMLR).

\bibitem[{Elmachtoub \protect\BIBand{} Grigas(2022)}]{elmachtoub2022smart}
Elmachtoub AN, Grigas P (2022) Smart “predict, then optimize”.
  \emph{Management Science} 68(1):9--26.

\bibitem[{Elmachtoub et~al.(2023)Elmachtoub, Lam, Zhang, \protect\BIBand{}
  Zhao}]{elmachtoub2023estimate}
Elmachtoub AN, Lam H, Zhang H, Zhao Y (2023) Estimate-then-optimize versus
  integrated-estimation-optimization: A stochastic dominance perspective.
  \emph{arXiv preprint arXiv:2304.06833} .

\bibitem[{Ferber et~al.(2020)Ferber, Wilder, Dilkina, \protect\BIBand{}
  Tambe}]{ferber2020mipaal}
Ferber A, Wilder B, Dilkina B, Tambe M (2020) Mipaal: Mixed integer program as
  a layer. \emph{Proceedings of the AAAI Conference on Artificial
  Intelligence}, volume~34, 1504--1511.

\bibitem[{Gao \protect\BIBand{} Pavel(2017)}]{gao2017properties}
Gao B, Pavel L (2017) On the properties of the softmax function with
  application in game theory and reinforcement learning. \emph{arXiv preprint
  arXiv:1704.00805} .

\bibitem[{Gao \protect\BIBand{} Kleywegt(2023)}]{gao2023distributionally}
Gao R, Kleywegt A (2023) Distributionally robust stochastic optimization with
  wasserstein distance. \emph{Mathematics of Operations Research}
  48(2):603--655.

\bibitem[{Gupta et~al.(2022)Gupta, Huang, \protect\BIBand{}
  Rusmevichientong}]{gupta2022debiasing}
Gupta V, Huang M, Rusmevichientong P (2022) Debiasing in-sample policy
  performance for small-data, large-scale optimization. \emph{Operations
  Research} .

\bibitem[{Gupta \protect\BIBand{} Kallus(2021)}]{gupta2021data}
Gupta V, Kallus N (2021) Data pooling in stochastic optimization.
  \emph{Management Science} .

\bibitem[{Gupta \protect\BIBand{} Rusmevichientong(2021)}]{gupta2021small}
Gupta V, Rusmevichientong P (2021) Small-data, large-scale linear optimization
  with uncertain objectives. \emph{Management Science} 67(1):220--241.

\bibitem[{Hannah et~al.(2010)Hannah, Powell, \protect\BIBand{}
  Blei}]{hannah2010nonparametric}
Hannah L, Powell W, Blei D (2010) Nonparametric density estimation for
  stochastic optimization with an observable state variable. \emph{Advances in
  Neural Information Processing Systems} 23.

\bibitem[{Ho \protect\BIBand{} Hanasusanto(2019)}]{ho2019data}
Ho CP, Hanasusanto GA (2019) On data-driven prescriptive analytics with side
  information: A regularized nadaraya-watson approach. Technical report,
  Technical report, March.

\bibitem[{Ho-Nguyen \protect\BIBand{}
  K{\i}l{\i}n{\c{c}}-Karzan(2019)}]{ho2019exploiting}
Ho-Nguyen N, K{\i}l{\i}n{\c{c}}-Karzan F (2019) Exploiting problem structure in
  optimization under uncertainty via online convex optimization.
  \emph{Mathematical Programming} 177(1):113--147.

\bibitem[{Ho-Nguyen \protect\BIBand{} K{\i}l{\i}n{\c{c}}-Karzan(2020)}]{horisk}
Ho-Nguyen N, K{\i}l{\i}n{\c{c}}-Karzan F (2020) Risk guarantees for end-to-end
  prediction and optimization processes. \emph{arXiv preprint arXiv:2012.15046}
  .

\bibitem[{Hong et~al.(2021)Hong, Huang, \protect\BIBand{}
  Lam}]{hong2021learning}
Hong LJ, Huang Z, Lam H (2021) Learning-based robust optimization: Procedures
  and statistical guarantees. \emph{Management Science} 67(6):3447--3467.

\bibitem[{Hu et~al.(2022)Hu, Kallus, \protect\BIBand{} Mao}]{hu2022fast}
Hu Y, Kallus N, Mao X (2022) Fast rates for contextual linear optimization.
  \emph{Management Science} 68(6):4236--4245.

\bibitem[{Jiang \protect\BIBand{} Shanbhag(2013)}]{jiang2013solution}
Jiang H, Shanbhag UV (2013) On the solution of stochastic optimization problems
  in imperfect information regimes. \emph{2013 Winter Simulations Conference
  (WSC)}, 821--832 (IEEE).

\bibitem[{Jiang \protect\BIBand{} Shanbhag(2016)}]{jiang2016solution}
Jiang H, Shanbhag UV (2016) On the solution of stochastic optimization and
  variational problems in imperfect information regimes. \emph{SIAM Journal on
  Optimization} 26(4):2394--2429.

\bibitem[{Kallus \protect\BIBand{} Mao(2020)}]{kallus2020stochastic}
Kallus N, Mao X (2020) Stochastic optimization forests. \emph{arXiv preprint
  arXiv:2008.07473} .

\bibitem[{Kannan et~al.(2020)Kannan, Bayraksan, \protect\BIBand{}
  Luedtke}]{kannan2020residuals}
Kannan R, Bayraksan G, Luedtke JR (2020) Residuals-based distributionally
  robust optimization with covariate information. \emph{arXiv preprint
  arXiv:2012.01088} .

\bibitem[{Kannan et~al.(2022)Kannan, Bayraksan, \protect\BIBand{}
  Luedtke}]{kannan2022data}
Kannan R, Bayraksan G, Luedtke JR (2022) Data-driven sample average
  approximation with covariate information. \emph{arXiv preprint
  arXiv:2207.13554} .

\bibitem[{Kao et~al.(2009)Kao, Roy, \protect\BIBand{} Yan}]{kao2009directed}
Kao Yh, Roy B, Yan X (2009) Directed regression. \emph{Advances in Neural
  Information Processing Systems} 22:889--897.

\bibitem[{Kao \protect\BIBand{} Van~Roy(2012)}]{kao2012directed}
Kao YH, Van~Roy B (2012) Directed time series regression for control.
  \emph{arXiv preprint arXiv:1206.6141} .

\bibitem[{Kingma \protect\BIBand{} Ba(2014)}]{kingma2014adam}
Kingma DP, Ba J (2014) Adam: A method for stochastic optimization. \emph{arXiv
  preprint arXiv:1412.6980} .

\bibitem[{Kotary et~al.(2021)Kotary, Fioretto, Van~Hentenryck,
  \protect\BIBand{} Wilder}]{kotary2021end}
Kotary J, Fioretto F, Van~Hentenryck P, Wilder B (2021) End-to-end constrained
  optimization learning: A survey. \emph{arXiv preprint arXiv:2103.16378} .

\bibitem[{Lasserre(2015)}]{lasserre2015introduction}
Lasserre JB (2015) \emph{An introduction to polynomial and semi-algebraic
  optimization}, volume~52 (Cambridge University Press).

\bibitem[{Liu \protect\BIBand{} Grigas(2021)}]{liu2021risk}
Liu H, Grigas P (2021) Risk bounds and calibration for a smart
  predict-then-optimize method. \emph{arXiv preprint arXiv:2108.08887} .

\bibitem[{Liu et~al.(2022)Liu, Li, \protect\BIBand{} Sen}]{liu2022coupled}
Liu J, Li G, Sen S (2022) Coupled learning enabled stochastic programming with
  endogenous uncertainty. \emph{Mathematics of Operations Research}
  47(2):1681--1705.

\bibitem[{Liyanage \protect\BIBand{}
  Shanthikumar(2005)}]{liyanage2005practical}
Liyanage LH, Shanthikumar JG (2005) A practical inventory control policy using
  operational statistics. \emph{Operations Research Letters} 33(4):341--348.

\bibitem[{Mandi \protect\BIBand{} Guns(2020)}]{mandi2020interior}
Mandi J, Guns T (2020) Interior point solving for lp-based prediction+
  optimisation. \emph{arXiv preprint arXiv:2010.13943} .

\bibitem[{Mandi et~al.(2020)Mandi, Stuckey, Guns et~al.}]{mandi2020smart}
Mandi J, Stuckey PJ, Guns T, et~al. (2020) Smart predict-and-optimize for hard
  combinatorial optimization problems. \emph{Proceedings of the AAAI Conference
  on Artificial Intelligence}, volume~34, 1603--1610.

\bibitem[{Maurer(2016)}]{maurer2016vector}
Maurer A (2016) A vector-contraction inequality for rademacher complexities.
  \emph{International Conference on Algorithmic Learning Theory}, 3--17
  (Springer).

\bibitem[{Nesterov(2003)}]{nesterov2003introductory}
Nesterov Y (2003) \emph{Introductory lectures on convex optimization: A basic
  course}, volume~87 (Springer Science \& Business Media).

\bibitem[{Pogan{\v{c}}i{\'c} et~al.(2019)Pogan{\v{c}}i{\'c}, Paulus, Musil,
  Martius, \protect\BIBand{} Rolinek}]{poganvcic2019differentiation}
Pogan{\v{c}}i{\'c} MV, Paulus A, Musil V, Martius G, Rolinek M (2019)
  Differentiation of blackbox combinatorial solvers. \emph{International
  Conference on Learning Representations}.

\bibitem[{Poursoltani et~al.(2023)Poursoltani, Delage, \protect\BIBand{}
  Georghiou}]{poursoltani2023robust}
Poursoltani M, Delage E, Georghiou A (2023) Robust data-driven prescriptiveness
  optimization. \emph{arXiv preprint arXiv:2306.05937} .

\bibitem[{Qi et~al.(2022)Qi, Cao, \protect\BIBand{}
  Shen}]{qi2022distributionally}
Qi M, Cao Y, Shen ZJ (2022) Distributionally robust conditional quantile
  prediction with fixed design. \emph{Management Science} 68(3):1639--1658.

\bibitem[{Qi \protect\BIBand{} Shen(2022)}]{qi2022integrating}
Qi M, Shen ZJ (2022) Integrating prediction/estimation and optimization with
  applications in operations management. \emph{Tutorials in Operations
  Research: Emerging and Impactful Topics in Operations}, 36--58 (INFORMS).

\bibitem[{Qi et~al.(2020)Qi, Shi, Qi, Ma, Yuan, Wu, \protect\BIBand{}
  Shen}]{qi2020practical}
Qi M, Shi Y, Qi Y, Ma C, Yuan R, Wu D, Shen ZJM (2020) A practical end-to-end
  inventory management model with deep learning. \emph{Available at SSRN
  3737780} .

\bibitem[{Ramamurthy et~al.(2012)Ramamurthy, George~Shanthikumar,
  \protect\BIBand{} Shen}]{ramamurthy2012inventory}
Ramamurthy V, George~Shanthikumar J, Shen ZJM (2012) Inventory policy with
  parametric demand: Operational statistics, linear correction, and regression.
  \emph{Production and Operations Management} 21(2):291--308.

\bibitem[{Sadana et~al.(2023)Sadana, Chenreddy, Delage, Forel, Frejinger,
  \protect\BIBand{} Vidal}]{sadana2023survey}
Sadana U, Chenreddy A, Delage E, Forel A, Frejinger E, Vidal T (2023) A survey
  of contextual optimization methods for decision making under uncertainty.
  \emph{arXiv preprint arXiv:2306.10374} .

\bibitem[{Sen(2018)}]{sen2018gentle}
Sen B (2018) A gentle introduction to empirical process theory and
  applications. \emph{Lecture Notes, Columbia University} 11:28--29.

\bibitem[{Sundaram et~al.(1996)}]{sundaram1996first}
Sundaram RK, et~al. (1996) \emph{A first course in optimization theory}
  (Cambridge university press).

\bibitem[{Vaart \protect\BIBand{} Wellner(2023)}]{vaart2023empirical}
Vaart Avd, Wellner JA (2023) Empirical processes. \emph{Weak Convergence and
  Empirical Processes: With Applications to Statistics}, 127--384 (Springer).

\bibitem[{Van~der Vaart(2000)}]{van2000asymptotic}
Van~der Vaart AW (2000) \emph{Asymptotic statistics}, volume~3 (Cambridge
  university press).

\bibitem[{Van~Parys et~al.(2021)Van~Parys, Esfahani, \protect\BIBand{}
  Kuhn}]{van2021data}
Van~Parys BP, Esfahani PM, Kuhn D (2021) From data to decisions:
  Distributionally robust optimization is optimal. \emph{Management Science}
  67(6):3387--3402.

\bibitem[{Wainwright(2019)}]{wainwright2019high}
Wainwright MJ (2019) \emph{High-dimensional statistics: A non-asymptotic
  viewpoint}, volume~48 (Cambridge University Press).

\bibitem[{Wang et~al.(2023)Wang, Becker, Van~Parys, \protect\BIBand{}
  Stellato}]{wang2023learning}
Wang I, Becker C, Van~Parys B, Stellato B (2023) Learning for robust
  optimization. \emph{arXiv preprint arXiv:2305.19225} .

\bibitem[{Wiesemann et~al.(2014)Wiesemann, Kuhn, \protect\BIBand{}
  Sim}]{wiesemann2014distributionally}
Wiesemann W, Kuhn D, Sim M (2014) Distributionally robust convex optimization.
  \emph{Operations research} 62(6):1358--1376.

\bibitem[{Wilder et~al.(2019{\natexlab{a}})Wilder, Dilkina, \protect\BIBand{}
  Tambe}]{wilder2019melding}
Wilder B, Dilkina B, Tambe M (2019{\natexlab{a}}) Melding the data-decisions
  pipeline: Decision-focused learning for combinatorial optimization.
  \emph{Proceedings of the AAAI Conference on Artificial Intelligence},
  volume~33, 1658--1665.

\bibitem[{Wilder et~al.(2019{\natexlab{b}})Wilder, Ewing, Dilkina,
  \protect\BIBand{} Tambe}]{wilder2019end}
Wilder B, Ewing E, Dilkina B, Tambe M (2019{\natexlab{b}}) End to end learning
  and optimization on graphs. \emph{arXiv preprint arXiv:1905.13732} .

\end{thebibliography}

\newpage
\begin{APPENDICES}
\section{Lemmas and Proofs for Section \ref{sec:consistency}}
\label{sec: consistency-proof-appendix}

\begin{lemma}
\label{lemma: opt-cont-func}
For any $\rho >0$, $w_\rho(\cdot)$ is a single-valued continuous function of $p\in \Delta_K$.
\end{lemma}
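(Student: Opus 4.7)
The plan is to show that the regularization makes the lower-level objective strongly convex in $w$, which forces uniqueness of the minimizer, and then invoke the upper semi-continuity already implicit in Lemma~\ref{lemma: opt-convex-set} (via Berge's Maximum Theorem) to upgrade single-valued upper semi-continuity to continuity.

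First I would define $g(w,f,x) := \sum_{k=1}^K f_k(x) c_k(w) + \tfrac{\rho}{2}\phi(w)$ and verify that for every fixed $(f,x)$ the map $w \mapsto g(w,f,x)$ is strongly convex on $S$: each $c_k$ is convex (given), $f_k(x) \geq 0$ since $f(x)$ is a probability vector, so the first sum is convex in $w$; adding the strongly convex regularizer $\tfrac{\rho}{2}\phi(w)$ (e.g.\ $\tfrac{\rho}{2}\|w\|_2^2$) preserves convexity and injects strong convexity with modulus at least $\rho$. Since $S$ is convex and compact, the minimizer over $S$ exists and strong convexity rules out two distinct minimizers, so $w_\rho(f,x)$ is a singleton. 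This establishes the single-valued claim.

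Next, for continuity, I would re-run the Maximum Theorem argument of Lemma~\ref{lemma: opt-convex-set} on the regularized objective. This requires joint continuity of $g$ in $(w,f,x)$, which follows from continuity of each $c_k$ and $\phi$ in $w$ together with continuity of the evaluation map $(f,x)\mapsto f(x)$ under the $\|\cdot\|_\infty$ topology on $\mathcal{H}$: for $(f_n,x_n)\to(f,x)$, we have $|f_n(x_n)-f(x)|\leq \|f_n-f\|_\infty + |f(x_n)-f(x)|$, and both terms vanish since elements of $\mathcal{H}$ are continuous in $x$ (e.g.\ the softmax-linear example is continuous). Hence $g$ is jointly continuous and the correspondence is upper semi-continuous. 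An upper semi-continuous correspondence that is single-valued is automatically a continuous function into $S$: if $(f_n,x_n)\to(f,x)$ and $w_\rho(f_n,x_n)\not\to w_\rho(f,x)$, then by compactness of $S$ we could extract a subsequence converging to some $w'\neq w_\rho(f,x)$, contradicting upper semi-continuity together with the singleton property.

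The main obstacle I anticipate is the topological bookkeeping around $\mathcal{H}$: we need to be sure that the $\|\cdot\|_\infty$ metric on the hypothesis class interacts properly with pointwise evaluation, and that the paper's implicit standing assumption that hypotheses are continuous in $x$ (needed for $f(x_n)\to f(x)$) is in force. The remaining ingredients — strong convexity, compactness of $S$, and Berge's Maximum Theorem — are standard and mirror the argument already used for Lemma~\ref{lemma: opt-convex-set}.
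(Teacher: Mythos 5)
Your proof is correct and takes essentially the same route as the paper: the paper's one-line argument simply cites part 2 of Theorem 9.17 of Sundaram (the Maximum Theorem under strict convexity), which packages your two steps --- strong convexity of the regularized objective forcing a unique minimizer, and upper semi-continuity of a singleton-valued correspondence into a compact set implying continuity --- into a single black box, and then composes with $p=f(x)$. Your write-up is more self-contained and is in fact more careful than the paper about the joint continuity of the evaluation map $(f,x)\mapsto f(x)$ under the $\|\cdot\|_\infty$ topology on $\mathcal{H}$, a point the paper glosses over.
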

\proof{Proof of Lemma \ref{lemma: opt-cont-func}}
From part 2 in Theorem 9.17 of \cite{sundaram1996first}, the mapping $w_\rho(p): \Delta_k \rightarrow S$ is a continuous function of $p$. %
\Halmos
\endproof
\begin{lemma}
\label{lemma:f-star-minimizer}
Suppose that Assumptions \ref{assump: specification} and \ref{assump: unique} hold, and let $w(\cdot) : \Delta_K \to S$ be an optimal solution mapping such that $w(p) \in W(p)$ for all $p\in \Delta_K$. Then, we have that:
(i) $f^\ast$ is the unique minimizer of $\min_{f\in \mathcal{H}} R(w \circ f; 0)$;
(ii) the optimal expected unregularized risk is $J^\ast :=\min_{f\in \mathcal{H}} R(w \circ f; 0)= R(w\circ f^\ast, 0)$, and the value of $J^\ast$ is the same for all valid optimal solution mappings $w(\cdot)$.
\end{lemma}

\proof{Proof of Lemma \ref{lemma:f-star-minimizer}}
We first prove part {\em (i)}. Suppose that there exists $\bar{f} \neq f^\ast$ in $\calH$ such that
\begin{equation*}
   \mathbb{E}_x \left[\sum_{k=1}^K p^\ast_k(x)c_k(w(\bar{f}(x))) \right]
     \leq \mathbb{E}_x \left[\sum_{k=1}^K p^\ast_k(x)c_k(w(f^*(x))) \right].
\end{equation*}
Since  $w(f^\ast(x)) \in W(f^\ast(x))$ and $f^\ast(x)$ is the true hypothesis, i.e., $f_k^\ast(x) = p_k^\ast(x)$, we have 
\begin{equation*}
   \sum_{k=1}^K p^\ast_k(x)c_k(w(\bar{f}(x))) \geq
     \sum_{k=1}^K p^\ast_k(x)c_k(w(f^*(x)))
\end{equation*}
for all $x\in \calX$. Combining the above two inequalities yields 
\begin{equation*}
   \sum_{k=1}^K p^\ast_k(x)c_k(w(\bar{f}(x))) =
     \sum_{k=1}^K p^\ast_k(x)c_k(w(f^*(x))),
\end{equation*}
almost surely for all $x\in \calX$. Hence, $w(\bar{f}(x)) \in W(f^\ast(x))$ almost surely, which contradicts Assumption \ref{assump: unique}.
Therefore, under Assumptions \ref{assump: specification} and \ref{assump: unique}, $f^*$ is the unique minimizer of $\min_{f\in \mathcal{H}} R(w \circ f; 0)$.

We now show {\em(ii)}. Let $w(\cdot), u(\cdot) : \Delta_K \to S$ be two valid optimal solution mappings such that $w(p) \in W(p)$ and $u(p) \in W(p)$ for all $p \in \Delta_K$. Thus, $w(f^\ast(x))\in W(f^\ast(x))$ and $u(f^\ast(x))\in W(f^\ast(x))$ for all $x$, which yields $\sum_{k=1}^K p^\ast_k(x)c_k(w(f^\ast(x)))$ =  $\sum_{k=1}^K p^\ast_k(x)c_k(u(f^\ast(x)))$ by the definition of $W(\cdot)$. Therefore, taking expectation with respect to $x$ yields $J^\ast = R(w\circ f^\ast, 0) = R(u\circ f^\ast, 0)$.
\Halmos
\endproof

\proof{Proof of Proposition \ref{prop: oracle-consistency}}
Let $p\in \Delta_K$ be given. We define the correspondence $g_p(\rho)$ by $g_p(\rho):= w_\rho(p)$ for $\rho > 0$ and $g_p(0):= W(p)$. Here, $g_p(\cdot)$ is a function that maps a value of $\rho \in [0, \infty)$ to a subset of $S$ (which is just a single point $w_\rho(p)$ when $\rho > 0$). Given the sequence $\{\rho_n> 0\}$ that converges to zero, define a corresponding sequence ${w_n}$ where $w_n = w_\rho(p) \in g_p(\rho_n)$.  If $\mathrm{dist}(w_{\rho_n}(p), W(p)) \not\rightarrow 0$ as $n \to \infty$, then there exists a constant $\epsilon >0$, such that for some subsequence $w_{n(m)}$ we have $\text{dist}(w_{n(m)}, W(p))> \epsilon$ for all $m$. By the Weierstrass Theorem and the compactness of $S$, assume without loss of generality that $w_{n(m)} \to \bar{w} \in S$ as $m \to \infty$. Due to the continuity of the objective function in \eqref{program:origin-lower-level_regularized} with respect to $\rho$, we apply the maximum theorem (\cite{berge1877topological}), which guarantees that $g_p(\rho)$ is upper hemicontinuous in $\rho$. Thus, since $\rho_{n(m)} \to 0$ as $m \to \infty$, we have that $\bar{w} \in g_p(0) = W(p)$ according to the definition of upper hemicontinuity.
This directly contradicts the claim that $\text{dist}(w_{n(m)}, W(p)) > \epsilon$ for all $m$.
\Halmos
\endproof

\begin{lemma}
\label{lemma: finite-cover-num}
Consider the class of functions $g : \mathcal{X}\times \Xi \to \mathbb{R}$ defined by $\mathcal{F}:= \{g : g = c \circ w_\rho \circ f \text{ for some } \rho \in (0, \rho_0] \text{ and } f\in \mathcal{H}\}$.
If Assumptions \ref{assump: finite-bracket-N} and \ref{assump: oracle-uni-lip-rho}-\ref{assump: oracle-uni-lip} hold, then for any $\delta \in (0,1)$, the $3\delta$-bracketing number $N(3\delta; \mathcal{F}, \|\cdot\|_\infty)$, with respect to the sup-norm $\|\cdot\|_\infty$, is finite.
\end{lemma}
\proof{Proof for Lemma \ref{lemma: finite-cover-num}}
Assumption \ref{assump: finite-bracket-N} guarantees that, for any $\delta\in (0,1)$, we can find a $\frac{\delta}{2L_cL_w}$-bracket of $\mathcal{H}$. That said, for each $f\in \mathcal{H}$, there exist an $i\in \{1, \dots, N_1\}$ such that $l^i_k(x) \leq f^i_k(x) \leq u^i_k(x)$ for all $k=1, \dots, K$ and $x\in \calX$ and with $\|l^i-u^i\|_\infty \leq \frac{\delta}{2L_cL_w}$.

By the uniform Lipschitzness of $w_\rho$ in $\rho$ (Assumption \ref{assump: oracle-uni-lip-rho}), we have that for any $\rho_1$, $\rho_2$ and $p$, $\|w_{\rho_1}(p) -w_{\rho_2}(p)\| \leq L_\rho |\rho_1-\rho_2|$. Therefore, $|c(w_{\rho_1}(p), \xi) - c(w_{\rho_2}(p), \xi)| \leq L_c L_\rho |\rho_1 -\rho_2| $.
Then we consider $T = \lfloor \frac{\rho_0 L_c L_w}{2\delta} \rfloor$ and have a collection of points $\{\rho^0, \dots, \rho^{T+1}\}$, where $\rho^i := \frac{\delta}{L_cL_\rho}i$, for $i=0, \dots, T$, and $\rho^{T+1} := \rho_0$.

Now we show that $\{[c\circ w_{\rho^{i}} \circ l^j -  \delta ,\quad  c\circ w_{\rho^{i}} \circ u^j +  \delta ] \quad : \quad  \forall i = 0, \dots, T+1, \forall j=1, \dots, N_1 \}$  forms a $3\delta$-cover of $\mathcal{F}$. We first show that for any $\rho\in (0, \rho_0]$ and $f\in \calH$, there exists $i$ and $j$ such that 
\begin{align*}
&\quad  |c(w_{\rho} (f(x)), \xi) -  c (w_{\rho^{i}}( u^j(x)), \xi) | \\
& \leq |c(w_{\rho} (f(x)), \xi) -  c (w_{\rho^{i}}( f(x)), \xi) | + |c(w_{\rho^{i}} (f(x)), \xi) -  c (w_{\rho^{i}}( u^j(x)), \xi) |\\
& \leq L_c L_\rho |\rho - \rho_i|  + L_c\|w_{\rho^i}(u^j(x)) -w_{\rho^i}(f(x)) \|\\
& \leq \frac{\delta}{2} + L_c L_w \|u^j(x) - f(x) \|\\
& \leq \frac{\delta}{2}  + \frac{\delta}{2}  = \delta.
\end{align*}
The second inequality holds by the Lipschitz Assumptions \ref{assump: oracle-uni-lip-rho} and the the fact that the objective function $c(\cdot, \xi)$ is $L_c$-Lipschitz for any $\xi$. Then the third inequality follows from Assumption
\ref{assump: oracle-uni-lip}. 
Therefore, $c(w_{\rho} (f(x)), \xi) \leq c (w_{\rho^{i}}( u^j(x)), \xi) + \delta$. The same reasoning leads to  the fact that $c(w_{\rho} (f(x)), \xi) \geq c (w_{\rho^{i}}( l^j(x)), \xi) - \delta$. 

Note that for any $x$ and $\xi$
\begin{align*}
& \quad |(c (w_{\rho^{i}}( u^j(x)), \xi) + \delta) -  (c (w_{\rho^{i}}( l^j(x)), \xi)-\delta) | \\
& \leq 2 \delta + |c (w_{\rho^{i}}( u^j(x)) -  c (w_{\rho^{i}}( l^j(x)), \xi)) | \\
& \leq 2\delta + L_cL_w \|u^j(x)- l^j(x) \|.
\end{align*}
Therefore, $\|(c\circ w_{\rho^{i}} \circ u^j +  \delta)  - (c\circ w_{\rho^{i}} \circ l^j) \| \leq 2\delta + 0.5\delta \leq 3\delta$ and we have fund a $3\delta$-bracket with a size of $N_1 \times T+1 < \infty$.
\Halmos
\endproof

\subsection{Justifying Assumptions for Theorem \ref{thm: consistency}}
In this subsection, we justify the validity of the uniform Lipschitzness of the regularized oracle (Assumptions \ref{assump: oracle-uni-lip-rho}-\ref{assump: oracle-uni-lip}).
To achieve that, we start by showing that the desired property can be guaranteed by a common ``automatic crossover'' phenomenon of regularized oracle $w_\rho(\cdot)$.
\begin{assumption}[Automatic Crossover of the Regularized Oracle]
\label{assump: jumping-oracle}
There exists a positive number $\rho_{c, \phi}$, so that for all $\rho\leq \rho_{c, \phi}$ and all $p\in \Delta_K$, $w_\rho(p) \in W^*(p)$. The universal constant $\rho_{c, \phi}$ only depends on the property of the objective function $c(\cdot, \cdot)$ and the regularization $\phi(\cdot)$. 
\end{assumption}
This assumption indicates that, if $\rho$ is smaller than a threshold $\rho_{c, \phi}$, the output of the regularized oracle is in the optimal solution set of the unregularized oracle.

Combining the automatic crossover assumption and Proposition \ref{prop:w-lip}, we have a universal Lipschitz constant for any $\rho \geq 0$.
\begin{corollary}
\label{cor: uni-lip-p} Suppose Assumption \ref{assump: jumping-oracle} holds, then
for all $\rho > 0$, and any $p_1$ and $p_2$ from $\Delta_K$, $\|w_{\rho}( p_1 ) - w_{\rho}( p_2 ) \| \leq \frac{L_c}{\rho_{c, \phi}}\|p_1-p_2\|_2$.
\end{corollary}
\proof{Proof of Corollary \ref{cor: uni-lip-p}}
Directly follows by combining Proposition \ref{prop:w-lip} and Assumption \ref{assump: jumping-oracle}.
\Halmos
\endproof 

The previous results further imply the uniform Lipschitzness with respect to the regularization parameter $\rho$ for any $p\in \Delta_K$. 
\begin{corollary}
\label{cor: uni-lip-rho}
Suppose Assumption \ref{assump: jumping-oracle} holds, then
for all $p$ from $\Delta_k$, and any $\rho_1$ and $\rho_2$ from $(0, \rho_0] $, $\|w_{\rho_1}(p) - w_{\rho_2}(p)\| \leq \frac{L_c\sqrt{K}}{\rho^2_{c, \phi}}|\rho_1-\rho_2|$.%
\end{corollary}
\proof{Proof of Corollary \ref{cor: uni-lip-rho}}
For any $p \in \Delta_K$, and any $\rho > 0$, $w_\rho(p)$ is also the unique minimizer of 
\begin{equation*}
\sum_{k=1}^K \frac{p_k}{\rho}c_K(w) + \phi(w).
\end{equation*}
For simplicity, we assume that $\rho_1 < \rho_2$.
Then since $\rho_1 \geq \rho_{c, \phi}$, we
 apply Proposition \ref{prop:w-lip}, we have
\begin{align*}
\|w_{\rho_1}(p) - w_{\rho_2}(p)\|  \leq L_c\left\|\frac{p}{\rho_1} - \frac{p}{\rho_2}\right\|_2 \leq L_c\|p\|_2\left|\frac{1}{\rho_1} - \frac{1}{\rho_2}\right|
 \leq L_c \frac{\sqrt{K}}{\rho_{c, \phi}^2} |\rho_1-\rho_2|.
\end{align*}
The first inequality holds by applying Proposition \ref{prop:w-lip} and the last inequality follows from the the fact that $\Delta_K$ is compact and Assumption \ref{assump: jumping-oracle}.
Similarly, if $\rho_1 < \rho_{c, \phi} \leq \rho_2$, the above analysis also applies because $w_{\rho_1}(p) = w_{\rho_{c, \phi}}(p)$ for any $p\in \Delta_K$ by Assumption \ref{assump: jumping-oracle} and that $|\rho_1 -\rho_2| \geq |\rho_{c, \phi} - \rho_2|$. In the last case where $\rho_1 < \rho_2 \leq \rho_{c, \phi}$, then $w_{\rho_1}(p) = w_{\rho_2}(p) = w_{\rho_{c, \phi}}(p)$ for any $p\in \Delta_K$. Thus the desired result still trivially holds.
\Halmos
\endproof

\subsubsection{A Sufficient Condition for the Automatic Crossover Property}
\label{sec: appendix-jumping}
In this part, we let $h(w, p) := \sum_{k=1}^K p_kc_k(w)$ for simplicity. $W^\ast(p) := \argmin_{w\in S} h(w, p)$ and $h^\ast( p) := \min_{w\in S} h(w, p) $. Then we demonstrate that the automatic crossover property is quite common, for example, it holds for any convex objective $c(\cdot, \cdot)$ as long as it satisfies the following linear growth condition stated in Assumption \ref{assump: linear-growth}.
\begin{assumption}[Linear Growth]
\label{assump: linear-growth}
There exists $\mu >0$ such that for any $p\in \Delta_K$ and $w\in S$, it holds that
$h(w, p) - h^\ast(p) \geq \mu \mathrm{dist}(w, W^\ast(p))$.
\end{assumption}
Note that, for example, any piecewise linear function satisfies the linear growth condition.

\begin{theorem}
\label{thm: jumping-oracle}
Suppose Assumption \ref{assump: linear-growth} holds. Then, for all $\rho \in (0, \rho_{c, \phi}]$, the regularized oracle $w_\rho(p)$ automatically crosses over (Assumption \ref{assump: jumping-oracle}) with the unregularized solution set $W^*(p)$, i.e., $w_\rho(p) \in W^*(p)$. The phase transitioning constant of the crossover is $\rho_{c, \phi} := \frac{\mu}{\overline{\nabla\phi}}$, where $\overline{\nabla\phi} := \sup_{w\in S} \|\nabla \phi(w) \|_* < \infty$.
\end{theorem}
\proof{Proof of Theorem \ref{thm: jumping-oracle}}
We let $\bar{w}_{\rho}(p):= \argmin_{w\in W^\ast(p)} \|w-w_\rho(p)\|$ denote the projection of $w_\rho(p)$ to the unregularized optimal solution set $W^\ast(p)$.
For any $p\in \Delta_K$ and any $\rho \geq 0$, we have
\begin{align*}
0 & \geq [h(w_\rho(p), p) + \rho \phi(w_\rho(p)) ]-  [h(\bar{w}_\rho(p), p) + \rho \phi(\bar{w}
_\rho(p)) ]\\
& = [h(w_\rho(p), p) - h(\bar{w}_\rho(p), p)] + \rho [ \phi(w_\rho(p)) - \phi(\bar{w}
_\rho(p))] \\
& = [h(w_\rho(p), p) - h(\bar{w}_\rho(p), p)] + \rho [ \phi(w_\rho(p)) - \phi(\bar{w}_\rho(p)) -\nabla \phi(\bar{w}_\rho(p))^T(w_\rho(p)-\bar{w}_\rho(p))] \\
& \quad + \rho\nabla \phi(\bar{w}_\rho(p))^T(w_\rho(p)-\bar{w}_\rho(p))\\
& \geq \mu \|w_\rho(p)-\bar{w}_\rho(p) \| + \frac{\rho}{2}\|w_\rho(p)-\bar{w}_\rho(p) \|^2 -\rho \|\nabla\phi(\bar{w}_\rho(p)) \|_\ast \|w_\rho(p)-\bar{w}_\rho(p) \| \\
&\geq  \|w_\rho(p)-\bar{w}_\rho(p) \| (\mu + \frac{\rho}{2}\|w_\rho(p)-\bar{w}_\rho(p) \| -\rho \overline{\nabla\phi}).
\end{align*}
The first inequality follows from the optimality of $w_\rho(p)$. The first term of the second inequality holds because of Assumption \ref{assump: linear-growth}, and the second term in the second inequality holds due to the 1-strong convexity of $\phi$. If $\rho \leq \frac{\mu}{\overline{\nabla\phi}}:= \rho_{c, \phi}$, then $\mu + \frac{\rho}{2}\|w_\rho(p)-\bar{w}_\rho(p) \| -\rho \overline{\nabla\phi} \geq 0$. Thus, $\|w_\rho(p)-\bar{w}_\rho(p) \|$ has to be zero; otherwise, the right side is a positive number and we have a contradiction.
\Halmos
\endproof

\subsection{Proof of Theorem \ref{thm: consistency}}
\proof{Proof of Theorem \ref{thm: consistency}.}
We first show that $\lim_{\rho\rightarrow 0} J^*_\rho = J^*$. Recall that $J^* = R(w\circ f^*)$ and $J^*_\rho = R(w_\rho \circ f^*_\rho)$. 
As defined at the beginning of Section 3,  the function $w(\cdot) : \Delta_K \to S$ arbitrarily selects a value from the optimal solution set given by $W(\cdot)$. Specifically, for any given $p \in \Delta_K$, $w(p)$ outputs an arbitrary value from the set $ W(p) = \argmin_{w\in S} \sum_{k=1}^K p_kc_k(w)$.

We first show that, for any sequence $\{\rho_n\}$ that converges to zero and for any given $x$,
\begin{equation*}
 \sum_{k=1}^K f^*_k(x)c_k(w_{\rho_n}(f^*(x))) \to \sum_{k=1}^K f^*_k(x)c_k(w(f^*(x))) \ \text{ point-wise as } n \to \infty.
\end{equation*}
Let $\epsilon > 0$ be fixed. 
Note that $c_k(\cdot)$ is Lipschitz and, therefore, uniformly continuous. Thus, there exists a $\delta > 0$ such that, for any $\bar{w}_n$, if $\|w_{\rho_n}(f^\ast(x)) - \bar{w}_n\| < \delta$, then $|c_k(w_{\rho_n}(f^\ast(x))) - c_k(\bar{w}_n)| < \epsilon$. By utilizing the convergence result provided by Proposition 1, there is a value of $N$ such that for all $n > N$, it holds that $\text{dist}(w_{\rho_n}(f^\ast(x)), W(f^\ast(x))) < \delta$. Then we let $\bar{w}_n := \argmin_{w \in W(f^\ast(x))} \|w_{\rho_n}(f^\ast(x)) - w\|$. It is important to note that for any $\bar{w}_1, \bar{w}_2 \in W(f^\ast(x))$, we have $\sum_{k=1}^K f^\ast_k(x)c_k(\bar{w}_1) = \sum_{k=1}^K f^*_k(x)c_k(\bar{w}_2)$. Therefore, for all $n>N$, the following holds
\begin{equation*}
|\sum_{k=1}^K f^*_k(x)c_k(w_{\rho_n}(f^*(x)))-\sum_{k=1}^K f^*_k(x)c_k(w(f^*(x)))| < \epsilon,
\end{equation*}
where again note that $w(\cdot) : \Delta_K \to S$ arbitrarily selects a value from the optimal solution set.
This shows that $\sum_{k=1}^K f^*_k(x)c_k(w_{\rho_n}(f^*(x)))$ converges to $\sum_{k=1}^K f^*_k(x)c_k(w(f^*(x)))$ point-wise as $n \to \infty$.

Then, by the dominated convergence theorem we have that
\begin{align}
R(w_{\rho_n} \circ f^*) & :=  \mathbb{E}_x \left[\sum_{k=1}^K f^*_k(x)c_k(w_{\rho_n}(f^*(x)))\right] \nonumber \\ \to R(w \circ f^*) & :=  \mathbb{E}_x \left[\sum_{k=1}^K f^*_k(x)c_k(w(f^*(x)))\right] \ \text{ as } n \to \infty. \label{eq: dc_converge}
\end{align}
Note that
\begin{align*}
R(w_{\rho_n} \circ f^*) & :=  \mathbb{E}_x \left[\sum_{k=1}^K f^*_k(x)c_k(w_{\rho_n}(f^*(x)))\right] \\
& \leq  \mathbb{E}_x \left[ \sum_{k=1}^K f^*_k(x)c_k(w_{\rho_n}(f^*(x))) + {\rho_n}\phi(w_{\rho_n}(f^*(x)))\right]\\
& \leq \mathbb{E}_x \left[ \sum_{k=1}^K f^*_k(x)c_k(w_{\rho_n}(f_{\rho_n}^*(x))) + {\rho_n} \phi(w_{\rho_n}(f_{\rho_n}^*(x))) \right]\\
& \leq  \mathbb{E}_x \left[ \sum_{k=1}^K f^*_k(x)c_k(w_{\rho_n}(f_{\rho_n}^*(x))) + {\rho_n} \bar{\phi}\right]\\
& =  R(w_{\rho_n}\circ f^\ast_{\rho_n}) + \rho_n \bar{\phi} \\
& \leq  \mathbb{E}_x \left[ \sum_{k=1}^K f^*_k(x)c_k(w_{\rho_n}(f^*(x)))\right] + {\rho_n}\bar{\phi}\\
& =  R(w_{\rho_n} \circ f^*) + \rho_n \bar{\phi}.
\end{align*}
The first inequality holds due to the presence of an additional non-negative regularization term. The second inequality arises from the definition of $w_{\rho_n}(\cdot)$. Specifically, $w_{\rho_n}(f^*(x))$ is the minimizer of $\min_{w\in S}\sum_{k=1}^K f^*_k(x)c_k(w) + {\rho_n}\phi(w) $ for any given $x$ while $w_{\rho_n}(f_{\rho_n}^*(x))$ is not. The third inequality holds because the decision regularization is upper-bounded by $\bar{\phi}$. Then the last inequality holds because $f^*_{\rho_n}$ is the minimizer of $\min_{f\in \mathcal{H}} \mathbb{E}_x \sum_{k=1}^K f^*_k(x)c_k(w_{\rho_n}(f(x))) $.  Notice that the above chain of inequalities demonstrates that $R(w_{\rho_n}\circ f^\ast_{\rho_n}) \in [R(w_{\rho_n} \circ f^*) - \rho_n \bar{\phi}, R(w_{\rho_n} \circ f^*) + \rho_n \bar{\phi}]$. 
Now let $\epsilon > 0$ again be fixed. Since $\rho_n \to 0$, there exists $N_1$ such that, for all $n> N_1$, $\rho_n < \frac{\epsilon}{2\bar{\phi}}$ and therefore $|R(w_{\rho_n}\circ f^\ast_{\rho_n}) - R(w_{\rho_n} \circ f^*)| \leq \epsilon/2$.
At the same time, by \eqref{eq: dc_converge}, there exists $N_2$ such that, for all $n> N_2$, we have $|R(w_{\rho_n} \circ f^*)- R(w \circ f^*)|< \frac{\epsilon}{2}$.
Therefore, considering $N= \max\{N_1, N_2\}$, we have that for all $n>N$,
\begin{align*}
|R(w_{\rho_n}\circ f^\ast_{\rho_n}) - R(w \circ f^*) | &\leq |R(w_{\rho_n}\circ f^\ast_{\rho_n}) - R(w_{\rho_n} \circ f^*) | + |R(w_{\rho_n} \circ f^*)-R(w \circ f^*)| \\
&\leq \frac{\epsilon}{2} + \frac{\epsilon}{2} = \epsilon.
\end{align*}
Therefore, as $\rho_n$ converges to zero, we have
\begin{equation*}
J^*_{\rho_n} :=  R(w_{\rho_n} \circ f^*_{\rho_n}) \rightarrow  R(w\circ f^*) = J^*.
\end{equation*}

If Assumptions \ref{assump: finite-bracket-N} and \ref{assump: oracle-uni-lip-rho}-\ref{assump: oracle-uni-lip} hold, Lemma \ref{lemma: finite-cover-num} shows that, the class of function $\mathcal{F}:= \{c \circ w_\rho \circ f: \mathcal{X}\times \Xi \to \mathbb{R}| \rho \in (0, \rho_0], f\in \mathcal{H} \}$ has a finite bracketing number. Therefore, we apply Theorem 3.2 in \cite{sen2018gentle}(see also \cite{wainwright2019high} and \cite{van2000asymptotic}) and have
\begin{equation}
\label{eq: uniform-converge-empirical-risk}
\hat{R}_n(w_\rho \circ f) \to R(w_\rho \circ f) \ \text{ with probability }1,
\end{equation}
 uniformly for $\rho \in (0, \rho_0]$ and $f\in \mathcal{H}$. 
Then by \eqref{eq: uniform-converge-empirical-risk}, we have
\begin{align*}
|\hat{J}^n_\rho - J^\ast_\rho| & = \max \left\{ \hat{R}_n(w_\rho \circ \hat{f}^n_\rho) - R(w_\rho \circ f^\ast_\rho), R(w_\rho \circ f^\ast_\rho) -\hat{R}_n(w_\rho \circ \hat{f}^n_\rho) \right\} \\
& \leq \max \left\{ \hat{R}_n(w_\rho \circ f^\ast_\rho) - R(w_\rho \circ f^\ast_\rho), R(w_\rho \circ \hat{f}^n_\rho) -\hat{R}_n(w_\rho \circ \hat{f}^n_\rho) \right\} \\
& \leq \max \left\{ |\hat{R}_n(w_\rho \circ f^\ast_\rho) - R(w_\rho \circ f^\ast_\rho)|, |R(w_\rho \circ \hat{f}^n_\rho) -\hat{R}_n(w_\rho \circ \hat{f}^n_\rho)| \right\},
\end{align*}
where the second inequality holds due to the fact that $\hat{R}_n(w_\rho \circ f^\ast_\rho) \geq \hat{R}_n(w_\rho \circ \hat{f}^n_\rho) $ and that $R(w_\rho \circ \hat{f}^n_\rho) \geq  R(w_\rho \circ f^\ast_\rho)$, because $\hat{f}^n_\rho$ minimizes $\hat{R}_n(w_\rho \circ f)$ and $f^\ast_\rho$ minimizes $R(w_\rho \circ f)$. 
By the uniform convergence of the empirical risk in $\rho\in (0, \rho_0]$ and $f\in \mathcal{H}$, guaranteed by \eqref{eq: uniform-converge-empirical-risk}, for any positive $\epsilon >0$, there exists $N$ such that for all $n>N$, $\sup_{\rho \in (0, \rho_0]}|\hat{R}_n(w_\rho \circ f^\ast_\rho) - R(w_\rho \circ f^\ast_\rho)| < \epsilon$ and $\sup_{\rho \in (0,\rho_0]}|R(w_\rho \circ \hat{f}^n_\rho) -  \hat{R}_n(w_\rho \circ \hat{f}^n_\rho)| < \epsilon$ with probability 1. Thus, we conclude that $\sup_{\rho \in (0, \rho_0]}|\hat{J}^n_\rho - J^\ast_\rho| \to 0$ with probability 1 as $n \to \infty$. Therefore, since $\{\rho_n\}_{n=1}^\infty$ satisfies $\rho_n \in (0, \rho_0]$ and $\rho_n \to 0$ as $n \to \infty$, we have that 
\begin{equation*}
    |\hat{J}^n_{\rho_n} - J^*| ~\leq~ |J^*_{\rho_n} - J^*| + \sup_{\rho \in (0, \rho_0]}|\hat{J}^n_\rho - J^\ast_\rho| ~\to~ 0 \text{ with probability 1, as } n \to \infty.
\end{equation*}

Now we prove {\em (ii)} and {\em (iii)} simultaneously. Note that
\begin{align}
 \mathrm{dist}(w_{\rho_n}(\hat{f}^n_{\rho_n}( x)), W(f^*(x) ) & = \inf_{u\in W(f^*(x)) } \|w_{\rho_n}(\hat{f}^n_{\rho_n}( x)) -w_{\rho_n}(f^*_{\rho_n} (x)) +w_{\rho_n}(f^*_{\rho_n} (x)) - u\|\nonumber \\
   &  \leq \|w_{\rho_n}(\hat{f}^n_{\rho_n}( x)) -w_{\rho_n}(f^*_{\rho_n} (x))\| + \inf_{u\in W(f^*(x)) } \| w_{\rho_n}(f^*_{\rho_n} (x)) - u\| \nonumber \\
   & \leq \|w_{\rho_n}(\hat{f}^n_{\rho_n}( x)) -w_{\rho_n}(f^*_{\rho_n} (x))\| +\mathrm{dist}(w_{\rho_{n}}(f^*_{\rho_{n}} (x)), W(f^*(x))). \label{eq: two-terms-w}
\end{align}
The first inequality follows from the triangle inequality. The second inequality holds by the definition of $\mathrm{dist}(\cdot, \cdot)$. 
In the remaining part of this proof, we separately establish the convergence in probability of the first and second terms above, both holding $\mathcal{D}_x$-almost surely over $x$.

 Due to the compactness of $S$, with any sequence of $\rho_n $ converging to zero, the sequence $w_{\rho_n}(f^*_{\rho_n} (x))$ has accumulation points. Let $w_{\rho_{n(m)}}(f^*_{\rho_{n(m)}} (x))$ be any subsequence converging to an accumulation point $\bar{w}(x) \in S$. 
Note that we have
\begin{align*}
    J^*  = \lim_{m\rightarrow \infty} \mathbb{E}_x\left[\sum_{k=1}^K f^*_k(x)c_k(w_{\rho_{n(m)}}(f^*_{\rho_{n(m)}}( x))) \right]
    \geq  \mathbb{E}_x\left[\sum_{k=1}^K f^*_k(x)c_k(\bar{w}(x))  \right],
\end{align*}
where the equality follows from the convergence of  $J^*_\rho  \rightarrow J^*$ when $\rho \rightarrow 0$, and the first inequality holds by Fatou's lemma. Therefore, $\mathcal{D}_x$ almost surely for all $x$, $\bar{w}(x)$ must lie in the set $W(f^*(x))$.
Then $\mathrm{dist}(w_{\rho_{n(m)}}(f^*_{\rho_{n(m)}} (x)), W(f^*(x))) \leq \|w_{\rho_{n(m)}}(f^*_{\rho_{n(m)}} (x)) - \bar{w}(x)\| = 0$. The aforementioned reasoning holds for every converging subsequence $n(m)$, then the original sequence $\mathrm{dist}(w_{\rho_{n}}(f^*_{\rho_{n}} (x)), W(f^*(x))) $ also converges to zero  $\mathcal{D}_x$-almost surely for all $x$.

Now we aim to demonstrate the convergence of $\hat{f}^n_\rho$ to $f^\ast_\rho$ for any fixed $\rho \in (0, \rho_0]$. Note that the first condition required by Theorem 5.7 in \cite{van2000asymptotic} is the uniform convergence in $f$, which is already established in \eqref{eq: uniform-converge-empirical-risk}. The second condition, which requires $f^*_\rho$ to be a well-separated point given each $\rho$, is implied by Assumption \ref{assump: unique} combined with Assumption \ref{assump: unique-decision}. Then we apply Theorem 5.7 in \cite{van2000asymptotic} and have that, for each $\rho \in (0, \rho_0]$, $\hat{f}_\rho^n$ convergences to $f^*_\rho$ in probability as $n \to \infty$. Due to the continuity of $w_\rho(\cdot)$, we have $\|w_\rho(\hat{f}_\rho^n(x)) - w_\rho(f^\ast_\rho(x)) \|$ converges to zero in probability for all $x$.

Then we want to show the following claim:  $\sup_{\rho \in (0, \rho_0]}\|w_{\rho}(f^*_{\rho}(x)) -w_{\rho}(\hat{f}^n_{\rho}( x)) \|$ converges to zero in probability, $\mathcal{D}_x$-almost surely for all $x$.
By the uniform convergence of $|\hat{R}(w_\rho\circ f) - R(w_\rho \circ f)|$ presented in \eqref{eq: uniform-converge-empirical-risk}, for any $\delta >0$,
there exists $N$, such that for any $n \geq N$,   $\sup_{\rho \in (0, \rho_0]}|\hat{R}(w_\rho\circ\hat{f}_\rho^n) - R(w_\rho\circ\hat{f}^n_\rho)| <\delta/2$.
If the desired claim is not true,  there exists $\epsilon >0$ and a subsequence $\{m(n)\}$ with $\mathbb{P}_x (\sup_{\rho \in (0, \rho_0]}\|w_{\rho}(f^*_{\rho}(x)) -w_{\rho}(\hat{f}^{m(n)}_{\rho}( x)) \| > \epsilon) >0$ for all $n = 1, \dots, \infty$. Then for each $n$, one may choose a $\rho$ that achieves $\mathbb{P}_x (\|w_{\rho}(f^*_{\rho}(x)) -w_{\rho}(\hat{f}^{m(n)}_{\rho}( x)) \| \geq \epsilon) >0$. Then by Assumption \ref{assump: unique-decision}, there must exist a $\delta >0$ such that $\sup_{\rho \in (0, \rho_0]}|R(w_\rho \circ f^*_{\rho}) - R(w_\rho \circ \hat{f}^{m(n)}_{\rho})| \geq\delta$. 
Then we notice that, for this subsequence $\{m(n)\}$, so that there exists a positive $\delta$ such that
\begin{align*}
& \quad \sup_{\rho\in (0, \rho_0]}|\hat{J}_\rho^{m(n)}- J^\ast_\rho|\\
& = \sup_{\rho\in (0, \rho_0]}|\hat{R}(w_\rho \circ \hat{f}^{m(n)}_{\rho}) - R(w_\rho \circ f^*_{\rho}) |\\ 
&\geq \sup_{\rho\in (0, \rho_0]}|R(w_\rho \circ \hat{f}^{m(n)}_{\rho}) - R(w_\rho \circ f^*_{\rho}) | - \sup_{\rho\in (0, \rho_0]}|\hat{R}(w_\rho(\hat{f}_\rho^{m(n)})) - R(w_\rho(\hat{f}^{m(n)}_\rho))|\\ & \geq \delta - \delta/2 = \delta/2 >0.
\end{align*}
This contradicts the uniform convergence of $\hat{J}^n_\rho$ to $J^*_\rho$ in probability.
Therefore, for any decreasing sequence $\{\rho_n\}_{n=1}^\infty$ with $\rho_n>0$ and $\rho_n \rightarrow 0$, we have $\|w_{\rho_n}(\hat{f}^n_{\rho_n}(x)) - w_{\rho_n}(f^\ast_{\rho_n}(x))\| \leq \sup_{\rho \in (0, \rho_0]} \|w_\rho(\hat{f}^n_\rho(x)) - w_\rho(f^\ast_\rho(x))\| ~\to~0$ with probability 1 for $\mathcal{D}_x$-almost surely. Then, returning to (\ref{eq: two-terms-w}), we have
\begin{align*}
    \mathrm{dist}(w_{\rho_n}(\hat{f}^n_{\rho_n}( x)), W(f^*(x) ) 
   & \leq \|w_{\rho_n}(\hat{f}^n_{\rho_n}( x)) -w_{\rho_n}(f^*_{\rho_n} (x))\| +\mathrm{dist}(w_{\rho_{n}}(f^*_{\rho_{n}} (x)), W(f^*(x))) \label{eq: asymp-eq2} \\ 
&\to 0, \qquad \mathcal{D}_x-\text{almost surely}, \text{with probability 1}.\nonumber
\end{align*}
Thus {\em (ii)} is proved.

Finally, if we have an accumulation point of $\hat{f}^n_{\rho_n}$, denoted as $\bar{f}$, we have 
\begin{align*}
\mathrm{dist}(w_{\rho_n}(\hat{f}^n_{\rho_n}(x)), W(\bar{f}(x))) & \leq \|w_{\rho_n}(\hat{f}^n_{\rho_n}(x))-  w_{\rho_n}(\bar{f}(x))\|+ \mathrm{dist}(w_{\rho_n}(\bar{f}(x)), W(\bar{f}(x))) \\
& \leq  L_w \|\hat{f}^n_{\rho_n}(x) - \bar{f}(x)\| + \mathrm{dist}(w_{\rho_n}(\bar{f}(x)), W(\bar{f}(x)))\\
& \to 0, \qquad \text{as $n\to \infty$ for all $x\in \mathcal{X}$}.
\end{align*}
Knowing that $\mathrm{dist}(w_{\rho_n}(\bar{f}(x)), W(\bar{f}(x))) \to 0$ holds by Proposition \ref{prop: oracle-consistency}.
Then by {\em (ii)}, $  \mathrm{dist}(w_{\rho_n}(\hat{f}^n_{\rho_n}( x)), W(f^\ast(x) ) \to 0$, $\mathcal{D}_x$-almost surely with probability 1, as $n\to \infty$. Thus, for any accumulation point of $w_{\rho_n}(\hat{f}^n_{\rho_n}(x))$, denoted by $\bar{w}$, it holds that $ \bar{w} \in W(\bar{f}(x))$ for all $x\in \calX$ and that  $\bar{w} \in W(f^\ast(x))$ $\mathcal{D}_x$-almost surely with probability 1.
That is, $W(\bar{f}(x)) \cap W(f^*(x)) \neq \emptyset$, $\mathcal{D}_x$-almost surely with probability 1. Then if $\bar{f} \neq f^\ast$, it contradicts with Assumption \ref{assump: unique}.
Thus with the uniqueness assumption, the true hypothesis $f^*$ can be recovered by $\hat{f}^n_{\rho_n}$.
\Halmos
\endproof

\newpage
\section{Supplementary Lemmas and Proofs for Sections \ref{sec:generalization} and \ref{sec: computational}}
\label{sec: proofs}

\subsection{Lemmas and Proofs for Section \ref{sec:generalization}}
\label{sec: appendix-general-guarantee}

\proof{Proof of Proposition \ref{prop:w-lip}}
Let $p, p^\prime \in \bbR_+^{K}$ be fixed. We let $h_\rho(\cdot, p) : S \to \bbR$ be defined by $h_\rho(w, p) :=\sum_{k=1}^K p_k c_k(w) +\rho\phi(w)$. Since $\phi(\cdot)$ is a $1$-strongly convex function, then $h_\rho(\cdot, p)$ is $\rho$-strongly convex and it holds for all $w \in S$ and $g\in \partial_w h_\rho(w, p)$ that
\begin{equation}
\label{eq: lip-1}
h_\rho(w^\prime, p) - h_\rho(w, p) ~\geq~ g^T(w^\prime - w) + \frac{\rho}{2}\|w^\prime - w\|^2 \qquad \forall w^\prime \in S.
\end{equation}
Since $w_\rho(p) = \arg\min_{w \in S}h_\rho(w, p)$, the first-order optimality condition implies there exists a subgradient $g\in \partial h(w_\rho(p), p)$ such that $g^T(w^\prime - w_\rho(p)) \geq 0$ for all $w^\prime \in S$. Applying this condition in \eqref{eq: lip-1} with $w \gets w_\rho(p)$ $w^\prime \gets w_\rho(p^\prime)$ yields
\begin{equation*}
h_\rho(w_\rho(p'), p) - h_\rho(w_\rho(p), p) ~\geq~ \frac{\rho}{2}\|w_\rho(p')-w_\rho(p)\|^2.
\end{equation*}
Switching the role of $p$ and $p'$ yields
\begin{equation*}
     h_\rho(w_\rho(p), p') - h_\rho(w_\rho(p'), p') ~\geq~ \frac{\rho}{2}\|w_\rho(p)-w_\rho(p')\|^2.
\end{equation*}
Adding the above two inequalities together yields
\begin{align*}
     \rho \|w_\rho(p)-w_\rho(p')\|^2 ~&\leq~ h_\rho(w_\rho(p), p') - h_\rho(w_\rho(p'), p') + h_\rho(w_\rho(p'), p) - h_\rho(w_\rho(p), p)\\
     ~&=~ [h_\rho(w_\rho(p), p') - h_\rho(w_\rho(p), p)] - [h_\rho(w_\rho(p'), p') - h_\rho(w_\rho(p'), p)]\\
     ~&=~\sum_{k=1}^K (p'_k-p_k) c_k(w_\rho(p)) - \sum_{k=1}^K (p'_k-p_k) c_k(w_\rho(p')) \\
     ~&=~ \sum_{k=1}^K (p'_k-p_k) (c_k(w_\rho(p)) - c_k(w_\rho(p'))) \\
     ~&\leq~  \|p - p^\prime\|_2\|c(w_\rho(p)) - c(w_\rho(p^\prime))\|_2 \\
     ~&\leq~  L_{c}\|p-p'\|_2\|w_\rho(p)- w_\rho(p')\|,
\end{align*}
where the last inequality uses Assumption \eqref{assump:c-lip}. Dividing by $\|w_\rho(p)- w_\rho(p')\|$ leads to \eqref{eq: w-lip}, and combining the resulting inequality again with \eqref{assump:c-lip} yields \eqref{eq: c-eq-lip}.
\Halmos
\endproof

\proof{Proof of Theorem \ref{thm:finite-sample-bd}}
Due to Proposition \ref{prop:w-lip}, in particular, the Lipschitz property of $c(\cdot)$ in \eqref{eq: c-eq-lip}, we can apply the vector contraction inequality from \cite{maurer2016vector} which, stated in terms of empirical Rademacher complexities, yields
\begin{equation*}
\hat{\mathfrak{R}}_n(c \circ w_{\rho_n}\circ \mathcal{H}) ~\leq~ \frac{\sqrt{2}L_c^2}{\rho_n}\hat{\mathfrak{R}}_n(\mathcal{H}).
\end{equation*}
Taking expectations of both sides of the above inequality, with respect to i.i.d. data $\{(x_i, \xi_i)\}_{i=1}^n$ drawn from the distribution $\calD$, yields
\begin{equation*}
    \mathfrak{R}_n(c \circ w_{\rho_n}\circ \mathcal{H}) \leq \frac{\sqrt{2}L_c^2}{\rho_n}\mathfrak{R}_n(\mathcal{H}).
\end{equation*} 
Then, a direct application of Theorem \ref{thm: bartlett} yields the desired result. 
\Halmos
\endproof

\proof{Proof of Corollary \ref{cor: optimal-out-of-sample-bound}}
According to Theorem \ref{thm:finite-sample-bd}, we have 
\begin{align*}
    R(w_{\rho_n}\circ \hat{f}^n_{\rho_n}) &~\leq~ \hat{R}_n(w_{\rho_n} \circ \hat{f}^n_{\rho_n}) +
\frac{\sqrt{2}L_c^2}{\rho_n}\mathfrak{R}_n(\mathcal{H})
+\bar{c}\sqrt{\frac{\log(\frac{2}{\delta})}{2n}}
\end{align*}
with probability at least $1- \frac{\delta}{2}$. 
Then we apply Hoeffding's inequality and have
\begin{equation*}
    \hat{R}_n(w_{\rho_n} \circ f^\ast_\mathcal{H}) \leq R(w_{\rho_n} \circ f^\ast_\mathcal{H}) + \bar{c} \sqrt{\frac{2\log(\frac{2}{\delta})}{n}} \label{eq: hoeffding}
\end{equation*}
with probability at least $1- \frac{\delta}{2}$. Note that
\begin{equation*}
\hat{R}_n(w_{\rho_n} \circ \hat{f}^n_{\rho_n}) \leq \hat{R}_n(w_{\rho_n} \circ f^\ast_\mathcal{H}),
\end{equation*}
then we have
\begin{align*}
    R(w_{\rho_n} \circ \hat{f}^n_{\rho_n}) & \leq R(w_{\rho_n} \circ f_\mathcal{H}^\ast) +\frac{\sqrt{2}L_c^2}{\rho_n}\mathfrak{R}_n(\mathcal{H})
+\frac{3\bar{c}}{2}\sqrt{\frac{2\log(\frac{2}{\delta})}{n}}
\end{align*}
with probability at least $1- \frac{\delta}{2}$. If the hypothesis class $\mathcal{H}$ is well-specified, then $f^\ast_\mathcal{H}$ is $f^*$ by Lemma \ref{lemma:f-star-minimizer} and the same result holds for $f^\ast$.  
\Halmos
\endproof

\subsection{Proofs and Supplementary Results for Section \ref{sec: computational}}
\label{sec: appendix-computation}
\begin{example}[Linear nominal optimization problem]
\label{exp: non-linear}Consider an example with a linear objective function in the optimization stage, i.e., $c_j(w)$ is a linear function $c_j^Tw$ for some $c_j \in \bbR^d$ for all $j = 1,\dots, K$. Suppose we use the decision regularization function $\phi(w) := \tfrac{1}{2}\|w\|_2^2$.
For any $p \in \Delta_K$, let $\bar{c}(p) := \sum_{j = 1}^K p_jc_j$. Then, note that
\begin{equation*}
w_\rho(p) = \argmin_{w \in S}\left\{\bar{c}(p)^Tw + \tfrac{\rho}{2}\|w\|_2^2\right\} = \argmin_{w \in S}\left\{\tfrac{\rho}{2}\|(\bar{c}(p)/\rho) - w\|_2^2\right\} = \Pi_S(\bar{c}(p)/\rho),
\end{equation*}
where $\Pi_S(\cdot)$ is the Euclidean projection operator onto $S$. Then, (\ref{program:origin}) is the problem of minimizing a sum of linear functions composed with projection operators, which is generally non-convex. At best, when $S$ is a polyhedron, i.e., $S := \{w \in \mathbb{R}^d: Aw \leq b\}$ and when we adopt a linear hypothesis class $\mathcal{H}= \{x \mapsto Bx \in \Delta_K : B \in \bbR^{K \times p}\}$, we can formulate (\ref{program:origin}) as a bilinear quadratic optimization problem.
Indeed, (\ref{program:origin}) can be reformulated as 
\begin{align}
\tag{ICEO-$\rho$-LP}
\label{program: lp-example}
  \min_{B, w_i, \lambda_i} \quad & \frac{1}{n}\sum_{i =1}^n\sum_{j=1}^K \mathbbm{1}\{\xi_i = \tilde{z}_j\} (Bx_i)_j c_j^Tw_i  \nonumber\\
\mathrm{s.t.} \quad & \frac{\rho}{2} w_i^Tw_i +\sum_{j = 1}^K
(Bx_i)_jc_j^Tw_i +\frac{1}{2} (\sum_{j = 1}^K (Bx_i)_jc_j+A^T\lambda)^T(\sum_{j = 1}^K (Bx_i)_jc_j+A^T\lambda)) \nonumber\\
&+ \lambda_i^Tb \leq 0, \quad \forall i = 1, \dots, n \nonumber\\
&Aw_i \leq b, \quad \forall i = 1, \dots, n\nonumber\\
& \lambda_i \geq 0 , \quad \forall i = 1, \dots, n\nonumber
\nonumber
\end{align}
Note that the dual function of the nominal quadratic optimization is $$-\frac{1}{2} ( \sum_{j = 1}^K (B^Tx_i)_jc_j+A^T\lambda)^T( \sum_{j = 1}^K (B^Tx_i)_jc_j+A^T\lambda)) - \lambda ^T b$$ thus the dual problem becomes 
\begin{equation*}
    \min_{\lambda \geq 0}  \frac{1}{2} ( \sum_{j = 1}^K (B^Tx_i)_jc_j+A^T\lambda)^T( \sum_{j = 1}^K (B^Tx_i)_jc_j+A^T\lambda)) + \lambda ^T b.
\end{equation*}
The first two group of constraints in (\ref{program: lp-example}) is to guarantee that $w_i$ and $\lambda_i$ are the optimal primal and dual solutions. The second and third group of constraints are for the primal and dual feasibility.
\Halmos
\end{example}

\proof{Proof of Proposition \ref{prop: krr-approx-error}}
Considering the kernel function $\mathcal{K}(p,p') = (c+p^Tp')^{s}$ with $p\in \mathbb{R}^K$, we first generalize the result of Example 13.19 from \cite{wainwright2019high}. When the input $p$ and $p'$ are $K$-dimensional vectors, the empirical kernel matrix can have rank at most $\frac{(s-1+K)!}{(s-1)!K!}$. Therefore, the left-hand side of Inequality (13.56) from \cite{wainwright2019high} can be upper-bounded by $\delta_m\sqrt{\frac{1}{m}\frac{(s-1+K)!}{(s-1)!K!}}$. Then we can apply Theorem 13.17 from \cite{wainwright2019high} and set $\lambda_m = 2\delta_m^2$ to achieve inequality (\ref{eq: poly-kernel-estimator1}). 
Moreover, the empirical Rademacher complexity can be upper-bounded by $\bar{c} \sqrt{\frac{1}{m} \frac{(s-1+K)!}{(s-1)!K!}}$ with some constant $\bar{c}$. Then if we have $\theta_m \geq \bar{c}_3 b \sqrt{\frac{1}{m} \frac{(s-1+K)!}{(s-1)!K!}}$, we can apply Theorem 14.1 from \cite{wainwright2019high} and therefore have the desired result inequality (\ref{eq: poly-kernel-estimator2}).
\Halmos
\endproof

\proof{Proof of Corollary \ref{cor: kernel}}
The proof follow from a slight modification of the proof of Theorem \ref{thm: gen-bd-approx}. We first consider
\begin{align*}
\frac{1}{n} \sum_{i=1}^n  \|w_\rho(f(x_i))) - \tilde{w}_\rho(f(x_i))\|_1 
  & \leq L_c  \sum_{j =1}^d (\frac{1}{n} \sum_{i=1}^n  |w_{\rho,j}(f(x_i)) - \tilde{w}_{\rho,j}(f(x_i))|^2)^\frac{1}{2}.
\end{align*} 
Then noted that
\begin{align*}
       \bbE_{ \calD_{f(x)}}[|w_{\rho,j}(p)) - \tilde{w}_{\rho,j}(p)|^2] \leq  \bbE_{ \calD_{p}}[|w_{\rho,j}(p)) - \tilde{w}_{\rho,j}(p)|^2] + 2\bar{w}^2_j \mathrm{TV}(\calD_{f(x)}, \calD_{p}),
 \end{align*}
 because $|w_{\rho,j}(p)) - \tilde{w}_{\rho,j}(p)|^2$ is bounded by $\bar{w}_j^2$ for all $p\in \Delta_K$.  
 Thus,
 \begin{align*}
       \bbE_{ \calD_{f(x)}}[|w_{\rho,j}(p)) - \tilde{w}_{\rho,j}(p)|^2]^{1/2} \leq  \bbE_{ \calD_{p}}[|w_{\rho,j}(p)) - \tilde{w}_{\rho,j}(p)|^2]^{1/2} + \bar{w}_j \sqrt{2\mathrm{TV}(\calD_{f(x)}, \calD_{p})}.
 \end{align*}
 Following the same reasoning of the proof in Theorem \ref{thm: gen-bd-approx}, the desired result follows. \Halmos 
\endproof

\proof{Proof of Theorem \ref{thm: gen-bd-approx}}
By Theorem \ref{thm:finite-sample-bd} (i), for any $\rho$, we have
\begin{align*}
    R(w_\rho \circ  f) \leq \hat{R}_n(w_\rho \circ  f) + \frac{\sqrt{2}L_c^2}{\rho}\mathfrak{R}_n(\mathcal{H})+\bar{c}\sqrt{\frac{\log(\frac{1}{\delta})}{2n}}.
\end{align*}
Noted that 
\begin{align}
  \frac{1}{n} \sum_{i=1}^n |c(w_\rho(f(x_i)),\xi_i) - c(\tilde{w}_\rho(f(x_i)),\xi_i) | & \leq L_c  \frac{1}{n} \sum_{i=1}^n  \|w_\rho(f(x_i))) - \tilde{w}_\rho(f(x_i))\|_1 \nonumber\\
 & = L_c  \sum_{j =1}^d \frac{1}{n} \sum_{i=1}^n  |w_{\rho,j}(f(x_i)) - \tilde{w}_{\rho,j}(f(x_i))| \label{eq: approx-gb-pf-bern1} %
\end{align}
When the approximated oracle has a uniform error, we combine (\ref{eq: approx-gb-pf-bern1}) and Assumption \ref{assump: uniform-err-bd} to have
 \begin{equation}
      \frac{1}{n} \sum_{i=1}^n  |c(w_\rho(f(x_i)),\xi_i) - c(\tilde{w}_\rho(f(x_i)),\xi_i) |   \leq L_c \sum_{j=1}^d \calE^\mathrm{unif}_j .
      \label{eq: approx-gb-pf-4 }
 \end{equation}
When the oracle is noised, we consider two different distributions $\calD_{f(x)}$ and $\calD_p$. We let $\calD_{f(x)}$ denote the distribution of $f(x)$ given a hypothesis $f$ and the distribution of $x$, $\calD_x$. Moreover, we let $\calD_p$ denote the distribution used to generate training samples $\{(p_i, w_i)\}_{i=1}^m$ for oracle approximation.  Then, we apply the error bound \eqref{eq: high-prob-approx-bd} with distribution $\calD_{f(x)}$ and $\calD_p$ respectively and have 
 \begin{align*}
      \frac{1}{n} \sum_{i=1}^n  |w_{\rho,j}(f(x_i)) - \tilde{w}_{\rho,j}(f(x_i))| \leq \bbE_{ \calD_{f(x)}}[|w_{\rho,j}(p)) - \tilde{w}_{\rho,j}(p)|]+ \calE^{\mathrm{prob}}_j(n, \delta/2d; \mathcal{G}), 
 \end{align*}
 and 
\begin{align*}
     \bbE_{ \calD_{p}}[|w_{\rho, j}(p)) - \tilde{w}_{\rho, j}(p)|] \leq  \frac{1}{m} \sum_{i=1}^m  |w_{\rho, j}(p_i)) - \tilde{w}_{\rho, j}(p_i)|  + \calE^{\mathrm{prob}}_j(m, \delta/2d; \mathcal{G}), %
 \end{align*} 
each with probability at least $1-\frac{\delta}{2d}$.
Considering the total variation between $\calD_{f(x)}$ and $ \calD_{p}$, we have the following
 \begin{align*}
       \bbE_{ \calD_{f(x)}}[|w_{\rho,j}(p)) - \tilde{w}_{\rho,j}(p)|] \leq  \bbE_{ \calD_{p}}[|w_{\rho,j}(p)) - \tilde{w}_{\rho,j}(p)|] + \mathrm{diam}_j (S) \mathrm{TV}(\calD_{f(x)}, \calD_{p}),
 \end{align*}
 where $\mathrm{diam}_j (S)$ is the diameter of S in the $j$-th coordinate and $\mathrm{TV}$ denotes the total variation. This result holds because $|w_{\rho,j}(\cdot) - \tilde{w}_{\rho,j}(\cdot)|$ is continuous and bounded by $\mathrm{diam}_j (S)$.
 Thus,  %
 {\small
 \begin{align*}
  \frac{1}{n} \sum_{i=1}^n  \|w_\rho(f(x_i))) - \tilde{w}_\rho(f(x_i))\|_1  \leq &  \sum_{j=1}^d\left[ \frac{1}{m} \sum_{i=1}^m  |w_{\rho,j}(p_i) - \tilde{w}_{\rho,j}(p_i)|+ \calE^{\mathrm{prob}}_j(n, \delta/2d; \mathcal{G}) + \calE^{\mathrm{prob}}_j(m/2d, \delta; \mathcal{G}) \right] \\
     &\qquad  +\mathrm{diam}_j (S) \mathrm{TV}(\calD_{f(x)}, \calD_{p}), %
 \end{align*}
 }%
 with probability at least $1-\delta$.
 Therefore, we have
 {\small
 \begin{align}
     \eqref{eq: approx-gb-pf-bern1} &\leq L_c \sum_{j=1}^d\left[ \frac{1}{m} \sum_{i=1}^m  |w_{i,j} - \tilde{w}_{\rho,j}(p_i)|+ \calE^{\mathrm{prob}}_j(n, \delta/2d; \mathcal{G}) + \calE^{\mathrm{prob}}_j(m, \delta/2d; \mathcal{G}) \right]
   +\mathrm{diam}(S) L_c \mathrm{TV}(\calD_{f(x)}, \calD_{p}),
\label{eq: error-term-prob}
 \end{align}
 }%
 with probability at least $1-\delta$. Here we slightly abuse the notation and let $\mathrm{diam}(S)$ denote the summation of coordinate-wise diameter along all coordinates.
 Then \eqref{eq: approx-general-bd-unif1} and \eqref{eq: approx-general-bd-prob1} follow from combining \eqref{eq: approx-gb-pf-4 } and \eqref{eq: error-term-prob} with Theorem \ref{thm:finite-sample-bd}. 
\Halmos
\endproof

\newpage
\section{Approximating the Optimal Solution Oracle by Polynomials}
\label{sec: approximation-polynomials}
In this section, we provide two examples of using polynomial functions to approximate the optimal solution mapping:  {\em (i)} interpolation using Bernstein polynominals, which satisfies a uniform error bound, and {\em (ii)} polynomial kernel regression, which satisfies a high-probability error bound. 

\subsection{Bernstein Polynomials}
\label{sec: bernstein}

One example for approximating the optimal solution mapping is interpolation using Bernstein polynomials, for which we review the definition below.
\begin{definition}[{\bf Bernstein Approximation (\cite{de2008complexity})}]\label{def:bern}
For a given function $\omega : \Delta_K \to \bbR$, the Bernstein approximation with order $s$, $B_s(\omega) : \Delta_K \to \bbR$, is defined by:
\begin{equation*}
\label{eq:bernstein}
    B_s(\omega)(p) ~:=~ \sum_{\alpha \in I(K,s)} \bar{w}\left(\frac{\alpha}{s}\right) \frac{s!}{\alpha !}p^\alpha, \quad \forall p\in \Delta_K,
\end{equation*}
where $I(K,s):= \{\alpha \in \mathbb{N}_0^K | \sum_{i=1}^K \alpha_i = s\}$, $\alpha! := \Pi_{i}\alpha_i!$, and $p^\alpha := p_1^{\alpha_1} \cdots p_K^{\alpha_K}$. \Halmos 
\end{definition}
Using Bernstein polynomials, based on a result of \cite{de2008complexity}, we can achieve a uniform bound of the approximation error as described in Assumption \ref{assump: uniform-err-bd}. 
\begin{proposition}
\label{prop: bern-approx-error}
For a given $\rho >0$, suppose that we use the Bernstein approximation method (Definition \ref{def:bern}) applied separately to each coordinate function $w_{\rho,j}(\cdot)$ to construct an approximate optimal solution mapping $\tilde{w}_\rho(\cdot)$. Then, $\tilde{w}_\rho(\cdot)$ satisfies the uniform error bound in Assumption \ref{assump: uniform-err-bd} with $\calE^\mathrm{unif}_j ~=~ \frac{\Omega L_{c}}{\rho\sqrt{s}}$,where $\Omega > 0$ is an absolute constant.
\end{proposition}
\proof{Proof of Proposition \ref{prop: bern-approx-error}}
This result directly follows from  Theorem 3.2 in \cite{de2008complexity} together with the Lipschitz property from Proposition \ref{prop:w-lip}.
\Halmos 
\endproof

Given the result in Proposition \ref{prop: bern-approx-error}, we can immediately obtain a generalization bound for the Bernstein approximation method by applying item {\em (i)} of Theorem \ref{thm: gen-bd-approx}. While the Bernstein polynomial method provides a strong uniform error bound guarantee, there is a significant drawback in the number of samples required to obtain this bound. Indeed, to accomplish this approximation, it involves knowing function values of $w_{\rho,j}(\cdot)$ on the grid $\Delta_{K,s} := \{w\in \Delta_K : sw \in \mathbb{N}_0^{K}\}$ which has $\binom{K+s}{K}$ many points in total. As such, the number of calculations of $w_{\rho}(\cdot)$ may be prohibitively large, which motivates the use of regression methods.

\subsection{Polynomial Kernel Regression}
\label{sec: krr}
In this section, we consider using the less computationally prohibitive regression methods that lead to high-probability bounds as in Assumption \ref{assump: high-prob-err-bd}. As an exemplary case, we consider the polynomial kernel regression method. In this setting, we allow for the possibility of a ``noised oracle" whereby the optimal solution mapping is not computed exactly. Specifically, the noised oracle outputs $w_\rho(p)+ \sigma \varepsilon$ instead of $w_\rho(p)$, where $\varepsilon$ is a $d$-dimensional standard Gaussian random vector and $\sigma$ is a scalar that represents the standard deviation of the noise. The approximate oracle $\tilde{w}_\rho(\cdot)$ is constructed on independent samples $\{(p_i, w_i)\}_{i = 1}^m$ where $p_i$ is drawn from the reference distribution $\calD_p$ and $w_i$ is computed from the noised oracle. That is, we assume that $w_i = w_\rho(p_i) + \sigma \varepsilon_i$ for $\{\varepsilon_i\}_{i=1}^m$ that are i.i.d. realizations of Gaussian random variables. These samples can be achieved by first generating $\{p_i\}_{i=1}^m$ randomly from following any user-chosen distribution $\calD_p$ over the simplex $\Delta_K$ and then calculating $\{w_i\}_{i=1}^m$ from a (possibly randomized) algorithm for approximating $w_\rho(\cdot)$. Note that we do assume that the noise is Gaussian, which may be a reasonable assumption for some algorithmic schemes for approximating $w_\rho(\cdot)$.

The approximate optimal solution mapping is learned using polynomial kernels $k(p,p')= (c+p^Tp')^s$, where $s\in \mathbb{N}$ is the degree parameter. In the remaining part of this section, we let $\mathcal{G}$ denote a function class induced by a polynomial kernel of degree $s$ and let $\|\cdot\|_\mathcal{G}$ denote any norm defined on $\mathcal{G}$. 
Note that $\mathcal{G}$ is a convex, star-shaped function class \cite{wainwright2019high}.
For the function class $\mathcal{G}$ and a given sample $\{p_i\}_{i = 1}^m$, let $\tau_j(\mathcal{G},\{p_i\}_{i = 1}^m, r):= \inf_{u\in \mathcal{G}: \|u\|_{\mathcal{G}} \leq r}(\frac{1}{m}\sum_{i=1}^m(u(\p_i) - w_{\rho,j}(p_i))^2)^{1/2} $ denote the fitting ability for $w_{\rho,j}$ using the kernel function class $\mathcal{G}$ within a user-defined radius $r$. %
Given the function class $\mathcal{G}$ and a given sample $\{(p_i,w_i)\}_{i = 1}^m$, the method of kernel ridge regression estimates the approximate optimal solution mapping $\tilde{w}_\rho(\cdot)$ by solving:
\begin{equation}
\label{eq:kernel_ridge}
\min_{u \in \mathcal{G}: \|u\|_{\mathcal{G}} \leq r} \frac{1}{m}\sum_{i=1}^m(u(\p_i) - w_{i,j})^2
\end{equation}

The corresponding high-probability approximation error bound for learning the noised oracle using polynomial kernel ridge regression. 
\begin{proposition}
\label{prop: krr-approx-error}
Let $\mathcal{G}$ denote a function class induced by a polynomial kernel of degree $s$, suppose that the noise of the output has standard deviation $\sigma$, and that we construct the approximate solution mapping $\tilde{w}_\rho(\cdot)$ using kernel ridge regression \eqref{eq:kernel_ridge} with a user-defined radius $r > 0$. Then, there exist absolute constants $\bar{c}, \bar{c}^\prime$ such that for all $\delta_m \geq \bar{c}_0 \frac{\sigma}{r}\frac{(s-1+K)!}{(s-1)!K!}\frac{1}{m}$, we have 
\begin{equation*}
    \frac{1}{m}\sum_{i=1}^m(\tilde{w}_{\rho, j}(\p_i) - w_{\rho,j}(\p_i))^2 
    ~\leq~ \bar{c}_1(\bar{c}'_1\tau_j(\mathcal{G},\{p_i\}_{i = 1}^m, r) + r^2 \delta_m^2),
    \label{eq: poly-kernel-estimator1}
\end{equation*}
with probability at least $1-\bar{c}_2 \exp(-\bar{c}'_2 \frac{mr^2}{\sigma^2} \delta_m^2 )$ for each coordinate $j = 1, \ldots, K$. Moreover, for any $\theta_m$ that satisfies $\theta_m \geq \bar{c}_3\sqrt{\frac{1}{m} \frac{(s-1+K)!}{(s-1)!K!}}$, if it also holds that $m\theta_m^2 \geq \bar{c}_0 \log(4\log(\frac{1}{\theta_m}))$, then
\begin{equation*}
    \left|\mathbb{E}_{\p}[(\tilde{w}_{\rho, j}(\p) - w_{\rho, j}(\p))^2 ]^{1/2}-\left(\frac{1}{m}\sum_{i=1}^m(\tilde{w}_{\rho,j}(\p_i) - w_{\rho,j}(\p_i))^2 \right)^{1/2}\right| ~\leq~ \bar{c}_3 r^2 \theta_m
    \label{eq: poly-kernel-estimator2}
\end{equation*}
with probability at least $1-\bar{c}_4\exp(-\bar{c}'_4 \frac{m\theta^2_m}{r^2} )$ for each coordinate $j = 1, \ldots, K$.
\end{proposition}
The main body of the proof is a generalization of the result in Example 13.19 of \cite{wainwright2019high}.
We have the corresponding generalization bound in the following corollary.
\begin{corollary}
\label{cor: kernel}
Suppose Assumption \ref{assump:c-lip-phi-constant} holds and that the hypothesis class $\calH$ has bounded multi-variate Rademacher complexity $\mathfrak{R}_n(\mathcal{H})$. Suppose further that we employ kernel ridge regression \eqref{eq:kernel_ridge} using a function class $\mathcal{G}$ induced by a polynomial kernel of degree $s$ under the same conditions as in Proposition \ref{prop: krr-approx-error}.
Then, for any $\delta \in (0,1]$ and $\rho_n > 0$, the following inequalities hold for all $f \in \calH$:
\begin{align*}
         R(w_{\rho_n} \circ  f)& ~\leq~  \hat{R}_n(\tilde{w}_{\rho_n} \circ f)
   + L_c  \sum_{j=1}^d[\bar{c}_3 r^2 (\theta_m + \theta_n)
      + \tau_j(\mathcal{G},\{p_i\}_{i = 1}^m, r) + \bar{c}'_1 r \delta_m]\nonumber \\ & \qquad \qquad  +\frac{\sqrt{2}L_c^2}{\rho_n}\mathfrak{R}_n(\mathcal{H})  +L_c\bar{w}_j \sqrt{2\mathrm{TV}(\calD_{f(x)}, \calD_{p})}+ \bar{c} \sqrt{\frac{\log(\frac{1}{\delta})}{2n}}
    \end{align*}
    with probability at least $1- \delta'$ over i.i.d. data $\{(x_i, \xi_i)\}_{i=1}^n$ drawn from the distribution $\calD$ and over $m$ independent samples $\{(p_i,w_i)\}_{i=1}^m$, where $\delta' = \delta +\bar{c}_2 \exp(-\bar{c}'_2 \frac{mr^2}{\sigma^2} \delta_m^2 )+ \bar{c}_4(\exp(-\bar{c}'_4 \frac{m\theta^2_m}{r^2} )+ \exp(-\bar{c}'_4 \frac{n\theta^2_n}{r^2} ))$ and $\delta_m, \theta_m$, and $\theta_n$ are chosen to satisfy the conditions in Proposition \ref{prop: krr-approx-error}.
\end{corollary}

Finally, in Section \ref{sec: semi-algebraic} we provide an alternative and exact computational approach in the semi-algebraic case when we use a polynomial function to approximate the optimal oracle.
\subsection{Computational Methods for the Semi-Algebraic Case}
\label{sec: semi-algebraic}
In this section, we present an approach based on polynomial optimization in the case where the objective of the downstream optimization problem is semi-algebraic and we use a linear hypothesis class. In this case, when we additionally use a polynomial approximation $\tilde{w}_\rho(\cdot)$, we can reformulate the approximate ICEO formulation \eqref{program: approx-regularized} as a polynomial optimization problem, which can be solved with a hierarchy of semi-definite optimizaiton problems.
Specifically we assume that both $c$ and $\phi$ are semi-algebraic functions and we consider the linear hypothesis class $\mathcal{H} = \{f(x): f(x) = Bx + b, (B,b)\in \mathcal{B}\}$ where $\mathcal{B}(\mathcal{X}) = \{(B,b) \in \mathbb{R}^{K \times p} \times \bbR^K : f(x) \in \Delta_K \ \forall x \in \calX\}$ ensures that the output of the hypothesis returns a feasible probability vector. In this section, we demonstrate an exact solution method for the semi-algebraic case by transforming the (\ref{program: approx-regularized}) to a polynomial optimization program. 
Before we reach the reformulated problem, we first review the definitions of semi-algebraic sets and semi-algebraic functions.
\begin{definition}[\bf Semi-algebraic Set (\cite{lasserre2015introduction})] $K\subset \mathbb{R}^n$ is a basic semi-algebraic set if
\begin{equation*}
    K = \{x\in \mathbb{R}^n : g_j(x) \geq 0, j = 1, \dots, m\}
\end{equation*}
for some polynomial functions $(g_j)_{j=1}^m$, i.e., $(g_j)_{j=1}^m \subset \mathbb{R}[x]$, where $\mathbb{R}[x]$ denotes the ring of real polynomials.
\end{definition}
Similarly, a semi-algebraic set is defined by not only a finite sequence of polynomial inequalities, but also equations, or finite union of these. 
\begin{definition}[\bf Semi-algebraic Function (\cite{lasserre2015introduction}).]
Let $K$ be a semi-algebraic set of $\bbR^n$. A function $f: K\rightarrow \bbR$ is a semi-algebraic function if its graph $\Psi_f := \{(x, f(x)): x\in K\}$ is a semi-algebraic set of $\bbR^n \times \bbR$.
\end{definition}
Functions generated by finitely many of dyadic operations $\{+, \times,\div, \vee,\wedge \}$ and monadic operations $|\cdot|$ and $(\cdot)^{1/q}$, $q\in \mathbb{N}$, on polynomials are semi-algebraic.

Note that with the linear hypothesis class, we need an additional constraint $Bx +b \in \Delta_K$, to guarantee that the output $f(x)$ is a valid probability vector for any $x \in \calX$. We also assume that $\mathcal{X}$ is a polyhedron, i.e. $\mathcal{X}:= \{x\in \mathbb{R}^p : Ax\geq a\}$ for some $A \in \mathbb{R}^{m\times p}$ and $a \in \bbR^m$.
Then the problem \ref{program: approx-regularized} becomes:
\begin{align}
\label{program:poly-approx-orig}
\tag{Poly-Approx-\text{ICEO}-$\rho_n$}
    \min_{B,b }\quad & \frac{1}{n} \sum_{i = 1}^n c(w_i, \xi_i) \\
    \mathrm{s.t.} \quad & w_i =  \tilde{w}_\rho(Bx_i + b) \nonumber \\
    & Bx+b \in \Delta_K, \forall x\in \mathcal{X} = \{x\in \mathbb{R}^p : Ax\geq a\} \nonumber
\end{align}
Note that the approximated oracle $\tilde{w}_\rho(\cdot)$ is constructed by polynomial kernels, so the first group of constraints are polynomial functions. Then we show that the second group of constraints can be reformulated to a group of linear constraints using the following proposition.
 \begin{proposition}Suppose $\mathcal{X}:= \{x\in \mathbb{R}^p : Ax\geq a\}$ for some $A \in \mathbb{R}^{m\times p}$ and $a \in \bbR^m$, then
 the constraint $$ Bx+b \in \Delta_K, \forall x\in \mathcal{X}$$  can be rewritten as the following group of constraints by introducing new decision variables $y_k \in \mathbb{R}^m, k = 1, \dots, K$, $z,u\in \mathbb{R}^m$
 \begin{align*}
 \begin{cases}
     & a^Ty_k \geq -b_{k} \quad \forall k = 1, \dots, K \\
     &A^Ty_k = B_k  \quad \forall k = 1, \dots, K\\
    & a^Tz \geq 1-\mathds{1}^Tb \\
    & A^Tz = B^T\mathds{1}\\
    & a^Tu \geq -1+ \mathds{1}^Tb  \\
    & A^Tu = -B^T\mathds{1}\\
    & y_k,z,u\geq 0 \quad \forall k = 1, \dots, K
    \end{cases}
 \end{align*}
 \label{prop: simplex-constr}
 \end{proposition}
 \proof{Proof of Proposition \ref{prop: simplex-constr}:}
The condition of 
\begin{equation*}
    Bx+b \in \Delta_K
\end{equation*}
can be represented by the following constraints:
\begin{align}
    &B_k^Tx+b_k \geq 0, \qquad \forall k =1, \dots, K \label{eq: non-negative} \\
    &\mathds{1}^T(Bx +b) \geq 1 \label{eq: normal-1} \\
    & \mathds{1}^T(Bx +b) \leq 1 \label{eq: normal-2} 
\end{align}
\eqref{eq: non-negative} represents the non-negativity constraints, while \eqref{eq: normal-1} and \eqref{eq: normal-2} consist of the normalization constraint. We first rewrite the non-negativity constraint for a component $k$, for all $x$ such that $Ax\geq a$ as
\begin{align*}
     0\leq \min \quad & B_k^Tx + b_{k}\\
    \mathrm{ s.t.} \quad & Ax\geq a. 
 \end{align*}
 Then consider the dual problem of the above linear programming and we have:
 \begin{align*}
   -b_{k}\leq   \max \quad & a^Ty_k \\
     \mathrm{s.t.} \quad & A^Ty_k = B_k \\
     & y_k \geq 0.
 \end{align*}
 which reduces to find a feasible solution of the following group of constraints
 \begin{align}
 \begin{cases}
 \label{dual-constraints-1}
      & a^Ty_k \geq -b_{k} \\
     &A^Ty_k = B_k  \\
     & y_k \geq 0. \\
 \end{cases}
 \end{align}
 Then we rewrite \eqref{eq: normal-1} for all $x\in \calX$ as 
 \begin{align*}
     1\leq \min \quad & \mathds{1}^TBx+ \mathds{1}^Tb \\
    \mathrm{ s.t.} \quad & Ax\geq a, 
 \end{align*}
 similarly by considering the dual problem
 \begin{align*}
     1 -\mathds{1}^Tb\leq \max \quad & a^Tz \\
     \mathrm{s.t.} \quad & A^Tz = B^T\mathds{1} \\
     & z \geq 0,
 \end{align*}
 which reduces to the following group of constraints
 \begin{align}
 \begin{cases}
 \label{dual-constraints-2}
  & a^Tz \geq 1-\mathds{1}^Tb \\
    & A^Tz = B^T\mathds{1}\\
    & z\geq 0 .\\
 \end{cases}
 \end{align}
 Finally, we consider the constraint \eqref{eq: normal-2}
 \begin{align*}
     1\geq \max \quad & \mathds{1}^TBx +\mathds{1}^Tb\\
    \mathrm{ s.t.} \quad & -Ax\leq -a 
 \end{align*}
 by strong duality, it is equivalent to 
 \begin{align*}
     1- \mathds{1}^Tb \geq \min \quad & -a^Tu \\
     \mathrm{s.t.} \quad & -A^Tu = B^T\mathds{1} \\
     & u \geq 0
 \end{align*}
 which reduces to 
  \begin{align}
 \begin{cases}
  \label{dual-constraints-3}
  & a^Tu \geq -1 + \mathds{1}^Tb \\
    & A^Tu = -B^T\mathds{1}\\
    & u\geq 0.
  \end{cases}
 \end{align}
 Therefore, the condition $Bx +b \in \Delta_K, \forall x\in \calX$ can be represented by combining \eqref{dual-constraints-1}, \eqref{dual-constraints-2}, and \eqref{dual-constraints-3}.
 \Halmos
\endproof
We have now shown that problem \eqref{program:poly-approx-orig} is a problem optimizing a basic semi-algebraic function on a basic semi-algebraic set which, by Proposition 11.10 of \cite{lasserre2015introduction}, can be reformulated as a polynomial optimization problem, which can be solved by solving a hierarchy of semi-definite problems.

\newpage
\section{Supplementary Materials for Section \ref{sec:experiments}}
\label{appendix: exp}
\paragraph{Demand generation (multi-product newsvendor).} For $K = \{5, 10, 15\}$, we generate scenarios $\{\tilde{z}_1, \dots, \tilde{z}_K\}$ randomly in the following manner. First, we randomly generate an aggregated demand vector in the dimension of $K=15$, where each component follows a Normal distribution with a mean of 70 and a standard deviation of 15. Then, for each $k$, we split the aggregated demand between two products using the weights $u_k$ and $1-u_k$, where $u_k$ follows a uniform distribution $U[0,1]$. We generated the scenarios once and used the demand scenarios listed in Table \ref{table: scenarios} for all numerical experiments in Section \ref{sec: numerical-newsvendor}.
\begin{table}[ht]
\centering
\begin{tabular}{c|cc}
\hline
Scenarios & Product 1 & Product 2 \\
\hline
1 &15.080 & 76.154 \\
2 & 47.238 & 6.483 \\
3 & 4.120 & 56.635 \\
4 & 21.646 & 37.001 \\
5 & 8.686 & 66.519 \\
6 & 8.989 & 84.359 \\
7 & 2.149 & 23.506 \\
8 & 7.857 & 82.529 \\
9 & 1.774 & 77.951 \\
10 & 57.281 & 17.009 \\
11 & 5.628 & 108.726 \\
12 & 40.191 & 46.200 \\
13 & 82.417 & 5.146 \\
14 & 19.817 & 71.711 \\
15 & 24.311 & 42.974 \\
\hline
\end{tabular}
\caption{Randomly generated demand scenarios for multi-product newsvendor problem}
\label{table: scenarios}
\end{table}

\paragraph{Demand generation (quadratic minimum cost network flow).} For $K = \{5, 10, 15\}$, we generate scenarios $\{\tilde{z}_1, \dots, \tilde{z}_K\}$ randomly in the following manner. First, we randomly generate an aggregated vector that its $k$-th element represents $\sum_{d=1}^4 \tilde{z}_{k, d}$. This is the sum of $\xi_d$ over all edges. Each component of this aggregated vector follows a Normal distribution with a mean of 15 and a standard deviation of 5. Then, for each $k$, we split the aggregated demand across four edges using a weight vector randomly sampled from the $K$-th simplex $\Delta_K$ following the Dirichlet distribution. We generated the scenarios once and used the demand scenarios listed in Table \ref{table: nf-scenarios} for all numerical experiments in Section \ref{sec: numerical-network}.
\begin{table}[ht]
\centering
\begin{tabular}{c|c c c c}
\hline
Scenarios & Edge 1 & Edge 2 & Edge 3 & Edge 4\\
\hline
1 & 1.411441 & 16.519779 & 0.548652 & 3.598200 \\
2 & 2.919887 & 2.408818 & 2.081370 & 2.163493 \\
3 & 0.123442 & 8.085430 & 0.276777 & 3.432811 \\
4 & 8.932937 & 0.769055 & 1.413136 & 0.100291 \\
5 & 4.889119 & 2.813464 & 1.819007 & 7.213648 \\
6 & 10.608435 & 0.143297 & 11.535804 & 0.494919 \\
7 &0.004825 & 0.010799 & 0.184079 & 0.018554 \\
8 &0.493836 & 11.108500 & 3.756536 & 6.436532 \\
9 &0.647171 & 8.852180 & 6.165669 & 2.576713 \\
10 &5.981172 & 1.493690 & 7.191768 & 1.763514 \\
11 &8.081012 & 13.955062 & 0.332960 & 7.415544 \\
12 &3.645159 & 6.734041 & 6.581146 & 3.503203 \\
13 &9.534242 & 1.280481 & 6.823592 & 3.215876 \\
14 &4.553855 & 5.306134 & 2.320332 & 9.995780 \\
15 &0.338866 & 5.585264 & 1.949093 & 6.221872 \\
\hline
\end{tabular}
\caption{Randomly generated flow scenarios for quadratic cost network flow problem}
\label{table: nf-scenarios}
\end{table}

\newpage

\section{Comparison with Policy Optimization}
\label{sec: policy-opt}
To illustrate the applicability of our performance guarantees, we briefly compare with performance guarantees of policy optimization methods.
Policy optimization methods involve directly learning the policy function $\pi: \mathcal{X} \rightarrow S$, which is a mapping from the feature space to the set of feasible decisions $S$. This is achieved by estimating the policy function using a hypothesis $\pi \in \mathcal{P}$ based on the collected training data set $\{(x_i, \xi_i)\}_{i=1}^n$. That is, applying the ERM principle with policy optimization would lead to minimizing the empirical risk
\begin{equation*}
\min_{\pi \in \mathcal{P}} \hat{R}_n(\pi).
\end{equation*}
One of the advantages of the policy optimization approach is that generalization bounds and finite sample guarantees are apparent from standard statistical learning theory. For example, in Theorem 13 of \cite{bertsimas2020predictive}, a generalization bound is provided with two terms. The first term has a convergence rate of $\mathcal{O}(n^{-\frac{1}{2}})$, while the second term involves the empirical Rademacher complexity of the policy class $\mathcal{P}$. 

Both policy optimization and ICEO suffer from a bias introduced by using a hypothesis class, $\mathcal{P}$ and $\mathcal{H}$, respectively. Indeed, a suitable hypothesis class should be computationally tractable and have a relatively small Rademacher complexity. For example, a linear class would satisfy both of these. Unfortunately, a hypothesis class that satisfies both of these requirements may introduce a bias due to model mis-specification. For policy optimization methods, there may not exist $\pi^* \in \mathcal{P}$ that is ``close enough'' to some function outputting values in the set $W(f^*(x))$). Similarly, for ICEO, there may not exist $f_{\mathcal{H}}^* \in \mathcal{H}$ that fullly characterizes the true underlying hypothesis $f^*$. However, we want to point out that, in practice we generally expect the bias introduced by ICEO to be less than that of policy optimization. It is because ICEO methods model the conditional distribution while the latter model the conditional distribution. Noted that the optimal solution set $W(\cdot)$ may have a complicated structure and require an intricate modeling class $\mathcal{P}$. ICEO does not need to include $w(\cdot)$ as part of the model and only needs to approximate $f^*(\cdot)$ within $\mathcal{H}$, whereas policy optimization necessitates the use of a flexible enough class $\mathcal{P}$ to appropriately model the mapping $w(f^*(\cdot))$. Thus ICEO can be advantageous, especially in situations where the optimization oracle $w(\cdot)$ is complex.

In light of the above discussion, let us solidify our intuition further by comparing the generalization bound of policy optimization with the result provided in Theorem \ref{thm:finite-sample-bd}.
First, note that according to Proposition \ref{prop: oracle-consistency} and Theorem \ref{thm: consistency}, we may take $\rho_n$ as any sequence approaching zero, for example, $\rho_n= \frac{1}{\log(n)}$ suffices. Then according to the result of Corollary \ref{cor: optimal-out-of-sample-bound}, the left-hand side quantifies the error between the out-of-sample risk induced by the policy $w_{\rho_n}\circ \hat{f}^n_{\rho_n}$ relative to the policy $w_{\rho_n}\circ f^\ast_{\mathcal{H}}$ and converges to zero at rate of $\tilde{\mathcal{O}}(n^{-\frac{1}{2}})$. 
For comparison, policy optimization gives a similar bound of error between the out-of-sample risks induced by a learned policy $\hat{\pi}^n$ and the best policy $\pi^\ast$ within the policy class $\mathcal{P}$ with the same convergence rate of $\mathcal{O}(n^{-\frac{1}{2}})$. In both cases, the constant in the convergence bound will be controlled by the Rademacher complexity of the corresponding class, either $\mathcal{H}$ for ICEO or $\mathcal{P}$ for policy optimization. Both methods may introduce some bias, but as we have already argued, we expect it to be easier to achieve smaller bias with smaller complexity for ICEO. In the following, we use a special case (albeit, an extreme one) to illustrate how the flexibility of ICEO can lead to less bias and smaller Rademacher complexity (i.e., faster convergence of the finite-sample performance bound).
\begin{example}
We consider a special case where $\mathcal{X}$ is the simplex $\Delta_K$, which means that the decision-maker has the ability to know the probability vector $p$ directly through the contextual information. In this special case, ICEO has nothing to learn, therefore the hypothesis class should be selected as the singleton identity map $\mathcal{H} =\{f : f(p) = p \ \forall p \in \Delta_K\}$. Then, the multi-variate Rademacher complexity $\mathfrak{R}_n(\mathcal{H}) = 0$, and so the regularization coefficient $\rho$ can be set to zero, i.e., $\rho_n= 0$ for all $n$. On the other hand, policy optimization, which by its presumption is not allowed to use knowledge of the oracle $w(\cdot)$, requires a larger hypothesis class $\mathcal{P}$ to successfully learn the oracle $w(\cdot)$ and so we generally expect $\mathfrak{R}_n(\mathcal{P}) > \mathfrak{R}_n(\mathcal{H}) = 0$.
\end{example}

In addition, considering the fact that, in Corollary \ref{cor: optimal-out-of-sample-bound}, the left-hand side quantifies the error between the out-of-sample risk induced by the policy $w_{\rho_n}\circ \hat{f}^n_{\rho_n}$ relative to the policy $w_{\rho_n}\circ f^\ast_{\mathcal{H}}$. One may notice that $w_{\rho_n}\circ f^\ast_{\mathcal{H}}$ is not the optimal policy $w\circ f^\ast_{\mathcal{H}}$, which introduces bias by setting the decision regularization parameter $\rho$ as a small positive value. However, as demonstrated in Proposition \ref{prop: oracle-consistency}, $w_{\rho_n}\circ f^\ast_{\mathcal{H}}$ converges to the optimal policy $w\circ f^\ast_{\mathcal{H}}$ as $\rho_n$ converges to zero when sample size grows large.

On the other hand, an additional issue that may also limit the applicability of policy optimization is ensuring the feasibility of the output. Unlike in classical machine learning problems like regression or classification, it is difficult to ensure that the hypothesis class $\mathcal{P}$ even guarantees the feasibility of its outputs, whereby $\pi(x) \in S$ for all $\pi \in \mathcal{P}$ and $x \in \mathcal{X}$. This issue does not vanish as sample sizes grow.

Moreover, learning the conditional distribution $f^*(\cdot)$ is arguably more interpretable than policy optimization, as the ICEO approach follows with the estimate-then-optimize structure, and may be easier to accomplish with ``simpler" classes $\mathcal{F}$ since the constraints on the outputs are much simpler (we only need to ensure that $f(x) \in \Delta_K$).

\end{APPENDICES}

\end{document}